\newcommand{\DECAC}{\textsc{LTDE-Neural-AC}}
\let\footnote=\endnote
\numberwithin{equation}{section}
\newtheorem{Theorem}{Theorem}[section]
\newtheorem{Definition}[Theorem]{Definition}
\newtheorem{Proposition}[Theorem]{Proposition}
\newtheorem{Assumption}[Theorem]{Assumption}
\newtheorem{Lemma}[Theorem]{Lemma}
\newtheorem{Remark}[Theorem]{Remark}
\newenvironment{proof}[1][{\it Proof.}]{\begin{trivlist}
\item[\hskip \labelsep {\bfseries #1}]}{ \hfill
$\Box$\end{trivlist}\vskip -0.2 cm}
\newcommand{\R}{\mathbb{R}}
\newcommand{\E}{\mathcal{E}}
\def\esssup_#1{\underset{#1}{\mathrm{ess\,sup\, }}}
\def\essinf_#1{\underset{#1}{\mathrm{ess\,inf\, }}}
\def\argmax_#1{\underset{#1}{\mathrm{arg\,max\, }}}
\def\argmin_#1{\underset{#1}{\mathrm{arg\,min\, }}}
\def \reff#1{{\rm(\ref{#1})}}
\def \Prod{\displaystyle\prod}
\def\b1{\bf 1}
\def \A{\mathbb{A}}
\def \N{\mathbb{N}}
\def \R{\mathbb{R}}
\def \E{\mathbb{E}}
\def \P{\mathbb{P}}
\def \A{{\cal A}}
\def \Ac{{\cal A}}
\def \Bc{{\cal B}}
\def \Ec{{\cal E}}
\def \Fc{{\cal F}}
\def \Hc{{\cal H}}
\def \Pc{{\cal P}}
\def \Oc{{\cal O}}
\def \Nc{{\cal N}}
\def \Sc{{\cal S}}
\def \Tc{{\cal T}}
\def \Yc{{\cal Y}}
\def \Xc{{\cal X}}
\def \reff#1{{\rm(\ref{#1})}}
\def \beqs{\begin{eqnarray*}}
\def \enqs{\end{eqnarray*}}
\def \beq{\begin{eqnarray}}
\def \enq{\end{eqnarray}}
\begin{document}

\title{Mean-Field Multi-Agent Reinforcement Learning: A Decentralized Network Approach}

\author{Haotian Gu
\thanks{Department of Mathematics, University of California, Berkeley, USA. \textbf{Email:} haotian$\_$gu@berkeley.edu }
\and
Xin Guo
\thanks{Department of Industrial Engineering \& Operations Research, University of California, Berkeley, USA. \textbf{Email:} xinguo@berkeley.edu}
\and
Xiaoli Wei
\thanks{Tsinghua-Berkeley Shenzhen Institute, Shenzhen, China. \textbf{Email:} xiaoli\_wei@sz.tsinghua.edu}
\and
Renyuan Xu
\thanks{Industrial  \& Systems Engineering, University of Southern California, Los Angeles, USA. \textbf{Email:} renyuanx@usc.edu}
}

\maketitle

\begin{abstract}
{
One of the challenges for multi-agent reinforcement learning (MARL) is designing efficient learning algorithms for a large system in which each agent has only limited or partial information of the entire system. 
While exciting progress has been made to analyze decentralized MARL with the {\it network of agents} for social networks and team video games, little is known theoretically for decentralized MARL with the {\it network of states} for modeling self-driving vehicles, ride-sharing, and data and traffic routing.

This paper proposes a framework of {\it localized training and decentralized execution} to study MARL with {\it network of states}.
Localized training means that agents only need to collect local information in their neighboring states during the training phase; decentralized execution implies that agents can execute afterwards the learned decentralized policies, which depend only on agents' current states. 

The theoretical analysis consists of three key components: the first is the 
reformulation of the MARL system as a networked Markov decision process with teams of agents, enabling updating the associated team Q-function in  a localized fashion;
the second is the Bellman equation for the value function and the appropriate Q-function on the probability measure space; and the third is the exponential decay property of the team Q-function, facilitating its approximation with efficient sample efficiency and controllable error.

The theoretical analysis paves the way for a new  algorithm {\DECAC}, where the actor-critic approach with over-parameterized neural networks is proposed.
The convergence and sample complexity is established and shown to be scalable with respect to the sizes of both agents and states.
To the best of our knowledge, this is the first neural network based MARL algorithm with network structure and provably convergence guarantee.
}
\end{abstract}

{\bf Keywords:}    Multi-Agent Reinforcement Learning, Mean-field Cooperative Games, Neural Network Approximation


\section{Introduction}
Multi-agent reinforcement learning (MARL) has achieved substantial successes in a broad range of cooperative games and their applications, including coordination of robot swarms (\citet{huttenrauch2017guided}), self-driving vehicles (\citet{SSS2016,cabannes2021solving}), real-time bidding games (\citet{JSLGWZ2018}), ride-sharing (\citet{LQJYWWWY2019}), power management (\citet{zhou2021robust}) and traffic routing (\citet{EAA2013}).
One of the challenges for the  development of MARL is designing  efficient learning algorithms for a large system, in which each individual agent has only limited or partial information of the entire system. In such a system, it is necessary to design algorithms to  learn policies of the decentralized type, i.e., policies that depend only on the {\it local} information of each agent.

In a simulated or laboratory setting, decentralized policies may be learned in a
centralized fashion. It is to train a central controller to dictate the actions of all agents. Such paradigm of {\it centralized training with decentralized execution} has achieved significant  empirical successes, especially with the computational power of deep neural networks (\citet{lowe2017multi,foerster2018counterfactual,chen2018communication,rashid2018qmix,yang2020multi,vadori2020calibration}). Such a training approach, however, suffers from the curse of dimensionality as the computational complexity grows exponentially with the number of agents (\citet{zhang2019multi});
it also requires extensive and costly communications between the central controller and all agents (\citet{rabbat2004distributed}). Moreover, policies derived from the centralized training stage may not  be robust  in the execution phase (\citet{zhang2020distributed}).
Most importantly, this approach has not been supported or analyzed theoretically. 

An alternative and  promising  paradigm is to take into consideration  the network structure of the system to train
decentralized policies. Compared with the centralized training approach, exploiting network structures  makes the training procedure more efficient as it allows the algorithm to be updated with parallel computing and reduces communication cost.

There are two distinct types of network structures. The first is the {\it network of agents}, often found in social networks {such as} Facebook {and Twitter}, as well as team video games {including StarCraft II}.
This network describes {\it interactions and relations among  heterogeneous agents}. For MARL systems with such network of agents, \citet{zhang2018fully}  establishes the asymptotic convergence of decentralized-actor-critic  algorithms which are scalable in agent actions. Similar ideas are extended to the  continuous space where deterministic policy gradient method (DPG) is used (\citet{zhang2018networked2}), with finite-sample analysis for such  framework established in the batch setting (\citet{zhang2021finite}). \citet{qu2020scalable} studies a network of agents where state and action interact in a local manner; by exploiting the network structure and the exponential decay property of the Q-function,  it proposes an actor-critic framework scalable in both actions and states. Similar framework is considered for the linear quadratic case with local policy gradients conducted with zero order optimization and parallel updating (\citet{li2019distributed}).

The second type of network, {\it the network of states}, has been frequently used for modeling self-driving vehicles, ride-sharing, and data and traffic routing. It focuses on the {\it state} of agents. Compared with the  network of agents which is {\it static} from agent's perspective (\citet{sunehag2017value}), the network of states is {\it stochastic}: neighboring agents of any given agent may change dynamically.
This type of network has been empirically studied  in various applications, including packet routing (\citet{you2020toward}), traffic routing (\citet{calderone2017markov}, \citet{gueriau2018samod}), resource allocations (\citet{cao2012overview}) and social economic systems (\citet{zheng2020ai}). However, there is no existing theoretical analysis for this type of decentralized MARL. Moreover, the dynamic nature of  agents' relationship makes it difficult to adopt existing methodology from the static network of agents. 
The goal of this paper is, therefore, to fill this void.

\paragraph{Our work.} This paper proposes and studies  multi-agent systems with network {structure} of {agent} states.
In this network, homogeneous agents can move from one state to any connecting state, and observe (realistically) only partial information of the entire system in an aggregated fashion.  To study this system, we propose a framework of {\it  localized training and decentralized execution} (LTDE). Localized training means that agents only need to collect local information in their neighboring states during the training phase;
decentralized execution implies that, agents can execute afterwards the learned decentralized policies which only require knowledge of  agents' current states.

The theoretical analysis consists of three key components. The first is to regroup these homogeneous agents according to their states and reformulate the MARL system as a networked Markov decision process with teams of agents. This reformulation leads to the decomposition of the Q-function and the value function according to the states, enabling the update of the consequent team Q-function in  a localized fashion. The second is to establish the Bellman equation for the value function and the appropriate Q-function on the probability measure space, by utilizing the homogeneity of agents. These functions are invariant with respect to the number of agents. The third is to explore the exponential decay property of the team Q-function, enabling its approximation with a truncated version of a much smaller dimension and yet with a controllable approximation error.

To design an efficient and scalable reinforcement learning algorithm for such framework, the actor-critic approach with over-parameterized neural networks is adopted. The neural networks, representing decentralized policies and localized Q-functions, are much smaller compared with the global one. The convergence and the sample complexity of the proposed algorithm is established and shown to be scalable with respect to the size of both agents and states.
To the best of our knowledge, this is the first neural network based MARL algorithm with network structure and provably convergence guarantee.

\paragraph{Our contribution.} Our work contributes to two lines of research.

The first one is for mean-field control with reinforcement learning, for which existing works  require that each agent have the full information of the population distribution (\citet{gu2020mean,carmona2019linear,carmona2019model,motte2019mean}) and yet in most applications agents only have access to partial or limited information ({\citet{yang2020q}}). We build a theoretical framework that incorporates network structures in the MARL framework, and provide computationally efficient algorithms where each agent only needs  local information of neighborhood states {to learn and to execute the policy}.  

Secondly, our work builds the  theoretical foundation for the practically popular scheme of centralized-training-decentralized-execution (CTDE) (\citet{lowe2017multi,rashid2018qmix,vadori2020calibration,yang2020multi}).
The CTDE framework is first proposed in \citet{lowe2017multi} to learn optimal policies in cooperative games with two steps: the first step is to train a global policy for the central controller, and the second step is to decompose the central policy (i.e., a large Q-table) into individual policies so that individual agent can apply the decomposed/decentralized policy after training. Despite the popularity of CTDE, however, there has been no theoretical study as to when the Q-table can be decomposed and when the truncation error can be controlled, except for a heuristic argument by \citet{lowe2017multi} for large $N$ with local observations.
Our paper analyzes for the first time with theoretical guarantee that 
applying  our algorithm to this CTDE paradigm yields a near-optimal sample complexity, when there is a network structure among agent states. Moreover, our algorithm, which is easier to scale-up, improves the centralized training step with a localized training. To differentiate our approach from the CTDE scheme, we call it localized-training-decentralized-execution (LTDE).

\paragraph{Notation.} For a set $\Xc$, denote $\mathbb{R}^{\Xc}=\{f:\Xc\to\mathbb{R}\}$ as the set of all real-valued functions on $\Xc$ . For each $f\in\mathbb{R}^{\Xc}$, define $\|f\|_\infty = \sup_{x \in \Xc} |f(x)|$ as the sup norm of $f$. In addition, when $\Xc$ is finite, denote $|\Xc|$ as the size of $\Xc$, and $\Pc(\Xc)$ as the set of all probability measures on $\Xc$: $\Pc(\Xc)=\{p:p(x)\geq 0, \sum_{x\in\Xc}p(x)=1\}$, which is equivalent to the probability simplex in $\mathbb{R}^{|\Xc|}$.
$[N]:=\{1,2,\cdots,N\}$. For any $\mu \in \Pc(\Xc)$ and a subset $\Yc\subset\Xc$, let $\mu(\Yc)$ denote the restriction of the vector $\mu$ on $\Yc$, and let $\Pc(\Yc)$ denote the set $\{\mu(\Yc):\mu\in\Pc(\Xc)\}$. For $x \in \R^d$, $d \in \N$, denote $\|x\|_2$ as the $L^2$-norm of $x$ and $\|x\|_\infty$ as the $L^\infty$-norm of $x$.

\section{Mean-field MARL with Local Dependency}\label{sec:set-up}
The focus of this paper is to study a cooperative multi-agent system with a network of agent states, 
which consists of nodes representing states of the agents and edges by which states are connected. In this system, every agent is only allowed to move from her present state to its connecting states.
Moreover, she is assumed to only observe (realistically)  {\it partial information} of the system on an aggregated level.  {Mean-field theory provides efficient approximations when agents only observe aggregated information, and has been applied in stochastic systems with large homogeneous agents such as financial markets (\citet{carmona2013mean,lacker2019mean,hu2021n,casgrain2020mean}), energy markets (\citet{germain2021level,aid2020entry}), and auction systems (\citet{iyer2014mean,guo2019learning}).}

\subsection{Review of  MARL}\label{sec:preliminary}
Let us first recall the cooperative MARL in an infinite time horizon, where there are $N$ agents whose policies are coordinated by a central controller. We assume that both the state space $\mathcal{S}$ and the action space $\mathcal{A}$ are finite.

At each step $t=0,1, \cdots,$ the state of agent $i$ $(= 1, 2, \cdots , N)$ is $s_t^{i} \in \mathcal{S}$ and she takes an action $a_t^{i} \in \mathcal{A}$.  Given the current state profile $\pmb{s}_t = (s_t^{1},\cdots,s_t^{N})\in \mathcal{S}^N$ and the current action profile $\pmb{a}_t = (a_t^{1},\cdots,a_t^{N})\in \mathcal{A}^N$ of $N$ agents, agent $i$ will receive a reward $ r^i({\pmb s}_t, {\pmb a}_t)$ and her state will change to $s_{t + 1}^{i}$ according to a transition probability function $P^i({\pmb s}_t, {\pmb a}_t)$. A Markovian game further restricts the admissible policy for agent $i$ to be of the form $a_t^{i} \sim \pi_t^i(\pmb{s}_t)$. That is, $\pi_t^i:  \mathcal{S}^N \rightarrow \mathcal{P}(\mathcal{A})$ maps each state profile $\pmb{s}\in \mathcal{S}^N$ to a randomized action, with $\mathcal{P}(\mathcal{A} )$ the space of all probability measures on space $\mathcal{A}$.

In this cooperative MARL framework, the central controller is to maximize the expected discounted accumulated reward averaged over all agents. That is  to find
\begin{eqnarray} \label{eq:preliminar_V}
V(\pmb{s})=\sup_{{\pmb \pi}}\frac{1}{N} \sum_{i=1}^N v^i({\pmb s}, {\pmb \pi}),
\end{eqnarray}
\vspace{-0.3cm}
where
\vspace{-0.2cm}
\begin{eqnarray}
v^i({\pmb s}, {\pmb \pi}) = \mathbb{E} \biggl[\sum_{t=0}^\infty \gamma^t  r^i({\pmb s}_t, {\pmb a}_t) \big| {\pmb s}_0 = {\pmb s}\biggl]
\end{eqnarray}
is the accumulated reward for agent $i$, given the initial state profile $\pmb{s}_0 = \pmb{s}$ and policy ${\pmb \pi} = \{{\pmb \pi}_t\}_{t = 0}^\infty$ with ${\pmb \pi}_t=(\pi_t^1, \ldots, \pi_t^N)$. Here
$\gamma$ $\in$ $(0, 1)$ is a discount factor, {$a_t^{i} \sim \pi_t^i(\pmb{s}_t)$, and $s_{t+1}^{i}\sim P^i({\pmb s_t}, {\pmb a_t})$}.

The corresponding Bellman equation for the value function \eqref{eq:preliminar_V} is
\begin{eqnarray} \label{equ:classicalV_fh}
V(\pmb{s}) = \sup_{\pmb{a} \in \Ac^N} \left\lbrace \E\left[\frac{1}{N}\sum_{i=1}^Nr^i(\pmb{s}, \pmb{a})\right] + \gamma \E_{\pmb{s}^{\prime} \sim \pmb{P}(\pmb{s}, \pmb{a})}\left[V(\pmb{s}^{\prime})\right] \right\rbrace,
\end{eqnarray}
with the population transition kernel $\pmb{P} = (P^1,\cdots,P^N)$.
The value function can be written as
$$V(\pmb{s}) = \sup_{\pmb{a} \in \Ac^N} Q(\pmb{s}, \pmb{a}),$$
in which the Q-function is defined as
\begin{eqnarray} \label{defsingleQ}
Q(\pmb{s}, \pmb{a}) = \E\left[\frac{1}{N}\sum_{i=1}^Nr^i(\pmb{s}, \pmb{a})\right] + \gamma \E_{\pmb{s}' \sim \pmb{P}(\pmb{s}, \pmb{a})}[V(\pmb{s}^{\prime})],
\end{eqnarray}
consisting of the expected reward from taking action $\pmb{a}$ at state $\pmb{s}$ and then following the optimal policy thereafter.
The Bellman equation for the Q-function, defined from $\Sc^N \times \A^N$ to $\R$, is given by
\begin{eqnarray} \label{equ: singleQ}
Q(\pmb{s}, \pmb{a}) = \E\left[\frac{1}{N}\sum_{i=1}^Nr^i(\pmb{s}, \pmb{a})\right] + \gamma \E_{\pmb{s}' \sim \pmb{P}(\pmb{s}, \pmb{a})}\left[\sup_{\pmb{a}' \in \Ac^N} Q(\pmb{s}', \pmb{a}')\right].
\end{eqnarray}
One can thus retrieve the
optimal (stationary) control $\pi^*(\pmb{s}, \pmb{a})$ (if it exists) from  $Q(\pmb{s}, \pmb{a})$, with $\pi^*(\pmb{s}) \in \arg\max_{\pmb{a} \in \Ac^N} Q(\pmb{s}, \pmb{a})$.


\subsection{Mean-field MARL with Local Dependency}\label{subsec:local_setting}
In this system, there are $N$ agents who share a finite state space $\Sc$ and take actions from a finite action space $\mathcal{A}$. Moreover, there is a network on the state space $\Sc$ associated with an underlying undirected graph $(\Sc, \Ec)$, where $\Ec\subset\Sc\times\Sc$ is the set of edges. The distance between two nodes is defined as the number of edges in a shortest path. For a given $s \in \Sc$,  $\Nc_s^1$ denotes the nearest neighbor of $s$, which consists of all nodes connected to $s$ by an edge and includes $s$ itself; and $\Nc^k_s$ denotes the $k$-hop neighborhood of $s$, which consists of all nodes whose distance to $s $ is less than or equal to $k$, including $s$ itself. For simplicity, we use $\Nc_s: = \Nc^1_s$.
From agent $i$'s perspective, agents in her neighborhood $\Nc_{s^i_t}$ change stochastically over time.

To facilitate mean-field approximation to this system, assume throughout the paper that the agents are homogeneous and indistinguishable.
In particular, at each step $t=0,1, \cdots,$ if agent $i$ at state $s^i_t \in \Sc$  takes an action $a^i_t \in \mathcal{A}$, then she will receive a stochastic reward which is uniformly upper bounded by $r_{\max}$ such that
\vspace{-0.2cm}
\begin{eqnarray}\label{eq:N_agent_reward_heter}
{r^i(\pmb{s}_t,\pmb{a}_t) :=} r\Big(s^i_t, \,\,\mu_t(\Nc_{s^i_t}), \,\,a^i_t\Big)\le r_{\max}, \quad i \in [N];
\end{eqnarray}
and her state will change to a neighboring state $s^i_{t + 1}$ $\in \Nc_{s_t^i}$ according to a transition probability such that
\begin{eqnarray}\label{eq:N_agent_transition_heter}
s^i_{t + 1}\sim {P^i(\pmb{s}_t,\pmb{a}_t) :=}{P}\left.\Big(\cdot\,\right\vert\,s^i_t, \,\,\mu_t(\Nc_{s^i_t}), \,\,a^i_t\Big), \quad i \in [N],
\end{eqnarray}
where $\mu_t(\cdot) = \frac{\sum_{i=1}^N{\bf 1}(s_t^i=\cdot)}{N}$ $\in \mathcal{P}^N(\Sc)$ $:=\left\{\mu\in\Pc(\Sc):\mu(s)\in\left\{0,\frac{1}{N},\frac{2}{N},\cdots, \frac{N-1}{N}, 1\right\}\text{ for all }s\in\Sc\right\}$ is the  empirical state distribution of $N$ agents at time $t$, with $N \cdot \mu_t(s)$ the number of agents in state $s$ at time $t$, and $\mu_t(\Nc_{s^i_t})$  the restriction of $\mu_t$ on the 1-hop neighbor of $s_t^i$. 

\eqref{eq:N_agent_reward_heter}-\eqref{eq:N_agent_transition_heter} indicate that the reward and the transition probability of agent $i$ at time $t$ depend on both her individual information $(a_t^i,s_t^i)$ and the mean-field of her 1-hop neighborhood $\mu_t(\Nc_{s^i_t})$, in an aggregated yet localized format: {\it aggregated} or {\it mean-field} meaning that agent $i$ depends on other agents only through the empirical state distribution; {\it localized} meaning that agent $i$ depends on the  mean-field information of her 1-hop neighborhood.
Intuitive examples of such  a setting include traffic-routing, package delivery, data routing, resource allocations, distributed control of autonomous vehicles and social economic systems.

\paragraph{Policies with partial information.} To incorporate the element of {\it partial or limited information} into this mean-field MARL system, consider the following {\it individual-decentralized {policies}}
\begin{eqnarray}\label{eq:N_agent_policy}
a^i_t \sim  {\pi^i(\pmb{s}_t)}: = \pi\left(s^i_t,\mu_t(s^i_t)\right)\in \mathcal{P}(\Ac), \quad i \in [N],
\end{eqnarray}
and denote $\mathfrak{u}$ as the admissible policy set of all such policies.

Note that for a given mean-field information $\mu_t$, $\pi(\cdot, \mu_t(\cdot)):\Sc \rightarrow \mathcal{P}(\mathcal{A})$  maps the agent state to a randomized action. That is, the policy of each agent is executed in a decentralized manner and assumes that each agent only has access to the population information in her own state.
This is more realistic than centralized {policies} which assume full access to the state information of all agents.

\paragraph{Value function and Q-function.} The goal for this mean-field  MARL is to maximize the expected discounted accumulated reward averaged over all agents, i.e.,
\begin{equation}\label{eq:N_agent_reward}
V(\mu):=\sup_{\pi{
\in \mathfrak{u}}}V^{\pi}(\mu) = \sup_{\pi{
\in \mathfrak{u}}}\frac{1}{N}\sum_{i=1}^N\mathbb{E} \left.\left[\sum_{t=0}^\infty \gamma^t {r\big(s^i_t, \,\,\mu_t(\Nc_{s^i_t}), \,\,a^i_t\big)}\,\,\right|\,\,\mu_0=\mu\right],\tag{MF-MARL}
\end{equation}
subject to \eqref{eq:N_agent_reward_heter}-\eqref{eq:N_agent_policy}  with a discount factor $\gamma$ $\in$ $(0, 1)$. 

The mean-field assumption leads to the following definition of the corresponding Q-function for (MF-MARL) on the measure space:
\vspace{-0.4cm}
\begin{eqnarray}
Q({\mu},{h}):&=&\underbrace{\mathbb{E}\left.\Big[
\sum_{i=1}^N \frac{1}{N}{r}(s_0^i,\mu(\Nc_{s^i_0}),a^i_0)
\right|\pmb{s}_0,\pmb{a}_0\Big]}_{\text{Expected reward of taking $\pmb{a}_0=(a_0^1,\cdots,a_0^N)$}}\nonumber\\
& & +\,\, \underbrace{\mathbb{E}_{s^i_{1}\sim {P}\left.\big(\cdot\,\right\vert\,s^i_0, \,\,\mu(\Nc_{s^i_0}), \,\,a^i_0\big)} \left.\left[\sum_{t = 1}^\infty \gamma^t \sum_{i=1}^N \frac{1}{N} r(s^i_t,\mu_t(\Nc_{s^i_t}), a^i_t)\right|a_t^i \sim \pi_t^\star\right]}_{\text{Expected reward of playing optimally thereafter
$a_t^i\sim \pi_t^\star$}}, \label{eq:Q_mf}
\end{eqnarray}
where $\mu(\cdot) =  \frac{\sum_{i=1}^N{\bf 1}(s_0^i=\cdot)}{N}$ is the initial empirical state distribution
and $h(s)(a) = \frac{\sum_{i=1}^N {\bf 1}(s_0^i={s}, a_0^i={a})}{\sum_{i=1}^N {\bf 1}(s_0^i={s})}$ is a ``decentralized'' policy representing the proportion of agents in state $s$ that takes action $a$.
Specifically, given $\mu\in\Pc^N(\Sc)$,  $s\in\Sc$, and the $N\cdot\mu(s)$ agents in state $s$,
\begin{eqnarray*}\label{eq:h_space}
h(s)\in\Pc^{N\cdot\mu(s)}(\Ac): = \left\{\varsigma \in \Pc(\A): \varsigma(a) \in \{ 0,\frac{1}{N\cdot\mu(s)},\cdots, \frac{N\cdot\mu(s)-1}{N\cdot\mu(s)}\} \text{ for all } a\in\Ac \right\} \subset \Pc(\A),
\end{eqnarray*}
where $\varsigma$ in $\Pc^{N\cdot\mu(s)}(\Ac)$ is an empirical action distribution of $N \cdot \mu(s)$ agents in state $s$, and $\varsigma(a)$ is the proportion of agents taking action $a\in\Ac$ among all $N \cdot \mu(s)$ agents in state $s$. 
Furthermore, for a given $s\in\Sc$, denote  $\Pc^{N\cdot\mu(s)}(\Ac)$ the set of all admissible ``decentralized'' policies $h(s)(\cdot)$; and for a
given $\mu \in \Pc^N(\Sc)$, denote the product of $\Pc^{N \cdot \mu(s)}(\Ac)$ over all states by $\Hc^N(\mu): = \{h: h(s) \in \Pc^{N \cdot \mu(s)}(\A)\; \forall \; s \in \Sc\}$. 
Here $\Hc^N(\mu)$ depends on $\mu$ and is a subset of $\Hc =\{h: \Sc \to \Pc(\A)\}$.

Note that $Q({\mu},h)$ defined in \eqref{eq:Q_mf} is invariant with respect to the order of the elements in $\pmb{s}_0$ and $\pmb{a}_0$. More critically, the input dimension of the Q-function defined in \eqref{eq:Q_mf} is {\it independent from the number of agents} in the system, hence is easier to scale up in a large population regime. This differs from the 
the input dimension of the Q-function in  \eqref{defsingleQ}, which  grows exponentially with respect to the number of agents, the main culprit of the curse of dimensionality for MARL algorithms.

\section{Analysis of MF-MARL with Local Dependency}
The theoretical study of this mean-field MARL with local dependency (Section \ref{subsec:local_setting}) consists of three key components, which are crucial for subsequent algorithm design and convergence analysis: the first  is the reformulation of the MARL system as a networked Markov decision process with teams of agents. This reformulation leads to the decomposition of the Q-function and the value function according to states, facilitating  updating the consequent team Q-function in a localized fashion (Section \ref{subsec:regroup}); the second is the Bellman equation for the value function and the  Q-function on the probability measure space (Section \ref{subsec:bellman});
the third is the exponential decay property of the team Q-function, enabling its approximation with a truncated version of a much smaller dimension and yet with a controllable  approximation error (Section \ref{subsec:exponential_decay}).


\subsection{Markov Decision Process (MDP) on Network of States}\label{subsec:regroup}
This section shows that the mean-field MARL (\ref{eq:N_agent_reward_heter})-(\ref{eq:N_agent_policy}) 
can be reformulated in an MDP framework by exploiting the  network structure of states.
This reformulation
leads to the decomposition of the Q-function, facilitating more computationally efficient updates.

The key idea is to utilize the homogeneity of the agents in the problem set-up  and  to regroup these $N$ agents according to their states.
This regrouping translates (MF-MARL) with $N$ agents into a networked MDP with $|\Sc|$ agents teams,   indexed  by their states.

\begin{figure}[!ht]
    \centering
    \includegraphics[width=0.35\textwidth]{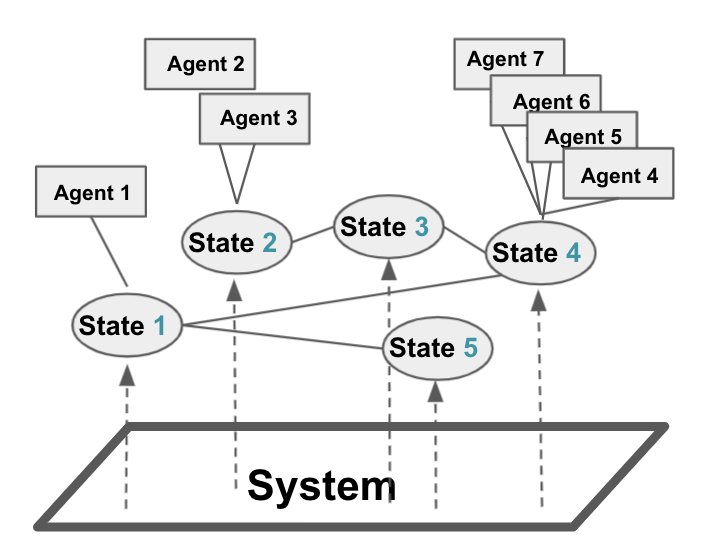}
     \includegraphics[width=0.35\textwidth]{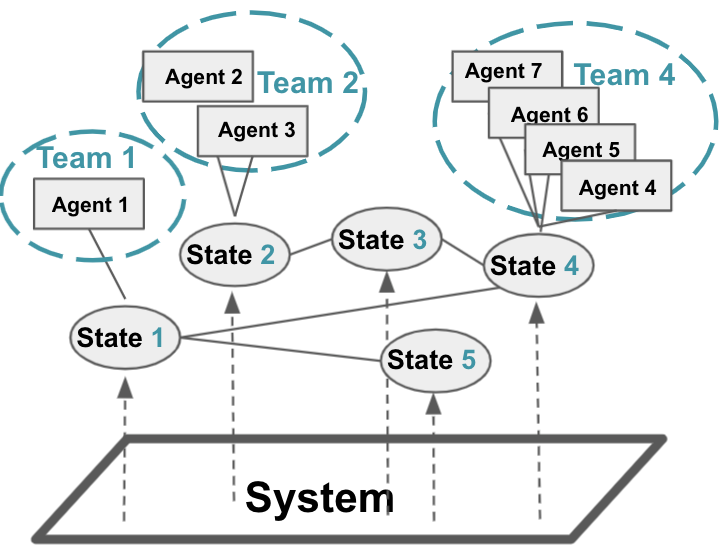}
    \caption{{\bf Left:} MF-MARL problem \eqref{eq:N_agent_reward_heter}-\eqref{eq:N_agent_policy}. {\bf Right:} Reformulation of team game \eqref{eq:lifted_policy_s}-\eqref{eqn:r_loc}.}
    \label{fig:state_net}
\end{figure}

To see how the policy, the reward function, and the dynamics in this networked Markov decision process are induced by the regrouping approach,  recall that there are  $N \cdot \mu(s)$ agents in state $s$, each agent $i$ in state $s$ will independently choose action $a_i\sim\pi(s, \mu(s))$ according to the individual-decentralized policy $\pi(s, \mu(s))\in\Pc(\Ac)$ in \eqref{eq:N_agent_policy}. 
Therefore the empirical action distribution of $\{a_1,\cdots,a_{N \cdot \mu(s)}\}$ is a random variable taking values from $\Pc^{N\cdot\mu(s)}(\Ac)$,  the set of empirical action distributions with $N \cdot \mu(s)$ agents.
Moreover, for any $h(s)\in\Pc^{N\cdot\mu(s)}(\Ac)$, we have
\begin{eqnarray}\label{eqn:equivalent_pi_Pi}
& &\P\left(h(s) \text{ is the empirical action distribution of } \{a_1,\cdots,a_{N \cdot \mu(s)}\}, a_i\overset{i.i.d}{\sim}\pi(s, \mu(s))\right) \nonumber\\
&=& \P\left(\text{for each } a\in\Ac, a \text{ appears } N \cdot \mu(s)h(s)(a) \text{ times in } \{a_1,\cdots,a_{N \cdot \mu(s)}\}, a_i\overset{i.i.d}{\sim}\pi(s, \mu(s))\right) \nonumber\\
&=& \frac{(N \cdot \mu(s))!}{\Prod_{a\in \Ac}(N \cdot \mu(s)h(s)(a))!} \Prod_{a\in \mathcal{A}}\Big(\pi(s,\mu(s))(a)\Big)^{N \cdot \mu(s)h(s)(a)}.
\end{eqnarray}
Here $h(s)(a)$ denotes the proportion of agents taking action $a$ among all agents in state $s$, with  last equality derived from the  multinomial distribution with parameters $N\cdot\mu(s)$ and $\pi(s,\mu(s))$.

Now, clearly  each individual-decentralized policy $\pi(s, \mu(s))\in\Pc(\Ac)$ in \eqref{eq:N_agent_policy}  induces a \textit{team-decentralized policy}
of the following form:
\begin{eqnarray}\label{eq:lifted_policy_s}
\Pi_s(h(s)\mid\mu(s)) = \frac{(N \cdot \mu(s))!}{\Prod_{a\in \Ac}(N \cdot \mu(s)h(s)(a))!} \Prod_{a\in \mathcal{A}}\Big(\pi(s,\mu(s))(a)\Big)^{N \cdot \mu(s)h(s)(a)},
\end{eqnarray}
where $h(s)\in\Pc^{N\cdot\mu(s)}(\Ac)$. 
Conversely, given a team-decentralized policy $\Pi_s(\,\cdot\mid\mu(s))$,  one can recover the individual-decentralized policy $\pi(s, \mu(s))$ by choosing appropriate $h(s)\in\Pc^{N\cdot\mu(s)}(\Ac)$ and querying the value of $\Pi_s(h(s)\mid\mu(s))$:  let $h_i(s) = \delta_{a_i}$ be the Dirac measure with $a_i \in \mathcal{A}$, which is an action distribution such that all agents in state $s$ take action $a_i$. By \eqref{eq:lifted_policy_s}, $\Pi_s(h_i(s)\mid\mu(s)) =\left(\pi(s, \mu(s))(a_i)\right)^{N \cdot \mu(s)}$, implying $\pi(s, \mu(s))(a_i) = \left(\Pi(h_i(s)|\mu(s)\right)^{\frac{1}{N \cdot \mu(s)}}$. 


Next, given $\mu\in\Pc^N(\Sc)$ and $h\in\Hc^N(\mu)=\{h:h(s)\in\Pc^{N\cdot\mu(s)}(\Ac), \forall s\in \Sc\}$, the set of empirical action distributions on every state, if we define
\begin{equation}\label{eqn:big_pi}
    \Pi(h\mid\mu):=\prod_{s\in\Sc}\Pi_s(h(s)\mid\mu(s)),
\end{equation}
then $\mathfrak{u}$,  the admissible policy set of individual-decentralized policies in the form of \eqref{eq:N_agent_policy}, is now replaced by $\mathfrak{U}$, the set of all team-decentralized policies $\Pi$ induced from $\pi\in\mathfrak{u}$ through \eqref{eq:lifted_policy_s} and \eqref{eqn:big_pi}. In addition, denote the set of all state-action distribution pairs as
\begin{eqnarray}
\Xi:=\cup_{\mu\in\Pc^N(\Sc)}\{\zeta=(\mu,h):h\in\Hc^N(\mu)\},
\end{eqnarray}

Moreover, from  the team perspective, the transition probability in \eqref{eq:N_agent_transition_heter} can  be viewed as a Markov process of $\mu_t$ and $h_t \in \mathcal{H}^N(\mu_t)$ with
an  induced transition probability $\mathbf{P}^N$ from \eqref{eq:N_agent_transition_heter} such that
\begin{equation}\label{eqn:P_N}
    \mu_{t+1}\sim \mathbf{P}^N(\cdot\,|\,\mu_t,h_t).
\end{equation} 
It is easy to verify that  for a given  state $s\in\Sc$, $\mu_{t+1}(s)$ only depends on $\mu_t(\Nc^2_s)$, the empirical  distribution in the 2-hop neighborhood of $s$, and $h_t(\Nc_{s})$.

Finally, given $\mu(\mathcal{N}_s) \in \Pc^N (\mathcal{N}_s)$, an empirical distribution restricted to the 1-hop neighborhood of $s$,  one can  define 
a {\it  localized team reward function for team $s$} from $\Pc^{N\cdot\mu(s)}(\Ac)$ to $\R$ as
\begin{equation}\label{eqn:r_loc}
    r_{s}(\mu(\mathcal{N}_s),h(s)) = \sum_{a\in\Ac}{r}(s,\mu(\Nc_{s}),a)h(s)(a),  
\end{equation}
 which depends on the state $s$ and its 1-hop neighborhood; and define the maximal expected discounted accumulative localized team rewards over all {\it teams} as
\begin{eqnarray}\label{eq:N_agent_reward_reformu}
\widetilde V(\mu):=\sup_{\Pi{
\in \mathfrak{U}}}\widetilde V^{\Pi}(\mu) =  \sup_{\Pi{
\in \mathfrak{U}}}  \mathbb{E}\biggl[ \sum_{t = 0}^\infty \sum_{s \in \Sc}\gamma^t\, r_s(\mu_t(\Nc_{s}), h_t(s)) \left| \mu_0 = \mu \biggl]\right..
\end{eqnarray}
 
With all these key elements, one can establish the equivalence between maximizing the reward averaged over all {\it agents} in \eqref{eq:N_agent_reward} and maximizing the localized team reward summed over all {\it teams} in \eqref{eq:N_agent_reward_reformu}, and can thus reformulate the (MF-MARL) problem as  an equivalent MDP of \eqref{eq:lifted_policy_s}-\eqref{eq:N_agent_reward_reformu} with $|\Sc|$ teams, the latter denoted as {(MF-DEC-MARL)}.
(The proof is detailed in Appendix \ref{app:section23lemma}).  That is, 

\begin{Lemma}(Value function and Q-function decomposition) \label{equiv_V_tildeV} 
\begin{eqnarray}\label{eq:value_decompostion}
V(\mu) = \widetilde V(\mu) =  \sup_{\Pi{
\in \mathfrak{U}}} \sum_{s\in \mathcal{S}}\widetilde V_s^{\Pi}(\mu),
\end{eqnarray}
where $h_t\sim\Pi(\cdot\,|\,\mu_t)$, $\mu_{t+1}\sim \mathbf{P}^N(\cdot\,|\,\mu_t,h_t)$, and 
\begin{equation}
 \widetilde V_s^{\Pi}(\mu) = \mathbb{E}\biggl[ \sum_{t = 0}^\infty\gamma^t\, r_s(\mu_t(\Nc_{s}), h_t(s)) \left| \mu_0 = \mu \biggl]\right.   
\end{equation} is called the value function under policy $\Pi$ for team $s$. Similarly,
\begin{align}\label{eq:Q_decompostion}
Q^{\Pi}(\mu,h): &= \mathbb{E}\biggl[ \sum_{t = 0}^\infty\gamma^t \sum_{s \in \Sc} r_s(\mu_t(\Nc_{s}), h_t(s))\left| \mu_0 = \mu, h_0=h \biggl]\right.=\sum_{s\in \mathcal{S}}Q^{\Pi}_s(\mu,h),
\end{align}
where 
\begin{equation}\label{eqn:team_Q}
  Q^{\Pi}_s(\mu,h) = \mathbb{E}\biggl[ \sum_{t = 0}^\infty\gamma^t r_s(\mu_t(\Nc_{s}), h_t(s))\left| \mu_0 = \mu, h_0=h \biggl],\right.  
\end{equation} {is the Q-function under policy $\Pi$} for team $s$, called  {\it team-decentralized Q-function}.
\end{Lemma}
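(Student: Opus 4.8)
The plan is to establish the chain of equalities in \eqref{eq:value_decompostion}, namely $V(\mu) = \widetilde V(\mu) = \sup_{\Pi \in \mathfrak{U}}\sum_{s \in \Sc}\widetilde V_s^{\Pi}(\mu)$, and then to derive the $Q$-function decomposition \eqref{eq:Q_decompostion} as a consequence of the same bookkeeping argument applied one step into the future. The third equality is essentially a definitional rearrangement: since $r_s(\mu_t(\Nc_s), h_t(s)) = \sum_{a \in \Ac} r(s, \mu_t(\Nc_s), a) h_t(s)(a)$ and the sum over $s \in \Sc$ of $\widetilde V_s^{\Pi}$ is obtained by exchanging the (finite) state-sum with the expectation and the discounted time-sum in $\widetilde V^{\Pi}(\mu)$, I would first record that $\widetilde V^{\Pi}(\mu) = \sum_{s \in \Sc}\widetilde V_s^{\Pi}(\mu)$ holds pathwise and hence in expectation, so the second and third expressions in \eqref{eq:value_decompostion} agree after taking the supremum over $\Pi \in \mathfrak{U}$. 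The substance is therefore the first equality $V(\mu) = \widetilde V(\mu)$.

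For $V(\mu) = \widetilde V(\mu)$ I would set up an explicit correspondence between trajectories of the $N$-agent system \eqref{eq:N_agent_reward_heter}--\eqref{eq:N_agent_policy} and trajectories of the team MDP \eqref{eq:lifted_policy_s}--\eqref{eq:N_agent_reward_reformu}. First, the bijection at the policy level: each individual-decentralized $\pi \in \mathfrak{u}$ induces $\Pi \in \mathfrak{U}$ via \eqref{eq:lifted_policy_s}--\eqref{eqn:big_pi}, and conversely (as already shown in the excerpt via the Dirac-measure trick) $\pi$ can be recovered from $\Pi$, so $\mathfrak{u}$ and $\mathfrak{U}$ are in correspondence. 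Second, the key distributional identity: if the $N$ agents in the $\pi$-controlled system are in configuration summarized by $\mu_t$, then because agents are homogeneous and act i.i.d.\ conditionally on their state, the empirical action distribution profile $h_t$ has exactly law $\Pi(\cdot \mid \mu_t)$ by \eqref{eqn:equivalent_pi_Pi}, and the next empirical state distribution $\mu_{t+1}$ has exactly law $\mathbf{P}^N(\cdot \mid \mu_t, h_t)$ — this is precisely why $\mathbf{P}^N$ was defined from \eqref{eq:N_agent_transition_heter}. Thus the process $(\mu_t, h_t)_{t\ge 0}$ under $\pi$ and under the team MDP with policy $\Pi$ have the same law. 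Third, the reward identity: conditioning on the empirical state distribution,
$$\E\Big[\frac{1}{N}\sum_{i=1}^N r(s_t^i, \mu_t(\Nc_{s_t^i}), a_t^i)\,\Big|\, \mu_t, h_t\Big] = \sum_{s \in \Sc} \mu_t(s)\sum_{a\in\Ac} r(s,\mu_t(\Nc_s),a)\frac{h_t(s)(a)}{?}$$
— here I need to be careful with normalization. The per-agent average reward at time $t$ equals $\frac{1}{N}\sum_{s}\sum_{i: s_t^i = s} r(s,\mu_t(\Nc_s), a_t^i) = \frac{1}{N}\sum_s N\mu_t(s)\sum_a h_t(s)(a) r(s,\mu_t(\Nc_s),a) = \sum_s \mu_t(s) r_s(\mu_t(\Nc_s), h_t(s))/\mu_t(s)$; working through this, the per-agent reward averaged over agents equals $\sum_{s\in\Sc} r_s(\mu_t(\Nc_s), h_t(s))$ since $r_s$ already carries the $\mu_t(s)$ weight implicitly through counting — I would verify this arithmetic carefully, as the exact matching of the $1/N$-averaged agent reward with the $\Sc$-summed team reward $r_s$ defined in \eqref{eqn:r_loc} is the one place where an off-by-a-factor error could creep in. Given the reward identity and the law identity, summing over $t$ with discount $\gamma^t$ and taking expectations gives $V^\pi(\mu) = \widetilde V^{\Pi}(\mu)$ for corresponding $\pi,\Pi$, and taking suprema over the corresponding sets yields $V(\mu) = \widetilde V(\mu)$.

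For the $Q$-function statement, I would argue identically but fixing the initial action profile: $Q^{\Pi}(\mu,h)$ is defined with $\mu_0 = \mu, h_0 = h$ deterministic, then $\mu_1 \sim \mathbf{P}^N(\cdot \mid \mu, h)$ and $h_t \sim \Pi(\cdot\mid\mu_t)$ thereafter; the pathwise identity $\sum_{s\in\Sc} r_s(\mu_t(\Nc_s), h_t(s)) = $ (total discounted reward integrand) holds termwise, so exchanging the finite sum over $s$ with $\E$ and $\sum_t \gamma^t$ gives $Q^\Pi(\mu,h) = \sum_{s\in\Sc} Q^\Pi_s(\mu,h)$ directly from the definitions \eqref{eq:Q_decompostion}--\eqref{eqn:team_Q}, with no supremum needed. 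I expect the main obstacle to be the careful justification that the pair process $(\mu_t,h_t)$ really is Markov with the claimed kernels and the claimed reward — in particular checking that the conditional independence of the agents' actions and transitions given the current configuration is preserved under the decentralized policy structure \eqref{eq:N_agent_policy}, so that \eqref{eqn:equivalent_pi_Pi} applies at every step and the induced kernel $\mathbf{P}^N$ is well-defined (the locality remark that $\mu_{t+1}(s)$ depends only on $\mu_t(\Nc^2_s)$ and $h_t(\Nc_s)$ is a bonus structural fact, not needed for the equivalence itself but worth stating). The interchange of $\sum_{s\in\Sc}$, $\sum_{t\ge0}\gamma^t$, and $\E$ is routine by boundedness of $r$ by $r_{\max}$ and $\gamma < 1$ (dominated convergence / Fubini for nonnegative-after-shift series), so I would mention it only briefly.
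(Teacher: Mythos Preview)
Your plan is essentially the paper's own argument: put $\pi\in\mathfrak{u}$ and $\Pi\in\mathfrak{U}$ in bijection via \eqref{eq:lifted_policy_s}--\eqref{eqn:big_pi}, show that the pair process $(\mu_t,h_t)$ has the same law under the $N$-agent system driven by $\pi$ as under the team MDP driven by $\Pi$, match the running rewards, and conclude $V^\pi(\mu)=\widetilde V^\Pi(\mu)$. The paper packages this slightly differently---it first argues that $\mu_t$ is a measure-valued Markov chain by exchangeability (citing Dawson) and then verifies the transition and reward identities via a test-function computation $\E[\langle g,\mu_{t+1}\rangle\mid\sigma(\mu_t)]$---but the substance is the same. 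Your handling of the $Q$-decomposition as a termwise interchange of $\sum_{s}$ with $\E$ and $\sum_t\gamma^t$, justified by $|r|\le r_{\max}$ and $\gamma<1$, is exactly right and is what the paper does implicitly.

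One concrete correction. Your tentative claim that ``$r_s$ already carries the $\mu_t(s)$ weight implicitly through counting'' is not supported by \eqref{eqn:r_loc}: as written there, $r_s(\mu(\Nc_s),h(s))=\sum_{a}r(s,\mu(\Nc_s),a)h(s)(a)$ carries no $\mu(s)$ factor. Your own regrouping gives
\[
\frac{1}{N}\sum_{i=1}^N r(s_t^i,\mu_t(\Nc_{s_t^i}),a_t^i)
=\sum_{s\in\Sc}\mu_t(s)\sum_{a\in\Ac}h_t(s)(a)\,r(s,\mu_t(\Nc_s),a)
=\sum_{s\in\Sc}\mu_t(s)\,r_s(\mu_t(\Nc_s),h_t(s)),
\]
with the $\mu_t(s)$ weight explicit. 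The paper's appendix in fact inserts this same $\mu_t(s)$ factor in the middle of its $\widetilde V^\Pi\to V^\pi$ calculation without comment, so either \eqref{eqn:r_loc} or \eqref{eq:N_agent_reward_reformu} as stated is missing that weight. Your instinct to flag this as ``the one place where an off-by-a-factor error could creep in'' is exactly right; just carry the $\mu_t(s)$ through rather than asserting it is implicit. With that bookkeeping fixed, the rest of your proposal---the Markov property of $(\mu_t,h_t)$ from conditional independence under \eqref{eq:N_agent_policy}, the law identity, and the Fubini step---goes through and matches the paper.
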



The decomposition for the Q-function in \eqref{eq:Q_decompostion} is one of the key elements to allow for approximation of  $Q_s^{\Pi}(\mu,h)$ by a truncated Q-function defined on a smaller space and  updated in a localized fashion; it is
useful for designing sample-efficient learning algorithms and for parallel computing,  as will be clear in the next Section \ref{subsec:exponential_decay}.

\subsection{Bellman equation for Q-function.}\label{subsec:bellman}
This section builds the second block for
reinforcement learning algorithms, the Bellman equation for Q-function. 
Indeed, the Bellman equation for $Q({\mu},{h})$ can be derived
following a similar argument in \citet{gu2019dynamic}, after establishing the dynamic programming principle  on an appropriate probability measure space.
\begin{Lemma}(Bellman Equation for Q-function)\label{lemma:bellman}  The Q-function defined in \eqref{eq:Q_mf} satisfies:
\begin{eqnarray}\label{eqn:bellman_fullQ}
Q({\mu},h)={\mathbb{E}\left.\left[
\sum_{i=1}^N \frac{1}{N}{r}(s_0^i,\mu(\Nc_{s^i_0}),a^i_0)
\right|\pmb{s}_0,\pmb{a}_0\right]}+ \gamma \mathbb{E}_{s^i_{1}\sim {P}\left.\big(\cdot\,\right\vert\,s^i_0, \,\,\mu(\Nc_{s^i_0}), \,\,a^i_0\big)} \left[\sup_{h^{\prime} \in \mathcal{H}^N(\mu_1)}Q\left(\mu_1,h^{\prime}\right)\right].\label{eq:Bellman_Q_mf}
\end{eqnarray}
{with $\mu_1(\cdot) = \frac{\sum_{i=1}^N{\bf 1}(s_1^i=\cdot)}{N}$ the empirical state distribution at time $1$.}
\end{Lemma}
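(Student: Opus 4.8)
The plan is to derive the Bellman equation \eqref{eqn:bellman_fullQ} for $Q(\mu,h)$ directly from its definition \eqref{eq:Q_mf} by peeling off the first time step and recognizing that the tail contribution, after conditioning on the state of the system at time $1$, is exactly the optimal value $\sup_{h'}Q(\mu_1,h')$. Concretely, I would start from \eqref{eq:Q_mf}, split the infinite sum $\sum_{t=0}^\infty$ into the $t=0$ term and the tail $\sum_{t\ge 1}$, and use the tower property of conditional expectation to condition the tail on $\mathcal{F}_1$ (the $\sigma$-algebra generated by $(\pmb s_1, \pmb a_0)$, equivalently by $\mu_1$ together with $\pmb s_0,\pmb a_0$). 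The $t=0$ term is already in the desired form. For the tail, one reindexes $t \mapsto t-1$, pulls out a factor $\gamma$, and observes that $\mathbb{E}\big[\sum_{t\ge 1}\gamma^{t-1}\sum_i \tfrac1N r(s_t^i,\mu_t(\mathcal N_{s_t^i}),a_t^i)\,\big|\,\pmb s_1, a_t^i\sim\pi_t^\star\big]$, by the stationarity/homogeneity of the dynamics \eqref{eq:N_agent_reward_heter}--\eqref{eq:N_agent_transition_heter} and the definition of $V$, equals the optimal value starting from the empirical distribution $\mu_1$, which by Lemma \ref{equiv_V_tildeV} (more precisely, by the identity $V(\mu)=\sup_{h'\in\mathcal H^N(\mu)}Q(\mu,h')$ that follows from \eqref{eq:Q_mf} and \eqref{eq:value_decompostion}) equals $\sup_{h'\in\mathcal H^N(\mu_1)}Q(\mu_1,h')$.

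\textbf{Key steps, in order.} First, I would make precise the dynamic programming principle on the measure space: since the reward and transitions \eqref{eq:N_agent_reward_heter}--\eqref{eq:N_agent_transition_heter} depend on the agent configuration only through $(s_t^i, \mu_t(\mathcal N_{s_t^i}), a_t^i)$ and the process is time-homogeneous, the process $\mu_t$ (jointly with the team-decentralized policy) is itself Markov with kernel $\mathbf P^N$ as in \eqref{eqn:P_N}, so the problem restarted at time $1$ is a faithful copy of the original problem with initial distribution $\mu_1$. This is the content invoked ``following a similar argument in \citet{gu2019dynamic}.'' Second, condition the tail reward on the time-$1$ state profile $\pmb s_1$: because the continuation problem only sees $\mu_1$ and because $V(\mu_1)=\sup_{h'}Q(\mu_1,h')$, the inner conditional expectation becomes $\sup_{h'\in\mathcal H^N(\mu_1)}Q(\mu_1,h')$. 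Third, note the outer expectation over $\pmb s_1$ is exactly the expectation over $s_1^i\sim P(\cdot\mid s_0^i,\mu(\mathcal N_{s_0^i}),a_0^i)$ appearing in \eqref{eqn:bellman_fullQ}, and since $\mu_1$ is a measurable function of $\pmb s_1$, the claimed formula follows. Finally, reassemble the $t=0$ term and the discounted tail to recover \eqref{eqn:bellman_fullQ}.

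\textbf{The main obstacle} is making the dynamic programming / measurability argument rigorous rather than heuristic: one must justify that conditioning on $\pmb s_1$ and then on only the coarser information $\mu_1$ is legitimate, i.e., that the optimal continuation value is indeed a function of $\mu_1$ alone (not of the finer labeling of which agent sits where), and that the supremum over individual-decentralized policies $\pi_t^\star$ for the continuation problem is attained by — and equals the value of — a supremum over admissible $h'\in\mathcal H^N(\mu_1)$. This relies crucially on the homogeneity/exchangeability of agents and on the equivalence established in Lemma \ref{equiv_V_tildeV} between the individual-decentralized formulation \eqref{eq:N_agent_reward} and the team-decentralized formulation \eqref{eq:N_agent_reward_reformu}, together with the fact that $\sup_{h'}Q(\mu_1,h')$ is well-defined because $\mathcal H^N(\mu_1)$ is a finite set. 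A secondary technical point is the interchange of the supremum over policies with the time-$1$ conditional expectation, which is standard once one observes that the continuation policy can be chosen independently of the realization at time $0$ and that all quantities are bounded (by $r_{\max}/(1-\gamma)$), so dominated convergence and the usual measurable-selection arguments apply.
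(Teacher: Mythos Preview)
Your proposal is correct and matches the paper's approach: the paper does not give a detailed proof of Lemma~\ref{lemma:bellman} but simply states that the Bellman equation ``can be derived following a similar argument in \citet{gu2019dynamic}, after establishing the dynamic programming principle on an appropriate probability measure space,'' which is precisely the one-step-peeling plus DPP-on-$\mu_t$ argument you spell out. Your identification of the key technical points (that the continuation value depends only on $\mu_1$ by exchangeability, the identity $V(\mu_1)=\sup_{h'}Q(\mu_1,h')$, and the boundedness needed for the sup/expectation interchange) is exactly what the deferred argument requires.
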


Note that the Bellman equation \eqref{eqn:bellman_fullQ} is for the Q-function defined in \eqref{eq:Q_mf} for general mean-field MARL. In order to enable the {\it localized-training-decentralized-execution} for computational efficiency, one needs to consider the decomposition of Q-function \eqref{eq:Q_decompostion}  and the updating rule based on the team-decentralized Q-function \eqref{eqn:team_Q}.  The corresponding Bellman equation for the team-decentralized Q-function \eqref{eqn:team_Q} is:
\begin{Lemma}
Given a policy $\Pi\in\mathfrak{U}$, ${Q}^{\Pi}_s$ defined in \eqref{eqn:team_Q} is the unique solution to the Bellman equation ${Q}^{\Pi}_s=\mathcal{T}^{\Pi}_s{Q}^{\Pi}_s$, with $\mathcal{T}^{\Pi}_s$ the Bellman operator taking the form of
\begin{equation}\label{eqn:bellman_op}
    \mathcal{T}^{\Pi}_s {Q}^{\Pi}_s(\mu, h)=\mathbb{E}_{{\mu}'\sim {\mathbf P}^N(\cdot\,|\,\mu,h),\,h'\sim\Pi(\cdot\,|\,\mu)}\left[ r_s(\mu, h)+\gamma\cdot {Q}^{\Pi}_s({\mu}', h')\right], \forall(\mu, h) \in \Xi.
\end{equation}
\end{Lemma}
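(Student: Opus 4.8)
The plan is to treat the claim as a standard policy-evaluation fixed-point result and prove it in two steps: (a) that $Q_s^\Pi$ from \eqref{eqn:team_Q} satisfies $Q_s^\Pi=\mathcal T_s^\Pi Q_s^\Pi$, obtained by peeling off the $t=0$ term of the discounted series in \eqref{eqn:team_Q}, factoring out $\gamma$, and recognizing the remaining tail — conditionally on the first transition — as a one-step-ahead copy of $Q_s^\Pi$, using the time-homogeneous Markov structure of $(\mu_t,h_t)_{t\ge0}$ driven by the induced kernel $\mathbf P^N$ and the stationary policy $\Pi$; and (b) that this fixed point is unique, via the Banach fixed-point theorem, since $\Xi$ is finite (because $\Pc^N(\Sc)$ is finite and each $\Hc^N(\mu)$ is finite), so $\R^\Xi$ with $\|\cdot\|_\infty$ is a complete space, $\mathcal T_s^\Pi$ maps it into itself, and $\mathcal T_s^\Pi$ is a $\gamma$-contraction. (I read the inner conditioning in \eqref{eqn:bellman_op} as $h'\sim\Pi(\cdot\,|\,\mu')$, matching the dynamics $h_t\sim\Pi(\cdot\,|\,\mu_t)$ used in Lemma \ref{equiv_V_tildeV}.)

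For step (a), fix $(\mu,h)\in\Xi$ and set $\mu_0=\mu$, $h_0=h$. Since $r\le r_{\max}$ and $h_t(s)$ is a probability vector, $r_s$ is bounded, so $\sum_{t\ge0}\gamma^t r_s(\mu_t(\Nc_s),h_t(s))$ converges absolutely, $Q_s^\Pi$ is bounded (with $\|Q_s^\Pi\|_\infty\le r_{\max}/(1-\gamma)$ when $r\ge0$), and dominated convergence permits exchanging $\mathbb E$ with $\sum_t$. Splitting off the first term and taking $\mathbb E[\,\cdot\mid\mu_0=\mu,h_0=h]$, then conditioning the tail on $(\mu_1,h_1)$ by the tower property,
\[
Q_s^\Pi(\mu,h)=r_s(\mu(\Nc_s),h(s))+\gamma\,\mathbb E\Big[\mathbb E\big[{\textstyle\sum_{t\ge1}}\gamma^{t-1}r_s(\mu_t(\Nc_s),h_t(s))\,\big|\,\mu_1,h_1\big]\,\Big|\,\mu_0=\mu,\,h_0=h\Big].
\]
By construction $\mu_1\sim\mathbf P^N(\cdot\,|\,\mu,h)$, and for $t\ge1$ the updates $h_t\sim\Pi(\cdot\,|\,\mu_t)$, $\mu_{t+1}\sim\mathbf P^N(\cdot\,|\,\mu_t,h_t)$ are time-homogeneous and Markov, so the conditional law of $(\mu_{1+k},h_{1+k})_{k\ge0}$ given $(\mu_1,h_1)$ equals the law of $(\mu_k,h_k)_{k\ge0}$ started at $(\mu_1,h_1)$; hence the inner expectation equals $Q_s^\Pi(\mu_1,h_1)$. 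Averaging over $\mu_1\sim\mathbf P^N(\cdot\,|\,\mu,h)$ and then $h_1\sim\Pi(\cdot\,|\,\mu_1)$ — which keeps $(\mu_1,h_1)\in\Xi$ — the right-hand side becomes $r_s(\mu,h)+\gamma\,\mathbb E_{\mu'\sim\mathbf P^N(\cdot|\mu,h),\,h'\sim\Pi(\cdot|\mu')}[Q_s^\Pi(\mu',h')]=\mathcal T_s^\Pi Q_s^\Pi(\mu,h)$.

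For step (b), $\R^\Xi$ with $\|\cdot\|_\infty$ is a Banach space (as $\Xi$ is finite) and $\mathcal T_s^\Pi$ preserves it. For $Q_1,Q_2\in\R^\Xi$, the reward term in \eqref{eqn:bellman_op} cancels in $\mathcal T_s^\Pi Q_1-\mathcal T_s^\Pi Q_2$, and since the expectation under the one-step law is an average — hence non-expansive in $\|\cdot\|_\infty$ — one gets $\|\mathcal T_s^\Pi Q_1-\mathcal T_s^\Pi Q_2\|_\infty\le\gamma\,\|Q_1-Q_2\|_\infty$. As $\gamma\in(0,1)$, Banach's theorem gives a unique fixed point; by step (a) and $Q_s^\Pi\in\R^\Xi$, this fixed point is $Q_s^\Pi$.

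The only step requiring genuine care — the main, modest, obstacle — is the conditioning argument in step (a): verifying that restarting the chain after one transition exactly reproduces $Q_s^\Pi$ hinges on (i) the time-homogeneous Markov property of $(\mu_t,h_t)$ under $(\mathbf P^N,\Pi)$, and (ii) correctly handling that the initial $h_0=h$ is prescribed and may be off-policy, whereas every later $h_t$ is resampled from $\Pi(\cdot\,|\,\mu_t)$, so the tail from time $1$ onward is a function of $\mu_1$ alone, with $\mu_1\sim\mathbf P^N(\cdot\,|\,\mu,h)$. Everything else — absolute convergence of the series, the exchange of sum and expectation, and completeness of $\R^\Xi$ — follows at once from $r\le r_{\max}$ and the finiteness of $\Xi$.
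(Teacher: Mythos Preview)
Your proof is correct and complete. The paper does not supply a standalone proof of this lemma; it states the result as standard and only later, inside the proof of Lemma~\ref{lemma:global_opt}, verifies that $\mathcal T^\theta_s$ is a $\gamma$-contraction (there in the $L^2(\nu_\theta)$-norm rather than your $\|\cdot\|_\infty$). Your sup-norm argument on the finite set $\Xi$ is equally valid and in fact cleaner, since it does not require any stationary distribution.

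Your reading of the conditioning as $h'\sim\Pi(\cdot\,|\,\mu')$ rather than the printed $h'\sim\Pi(\cdot\,|\,\mu)$ is the right call: the paper itself writes $h'\sim\Pi^\theta(\cdot\,|\,\mu')$ when it unpacks the operator in the proof of Lemma~\ref{lemma:global_opt}, confirming that the main-text display is a typo. Your careful treatment of the off-policy initial action $h_0=h$ versus on-policy $h_t$ for $t\ge1$ is exactly the subtlety that makes step~(a) work, and you handle it correctly.
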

These Bellman equations are the basis for general Q-function-based  algorithms
in mean-field MARL.


\subsection{Exponential Decay of Q-function}\label{subsec:exponential_decay}
This section will  show that the team-decentralized Q-function $Q_s^{\Pi}(\mu,h)$ has an {\it exponential decay} property.
This  is another key element to enable an approximation to $Q_s^{\Pi}$ by
a {\it localized Q-function} $\widehat{Q}^{\Pi}_s(\mu(\Nc_s^k),h(\Nc_s^k))$, and to guarantee the scalability and sample efficiency of subsequent algorithm design.

To establish the 
exponential decay property of the Q-function
\eqref{eqn:team_Q}, first recall that $\Nc^k_s$ is the set of $k$-hop neighborhood of state $s$, and define $\Nc^{-k}_s=\Sc / \Nc^{k}_s$ as the set of states that are outside of $s$'th $k$-hop neighborhood. Next, rewrite any given  empirical state distribution $\mu\in\Pc^N(\Sc)$ as $\left(\mu(\Nc^{k}_s), \mu(\Nc^{-k}_s)\right)$, and similarly,  $h\in\Hc^N(\mu)$  as $\left(h(\Nc^{k}_s), h(\Nc^{-k}_s)\right)$. 
\begin{Definition}\label{def:exp_decay}
    The $Q^{\Pi}$ is said to have $(c, \rho)$-exponential decay property, if for any $s \in \Sc$ and any 
     $\Pi\in \mathfrak{U}$, $(\mu, h), (\mu', h') \in \Xi$ with $\mu(\Nc^k_s) = \mu'(\Nc^k_s)$ and $h(\Nc^{k}_s) = h'(\Nc^k_s)$
     $$\bigg|Q_{s}^{\Pi}\left(\mu(\Nc^{k}_s), \mu(\Nc^{-k}_s), h(\Nc^{k}_s), h(\Nc^{-k}_s)\right)-Q_{s}^{\Pi}\left(\mu(\Nc^{k}_s), {\mu}'(\Nc^{-k}_s), h(\Nc^{k}_s), h'(\Nc^{-k}_s)\right)\bigg| \leq c \rho^{k+1}.$$
\end{Definition}

Note that the exponential decay property is defined for the  team-decentralized Q-function $Q_{s}^{\Pi}$, instead of the centralized Q-function $Q^{\Pi}$.
The following Lemma provides a sufficient condition for the exponential decay property. Its proof is given in Appendix \ref{app:exponential_decay_lemma}.

\begin{Lemma}\label{lemma:exp_decay_Q}
 When the reward $r_s$ in \eqref{eqn:r_loc} is uniformly upper bounded by $r_{\text{max}}>0$, for any $s\in\Sc$, $Q_s^{\Pi}$ satisfies the $\left(\frac{r_{\text{max}}}{1-\gamma}, \sqrt{\gamma}\right)$-exponential decay property.
\end{Lemma}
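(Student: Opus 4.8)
\textbf{Proof plan for Lemma \ref{lemma:exp_decay_Q}.}

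The plan is to exploit the key structural observation, already noted after \eqref{eqn:P_N}, that the one-step dynamics propagate information only locally: $\mu_{t+1}(s)$ depends on $\mu_t$ only through $\mu_t(\Nc^2_s)$ and on $h_t$ only through $h_t(\Nc_s)$. Iterating this, the distribution of $\mu_t(\Nc_s)$ — which is all that enters the team reward $r_s(\mu_t(\Nc_s),h_t(s))$ through \eqref{eqn:r_loc} — depends on the initial data $(\mu_0,h_0)=(\mu,h)$ only through its restriction to a neighborhood that grows by at most two hops per time step. More precisely, I would argue by induction on $t$ that the joint law of $\big(\mu_t(\Nc_s),h_t(s)\big)$ under policy $\Pi$ is a function only of $\big(\mu(\Nc^{2t+1}_s),h(\Nc^{2t+1}_s)\big)$: the reward at time $0$ reads off $(\mu(\Nc_s),h(s))$; the transition producing $\mu_1(\Nc_s)$ needs $\mu_0(\Nc^2_{s'})$ and $h_0(\Nc_{s'})$ for $s'\in\Nc_s$, i.e.\ data in $\Nc^3_s$; the policy draw $h_1\sim\Pi(\cdot\,|\,\mu_1)$ on $\Nc_s$ needs $\mu_1(\Nc_s)\subset$ (function of $\mu_0(\Nc^3_s)$); and so on, each step inflating the required radius by $2$.

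Given this, fix $k$ and two pairs $(\mu,h),(\mu',h')\in\Xi$ that agree on $\Nc^k_s$. For every time $t$ with $2t+1\le k$, equivalently $t\le \lfloor (k-1)/2\rfloor$, the laws of $(\mu_t(\Nc_s),h_t(s))$ under the two initial conditions coincide, so the corresponding reward terms $\gamma^t\,\esp[r_s(\mu_t(\Nc_s),h_t(s))]$ are identical and cancel in the difference $Q^{\Pi}_s(\mu,\cdots)-Q^{\Pi}_s(\mu',\cdots)$ from \eqref{eqn:team_Q}. The surviving terms are those with $t\ge \lfloor (k-1)/2\rfloor +1 =: t_0$, and each is bounded in absolute value by $\gamma^t r_{\max}$ since $0\le r_s\le r_{\max}$. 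Summing the tail,
\begin{equation*}
\big|Q^{\Pi}_s(\mu,\cdots)-Q^{\Pi}_s(\mu',\cdots)\big|\ \le\ 2\sum_{t\ge t_0}\gamma^t r_{\max}\ =\ \frac{2 r_{\max}}{1-\gamma}\,\gamma^{t_0}.
\end{equation*}
Since $t_0\ge (k-1)/2$, we get $\gamma^{t_0}\le \gamma^{(k-1)/2}=\gamma^{-1}\big(\sqrt{\gamma}\big)^{k+1}$, and absorbing constants appropriately (or, more cleanly, tracking the radius bookkeeping a touch more carefully so that $t_0\ge (k+1)/2$) yields the bound $\frac{r_{\max}}{1-\gamma}\big(\sqrt{\gamma}\big)^{k+1}$, i.e.\ the claimed $\big(\frac{r_{\max}}{1-\gamma},\sqrt{\gamma}\big)$-exponential decay.

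The routine parts are the geometric-series estimate and the reward bound. The main obstacle — and the step I would write most carefully — is the induction establishing the correct growth rate of the ``dependency radius.'' One must be precise about which objects are being conditioned on: a naive count (reward needs $\Nc_s$, transition needs one more hop for $\mu$ and none for $h$, etc.) can easily be off by a hop, which would change the base of the exponential from $\sqrt{\gamma}$ to, say, $\gamma^{1/3}$. The clean way is to prove the invariant ``$\mathrm{Law}\big(\mu_t(\Nc^{j}_s),h_t(\Nc^{j}_s)\big)$ is determined by $\big(\mu(\Nc^{j+2t}_s),h(\Nc^{j+2t}_s)\big)$'' simultaneously for all $j$, so that the step $t\to t+1$ reuses the statement at radius $j+2$; the factor $2$ is exactly the ``$\Nc^2$'' appearing in the transition dependence noted below \eqref{eqn:P_N}, and tracking $j=1$ gives the reward terms. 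I would also double-check the boundary cases (small $k$, $k$ even versus odd) so the final exponent is genuinely $k+1$ rather than $k$.
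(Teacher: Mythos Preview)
Your proposal is correct and takes essentially the same approach as the paper: both exploit the two-hop-per-step propagation of the local dynamics to show that the time-$t$ reward terms with $t\le\lfloor k/2\rfloor$ have identical laws under the two initial conditions and hence cancel, then bound the remaining geometric tail by $r_{\max}/(1-\gamma)$. The paper phrases the cancellation via the total variation distance between the marginal laws $\mathfrak{P}_{t,s}$ and $\mathfrak{P}'_{t,s}$ of $(\mu_t(\Nc_s),h_t(s))$ (which absorbs your extra factor of $2$), but the dependency-radius induction and the final estimate are exactly what you outline.
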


 The exponential decay property implies that for a given  state $s\in\Sc$, the dependence of $Q_{s}^{\Pi}$ on other states decays quickly with respect to its distance from state $s$.
It motivates and enables the approximation of  $Q_{s}^{\Pi}(\mu, h)$ by a truncated function which only depends on $\mu(\Nc^{k}_s)$ and $h(\Nc^{k}_s)$,  especially when $k$ is large and $\rho$ is small.  
Specifically, consider the following class of {\it localized} Q-functions,
\begin{align}\label{eqn:trunc_Q}
    \widehat{Q}_{s}^{\Pi}\Big(\mu(\Nc^{k}_s), h(\Nc^{k}_s)\Big)=\sum_{\mu(\Nc^{-k}_s), h(\Nc^{-k}_s)} &\Bigg[w_{s}\Big(\mu(\Nc^{-k}_s), h(\Nc^{-k}_s); \mu(\Nc^{k}_s), h(\Nc^{k}_s)\Big)  \nonumber\\
     &\cdot Q_{s}^{\Pi}\Big(\mu(\Nc^{k}_s), \mu(\Nc^{-k}_s), h(\Nc^{k}_s), h(\Nc^{-k}_s)\Big)\Bigg],  \tag{Local Q-function}
\end{align}
where $w_{s}\left(\mu(\Nc^{-k}_s), h(\Nc^{-k}_s); \mu(\Nc^{k}_s), h(\Nc^{-k}_s)\right)$ are any non-negative weights of $$\sum_{\mu(\Nc^{-k}_s), h(\Nc^{-k}_s)} w_{s}\Big(\mu(\Nc^{-k}_s), h(\Nc^{-k}_s); \mu(\Nc^{k}_s), h(\Nc^{k}_s)\Big)=1$$ for any $\mu(\Nc^{k}_s)$ and $h(\Nc^{k}_s)$.

Then, direct computation yields the following proposition.
\begin{Proposition}
Let $\widehat{Q}_{s}^{\Pi}$ be any localized Q-function in the form of \eqref{eqn:trunc_Q}. Assume the $(c, \rho)$-exponential decay property in Definition \ref{def:exp_decay} holds, then for any $\mu\in\Pc^N(\Sc)$ and $h\in\Hc^N(\mu)$,
\begin{eqnarray}
\left|\widehat{Q}_{s}^{\Pi}\Big(\mu(\Nc^{k}_s), h(\Nc^{k}_s)\Big) - {Q}_{s}^{\Pi}(\mu, h) \right| \leq c\rho^{k+1}.\label{eq:exponentialQhat}
\end{eqnarray}
Moreover, \eqref{eq:exponentialQhat} holds independent of the weights in \eqref{eqn:trunc_Q}.
\end{Proposition}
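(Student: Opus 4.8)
The plan is to exploit the fact that the localized Q-function in \eqref{eqn:trunc_Q} is a convex combination of values of $Q_s^{\Pi}$ all of which agree with $(\mu,h)$ on the $k$-hop neighborhood $\Nc_s^k$, so that the $(c,\rho)$-exponential decay property can be applied term by term and then averaged out. Concretely, I would first use the normalization $\sum_{\mu(\Nc^{-k}_s), h(\Nc^{-k}_s)} w_{s}\big(\mu(\Nc^{-k}_s), h(\Nc^{-k}_s); \mu(\Nc^{k}_s), h(\Nc^{k}_s)\big)=1$ to write
\[
{Q}_{s}^{\Pi}(\mu,h)=\sum_{\mu(\Nc^{-k}_s), h(\Nc^{-k}_s)} w_{s}\big(\mu(\Nc^{-k}_s), h(\Nc^{-k}_s); \mu(\Nc^{k}_s), h(\Nc^{k}_s)\big)\, {Q}_{s}^{\Pi}(\mu,h),
\]
so that both $\widehat{Q}_{s}^{\Pi}\big(\mu(\Nc^{k}_s), h(\Nc^{k}_s)\big)$ and ${Q}_{s}^{\Pi}(\mu,h)$ are expressed as sums against the \emph{same} family of weights.

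Next I would subtract the two expressions and apply the triangle inequality together with the non-negativity of the weights; this shows that $\big|\widehat{Q}_{s}^{\Pi}\big(\mu(\Nc^{k}_s), h(\Nc^{k}_s)\big)-{Q}_{s}^{\Pi}(\mu,h)\big|$ is at most a weighted average of terms of the form $\big|{Q}_{s}^{\Pi}\big(\mu(\Nc^{k}_s), \tilde\mu(\Nc^{-k}_s), h(\Nc^{k}_s), \tilde h(\Nc^{-k}_s)\big)-{Q}_{s}^{\Pi}\big(\mu(\Nc^{k}_s), \mu(\Nc^{-k}_s), h(\Nc^{k}_s), h(\Nc^{-k}_s)\big)\big|$. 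For each such term the two arguments of $Q_s^{\Pi}$ coincide on $\Nc_s^k$ in both their state-distribution and action-distribution components, so Definition \ref{def:exp_decay} bounds each difference by $c\rho^{k+1}$; summing and invoking $\sum w_s(\cdots)=1$ once more yields $c\rho^{k+1}$, which is precisely \eqref{eq:exponentialQhat}. Since this argument never used any property of the weights beyond non-negativity and normalization, the bound holds independently of the choice of weights in \eqref{eqn:trunc_Q}.

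I do not expect a genuine obstacle here; as the text indicates, the statement follows by direct computation. The only care needed is bookkeeping: the sums over ``tail'' configurations $(\tilde\mu(\Nc^{-k}_s),\tilde h(\Nc^{-k}_s))$ must be restricted to those for which $\big(\mu(\Nc^{k}_s),\tilde\mu(\Nc^{-k}_s)\big)\in\Pc^N(\Sc)$ and the resulting state--action pair lies in $\Xi$, so that Definition \ref{def:exp_decay} is legitimately applicable, and the ordering of the four arguments of $Q_s^{\Pi}$ must match the convention fixed there. If one wants a concrete instance, Lemma \ref{lemma:exp_decay_Q} supplies $(c,\rho)=\big(\tfrac{r_{\max}}{1-\gamma},\sqrt{\gamma}\big)$ for which the hypothesis is automatically satisfied.
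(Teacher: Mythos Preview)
Your proposal is correct and is precisely the ``direct computation'' the paper alludes to: insert the normalized weights to rewrite $Q_s^{\Pi}(\mu,h)$, subtract, apply the triangle inequality, bound each term via Definition~\ref{def:exp_decay}, and use $\sum w_s=1$. The paper does not spell out any argument beyond this, so your write-up matches the intended approach.
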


Note that given a team-decentralized Q-function $Q_{s}^{\Pi}$, its localized version $\widehat{Q}_{s}^{\Pi}$ only takes $\mu(\Nc^{k}_s), h(\Nc^{k}_s)$ as inputs, and $\widehat{Q}_{s}^{\Pi}\Big(\mu(\Nc^{k}_s), h(\Nc^{k}_s)\Big)$ is defined as a weighted average of ${Q}_{s}^{\Pi}$ over all $(\mu,h)$-pairs which agree with $\Big(\mu(\Nc^{k}_s), h(\Nc^{k}_s)\Big)$ in the $k$-hop neighborhood of $s$. {Although} the localized Q-function $\widehat{Q}_{s}^{\Pi}$ may vary according to different choices of the weights, by the exponential decay property, every $\widehat{Q}_{s}^{\Pi}$  approximates ${Q}_{s}^{\Pi}$ with uniform error and requires a smaller dimension of input.

\begin{Remark}(Exponential Decay Property) In a discounted reward setting  \eqref{eq:preliminar_V}, the exponential decay property follows directly from the fact that the discount factor $\gamma\in(0,1)$ and the local dependency structure in \eqref{eq:lifted_policy_s}-\eqref{eq:N_agent_reward_reformu}.
For problems of finite-time or infinite horizons with ergodic reward functions, this property can  be established by imposing additional Lipschitz condition on the transition kernel. (See \citet{qu2020scalable}, Theorem 1 for network of heterogeneous agents and $\gamma=1$).

It is also worth pointing out that the exponential decay property has been extensively explored in random graphs  (e.g., \citet{Gamarnik2013, GGW2014}) and for analysis of network of agents in \citet{qu2020scalable} and \citet{lin2021multi}.
\end{Remark}

\section{Algorithm Design}
\label{sec:algorithm}
The three key analytical components for problem (MF-DEC-MARL) in previous sections pave the way for designing efficient learning algorithms. In this section, we propose and analyze  a decentralized neural actor-critic algorithm, called {\DECAC}. 

Our focus is the localized Q-function $\widehat{Q}^{\Pi}_s(\mu(\Nc_s^k),h(\Nc_s^k))$, the approximation to $Q_s^{\Pi}$  with a smaller input dimension. First, this localized Q-function $\widehat{Q}_s^{\Pi}$ and the team-decentralized policy $\Pi_s$ will be parameterized by two-layer neural networks with parameters $\omega_s$ and $\theta_s$ respectively (Section \ref{sec:neurals}). 
Next, these neural network parameters $\theta=\{\theta_s\}_{s\in\Sc}$ and $\omega=\{\omega_s\}_{s\in\Sc}$ are updated via an actor-critic algorithm in a {\it localized fashion} (Section \ref{sec:actor-critic}): the critic aims to find a proper estimate for the localized Q-function under a fixed policy (parameterized by $\theta$), while the actor computes the policy gradient based on the localized Q-function, and updates $\theta$ by a gradient step. 

These networks are updated locally requiring only information of the neighborhood states during the training phase; afterwards agents in the system will execute these learned {\it decentralized policies}  which requires only information of the agent's current state. This {\it localized training and decentralized execution} enables efficient parallel computing especially for a large shared state space.

Moreover, over-parameterization of neural networks avoids issues of nonconvexity and divergence  associated with the neural network approach, and ensures the global convergence of our proposed {\DECAC} algorithm.

\subsection{Basic Set-up}
\paragraph{Policy parameterization.}  
To start, let us assume that at state $s$ the {\it team-decentralized policy} $\Pi^{\theta_s}_s$ is parameterized by $\theta_s \in \Theta_s$. Further denote $\theta:=\{\theta_s\}_{s\in\Sc}$, $\Theta:=\prod_{s \in \Sc} \Theta_s$, $\Pi^{\theta}:=\prod_{s\in\Sc}\Pi^{\theta_s}_s$, and $\mathbf{\Pi}:=\{\Pi^{\theta}:\theta\in\Theta\}$ as the class of admissible policies parameterized by the parameter space $\{\theta: \theta\in \Theta\}$.


\paragraph{Initialization.}
Let us also assume that the initial state distribution $\mu_0$ of $N$ agents is sampled from a given distribution $P_0$ over $\Pc^N(\Sc)$, i.e., $\mu_0\sim P_0$; and define the expected total reward function $J(\theta)$ under policy $\Pi^{\theta}$ by
\begin{equation}\label{eqn:total_reward}
 J(\theta)=\E_{\mu_{0}\sim P_0} [\widetilde V^{\Pi^{\theta}}(\mu_{0})].
\end{equation}

\paragraph{Visitation measure.} Denote $\nu_\theta$ as the stationary distribution on $\Xi$ of the Markov process \eqref{eqn:P_N} induced by $\Pi^{\theta}$.
 
Similar to the single-agent RL problem (\citet{agarwal2019theory,fu2020single}), each admissible policy $\Pi^{\theta}$ induces a visitation measure $\sigma_{\theta}(\mu, h)$ on $\Xi$ describing the frequency that policy $\Pi^{\theta}$ visits $(\mu,h)$, with
\begin{equation}\label{eqn:visitation_measure}
    \sigma_{\theta}(\mu, h) :=(1-\gamma) \cdot \sum_{t=0}^{\infty} \gamma^{t} \cdot \mathbb{P}\left(\mu_{t}=\mu, h_{t}=h\mid\Pi^{\theta}\right),
\end{equation}
where $\mu_0\sim P_0$, $h_t\sim\Pi^{\theta}(\cdot\,|\,\mu_t),$ and $\mu_{t+1}\sim \mathbf{P}^N(\cdot\,|\,\mu_t,h_t)$.

\paragraph{Policy gradient theorem.}

In order to find the optimal parameterized policy $\Pi^{\theta}$ which maximizes the expected total reward function $J(\theta)$, the policy optimization step will search for $\theta\in\Theta$ along the gradient direction $\nabla J(\theta)$. Note that computing the gradient $\nabla J(\theta)$ depends on both the action selection, which is directly determined by $\Pi^\theta$, and the visitation measure $\sigma_\theta$ in \eqref{eqn:visitation_measure}, which is indirectly determined by $\Pi^\theta$. 

A simple and elegant  result called 
the policy gradient theorem (Lemma \ref{lemma:policy_grad})   proposed in \citet{sutton1999policy},  reformulates the gradient $\nabla J(\theta)$ in terms of $Q^{\Pi_{\theta}}$ in \eqref{eq:Q_decompostion} and $\nabla \log \Pi^\theta(h\,|\, \mu)$ under the visitation measure $\sigma_\theta$. This result simplifies the gradient computation significantly, and is 
fundamental for actor-critic algorithms.
\begin{Lemma}(\citet{sutton1999policy})\label{lemma:policy_grad}
$\nabla J(\theta)=\frac{1}{1-\gamma}\E_{\sigma_\theta}\left[Q^{\Pi^{\theta}}(\mu, h)\nabla \log\Pi^{\theta}(h\,|\,\mu)\right].$
\end{Lemma}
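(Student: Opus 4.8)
The plan is to prove the policy gradient theorem by differentiating the expected total reward $J(\theta) = \E_{\mu_0 \sim P_0}[\widetilde V^{\Pi^\theta}(\mu_0)]$ through the Bellman-type recursion for the value function, in exactly the same spirit as the classical argument of \citet{sutton1999policy}, but adapted to the team-decentralized MDP $(\Pi,\mathbf P^N,r_s)$ on the space $\Xi$. First I would write $\widetilde V^{\Pi^\theta}(\mu) = \sum_{s}\widetilde V_s^{\Pi^\theta}(\mu)$ using Lemma \ref{equiv_V_tildeV}, and record the one-step recursion
\[
\widetilde V^{\Pi^\theta}(\mu) = \E_{h\sim\Pi^\theta(\cdot\,|\,\mu)}\Big[\sum_{s\in\Sc} r_s(\mu(\Nc_s),h(s)) + \gamma\,\E_{\mu'\sim\mathbf P^N(\cdot\,|\,\mu,h)}\big[\widetilde V^{\Pi^\theta}(\mu')\big]\Big],
\]
together with the analogous identity $Q^{\Pi^\theta}(\mu,h) = \sum_s r_s(\mu(\Nc_s),h(s)) + \gamma\,\E_{\mu'\sim\mathbf P^N(\cdot\,|\,\mu,h)}[\widetilde V^{\Pi^\theta}(\mu')]$, which is just the summed version of the team Bellman equation \eqref{eqn:bellman_op}.

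Next I would apply $\nabla_\theta$ to the recursion. Writing $\Pi^\theta(h\,|\,\mu) = \prod_{s}\Pi_s^{\theta_s}(h(s)\,|\,\mu(s))$, the derivative of the expectation over $h$ produces, by the standard log-derivative (score function) identity, a term $\E_{h\sim\Pi^\theta(\cdot\,|\,\mu)}\big[Q^{\Pi^\theta}(\mu,h)\,\nabla\log\Pi^\theta(h\,|\,\mu)\big]$ plus a term $\gamma\,\E_{h\sim\Pi^\theta(\cdot\,|\,\mu)}\E_{\mu'\sim\mathbf P^N(\cdot\,|\,\mu,h)}\big[\nabla\widetilde V^{\Pi^\theta}(\mu')\big]$; crucially the reward terms $r_s$ and the transition kernel $\mathbf P^N$ do not depend on $\theta$, so no extra terms appear. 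Unrolling this recursion $t = 0,1,2,\dots$ and collecting the geometric weights $\gamma^t$ against the $t$-step marginal $\mathbb P(\mu_t = \mu, h_t = h\mid\Pi^\theta)$, one obtains
\[
\nabla J(\theta) = \sum_{t=0}^\infty \gamma^t\,\E_{\mu_t,h_t}\big[Q^{\Pi^\theta}(\mu_t,h_t)\,\nabla\log\Pi^\theta(h_t\,|\,\mu_t)\big] = \frac{1}{1-\gamma}\,\E_{\sigma_\theta}\big[Q^{\Pi^\theta}(\mu,h)\,\nabla\log\Pi^\theta(h\,|\,\mu)\big],
\]
where the last equality is just the definition \eqref{eqn:visitation_measure} of the visitation measure $\sigma_\theta$ (which carries the normalizing factor $1-\gamma$). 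Since $\Xi$ is finite (both $\Pc^N(\Sc)$ and each $\Hc^N(\mu)$ are finite) and the rewards are uniformly bounded by $r_{\max}$, the value function and the series $\sum_t \gamma^t(\cdots)$ converge absolutely, which justifies interchanging $\nabla_\theta$ with the infinite sum and with the finite expectations.

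The main obstacle, such as it is, is bookkeeping rather than anything deep: one must verify the absolute convergence that legitimizes differentiating the infinite-horizon sum term by term, and must be careful that the $k$-hop local structure of $r_s$ and $\mathbf P^N$ plays no role here — the theorem is a statement about the lifted MDP $(\Xi,\Pi^\theta,\mathbf P^N)$ as a whole, so it reduces to the single-agent policy gradient theorem applied verbatim once the lifted MDP is in place. I would therefore present the proof as: (i) state the lifted MDP and its Bellman recursion; (ii) perform the differentiation-and-unrolling computation; (iii) invoke finiteness of $\Xi$ and boundedness of $r$ to justify the interchange of limits; (iv) identify the resulting series with $\tfrac{1}{1-\gamma}\E_{\sigma_\theta}[\cdot]$. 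Alternatively, and more economically, I would simply note that $(\Xi,\Pi^\theta,\mathbf P^N,\sum_s r_s)$ is a finite discounted MDP and cite \citet{sutton1999policy} directly, since the statement of Lemma \ref{lemma:policy_grad} is exactly their result with $Q^{\Pi^\theta}$ in place of the scalar Q-function and $(\mu,h)$ in place of the state-action pair.
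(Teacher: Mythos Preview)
Your proposal is correct. The paper itself provides no proof of this lemma at all: it simply states the result with a citation to \citet{sutton1999policy}, treating it as a known fact from the literature. Your final ``alternatively, and more economically'' suggestion --- recognize $(\Xi,\Pi^\theta,\mathbf P^N,\sum_s r_s)$ as a finite discounted MDP and invoke \citet{sutton1999policy} verbatim --- is exactly what the paper does. The detailed differentiation-and-unrolling argument you sketch in (i)--(iv) is a faithful reproduction of the classical proof and is perfectly valid, but it goes well beyond what the paper presents; if you were matching the paper's level of detail you would stop at the citation.
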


Now, direct implementation of the actor-critic algorithm with the {\it centralized} policy gradient theorem in Lemma \ref{lemma:policy_grad} suffers from high sample complexity due to the dimension of the Q-function.
Instead, we will show that the exponential decay property of Q-function allows efficient approximation of the policy gradient via {\it localization} and hence a {\it scalable} algorithm to solve (MF-MARL).

\subsection{Neural Policy and Neural Q-function}\label{sec:neurals}



We now turn to  the  localized Q-function $\widehat{Q}^{\Pi}_s(\mu(\Nc_s^k),h(\Nc_s^k))$
(i.e., the approximation of $Q_s^{\Pi}$) and the team-decentralized policy $\Pi_s$, and their parameterization by two-layer neural networks. 
We emphasize that the parameterization framework in this section can be extended to any neural-based single-agent algorithms with convergence guarantee.

\paragraph{Two-Layer Neural Network.} For any input space $\mathcal{X} \subset \R^{d_x}$ with dimension $d_x\in\N$, a two-layer neural network $\widetilde{f}(x;W, b)$ with input $x \in \Xc$ and width $M \in \N$ takes the form of
\begin{equation}\label{eqn:2-layer}
    \widetilde{f}(x; W, b)=\frac{1}{\sqrt{M}} \sum_{m=1}^{M} b_{m} \cdot \operatorname{ReLU}\left(x \cdot [W]_{m}\right).
\end{equation}
Here {{the scaling factor $\frac{1}{\sqrt{M}}$ called the {\it Xavier initialization} (\citet{GB2010})  ensures    the same input variance and the same
gradient variance for all layers}};  the activation function $\mathrm{ReLU}: \mathbb{R} \rightarrow \mathbb{R}$,  defined as $\operatorname{ReLU}(u)=\mathds{1}\{u>0\} \cdot u$; b=$\left\{b_{m}\right\}_{m \in[M]}$ and $W=\left([W]_{1}^{\top}, \ldots,[W]_{M}^{\top}\right)^{\top} \in \mathbb{R}^{M\times d_x}$ in \eqref{eqn:2-layer}
are parameters of the neural network.

Taking advantage of the  homogeneity of ReLU (i.e., $\text{ReLU}(c \cdot u) = c \cdot\text{ReLU} (u)$ for all $c > 0$ and $u \in \R$), we adopt the usual trick  (\citet{cai2019neural}, \citet{wang2019neural},
\citet{ZLS2019}) to fix $b$ throughout the training and only to update  $W$  in the sequel.  Consequently, denote $ \widetilde{f}(x; W, b)$ as $f(x; W)$ when {$b_m=1$} is fixed. $\left[W\right]_{m}$ is initialized according to a multivariate normal distribution $N\left(0, {I_{d_x}} / {d_x}\right)$, where $I_{d_x}$ is the identity matrix of size $d_x$.

\paragraph{Neural Policy.}
For each $s\in\Sc$, denote the tuple $\zeta_s=(\mu(s),h(s))\in\mathbb{R}^{d_{\zeta_s}}$  for {notational} simplicity, where $d_{\zeta_s}:=1+|\Ac|$ is the dimension of  $\zeta_s$. Given the input $\zeta_s = (\mu(s), h(s))$ and parameter $W=\theta_s$ in the two-layer neural network $f(\cdot; \theta_s)$ in \eqref{eqn:2-layer},  the team-decentralized policy $\Pi_s^{\theta_s}$,
called the {\it actor}, is parameterized in the form of an {\it energy-based policy} ,
\begin{equation}\label{eqn:energy_policy}
    \Pi_s^{\theta_s}(h(s) \mid \mu(s))=\frac{\exp [\tau \cdot f((\mu(s), h(s)); \theta_s)]}{\sum_{h'(s)\in\Pc^{N\cdot\mu(s)}(\Ac)} \exp \left[\tau \cdot f\left((\mu(s),h'(s)) ; \theta_s\right)\right]},
\end{equation}
where $\tau$ is the temperature parameter and $f$ is the energy function.

To study the policy gradient for \eqref{eqn:energy_policy}, let us first define a class of feature mappings that is consistent with the representation of two-layer neural networks. This connection between the gradient of a two-layer ReLU neural network and the feature mapping defined in \eqref{eqn:feature} is  crucial  in the convergence analysis  of Theorems \ref{thm:critic_conv} and \ref{thm:actor_conv}.
Specifically,  rewrite  the two-layer neural network in \eqref{eqn:2-layer} as
\begin{equation}\label{eqn:2-layer-feature}
    f(\zeta_s; \theta_s)=\frac{1}{\sqrt{M}} \sum_{m=1}^{M} \operatorname{ReLU}\left(\zeta_s^{\top}[\theta_s]_{m}\right)=\frac{1}{\sqrt{M}} \sum_{m=1}^{M} \mathds{1}\left\{\zeta_s^{\top}[\theta_s]_{m}>0\right\} \cdot \zeta_s^{\top}[\theta_s]_{m}. :=\phi_{\theta_s}(\zeta_s)^{\top} \theta_s.
\end{equation}
Then the feature mapping  $\phi_{\theta_s}=\left(\left[\phi_{\theta_s}\right]_{1}^{\top}, \ldots,\left[\phi_{\theta_s}\right]_{M}^{\top}\right)^{\top}: \mathbb{R}^{d_{\zeta_s}} \rightarrow \mathbb{R}^{M\times d_{\zeta_s}}$ may take the following form:
\vspace{-0.2cm}
\begin{equation}\label{eqn:feature}
    \left[\phi_{\theta_s}\right]_{m}(\zeta_s)=\frac{1}{\sqrt{M}} \cdot \mathds{1}\left\{\zeta_s^{\top}[\theta_s]_{m}>0\right\} \cdot \zeta_s.
\end{equation}
That is, the two-layer neural network $f(\zeta_s; \theta_s)$ may be viewed as the inner product between the feature $\phi_{\theta_s}(\zeta_s)$, and the neural network parameters $\theta_s$. 
Since $f(\zeta_s; \theta_s)$ is almost everywhere differentiable with respect to $\theta_s$, we see  $\nabla_{\theta_s} f(\zeta_s; \theta_s)=\phi_{\theta_s}(\zeta_s)$.

 Furthermore, define a ``centered'' version of the feature $\phi_{\theta_s}$ such that
\begin{equation}\label{eqn:log_pi}
    \Phi(\theta, s, \mu, h):=\phi_{\theta_s}(\mu(s), h(s))-\mathbb{E}_{{h(s)' \sim \Pi_s^{\theta_s}(\cdot \mid \mu(s))}}\left[\phi_{\theta_s}\left(\mu(s), h'(s)\right)\right].
\end{equation}
Note that when policy $\Pi^\theta$ takes the energy-based form \reff{eqn:energy_policy},  $\Phi=\frac{1}{\tau}\nabla_\theta \log\Pi^\theta$. Therefore,

\begin{Lemma}\label{lemma:local_grad_fisher}
For any $\theta \in \Theta$, $s\in\Sc$, $\mu\in\Pc^N(\Sc)$ and $h\in\Hc^N(\mu)$, $\left\|\Phi(\theta, s, \mu, h)\right\|_2 \leq 2$, and
\begin{equation}\label{eqn:grad_nn}
    \nabla_{\theta_s} J\left({\theta}\right)=\frac{\tau}{1-\gamma}\cdot \mathbb{E}_{\sigma_{\theta}}\left[Q^{\Pi^{\theta}}(\mu, h) \cdot \Phi(\theta, s, \mu, h)\right].
\end{equation}
Moreover, for each $s \in \Sc$, define the following localized policy gradient
\begin{equation}\label{eqn:trunc_grad_nn}
    {g}_s(\theta)=\frac{\tau}{1-\gamma}\E_{\sigma_\theta}\left[\Bigg[\sum_{y\in\Nc^k_s}\widehat{Q}_y^{\Pi^{\theta}}(\mu(\Nc^k_y), h(\Nc^k_y)\Bigg]\cdot \Phi(\theta, s, \mu, h)\right],
\end{equation}
with $\widehat{Q}_s^{\Pi^{\theta}}$ in \eqref{eqn:trunc_Q} satisfying the {$(c, \rho)$}-exponential decay property, then there exists a universal constant $c_0>0$ such that
\begin{equation}  \left\|{g}_{s}(\theta)-\nabla_{\theta_{s}} J(\theta)\right\| \leq \frac{c_0 \tau |\Sc|}{1 -\gamma} \rho^{k +1}.
\end{equation}
\end{Lemma}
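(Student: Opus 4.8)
The plan is to establish the three claims in Lemma~\ref{lemma:local_grad_fisher} in turn, the first two being essentially bookkeeping and the third being the substantive estimate.

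First, for the bound $\|\Phi(\theta,s,\mu,h)\|_2\le 2$: by \eqref{eqn:feature}, $\|[\phi_{\theta_s}]_m(\zeta_s)\|_2=\frac{1}{\sqrt M}\mathds{1}\{\cdot\}\|\zeta_s\|_2\le \frac{1}{\sqrt M}$ once we observe that $\zeta_s=(\mu(s),h(s))$ is a pair of (sub-)probability entries so $\|\zeta_s\|_2\le\|\zeta_s\|_1\le 2$; actually one should be a little careful here and track the precise normalization, but the upshot is $\|\phi_{\theta_s}(\zeta_s)\|_2\le C$ uniformly, hence $\|\phi_{\theta_s}(\mu(s),h(s))\|_2$ and its conditional expectation over $h'(s)\sim\Pi^{\theta_s}_s$ are each bounded by the same constant, and the triangle inequality applied to the ``centered'' feature \eqref{eqn:log_pi} gives the factor $2$. (If the stated bound really is exactly $2$ one uses $\|\zeta_s\|_2\le 1$; I would double-check which normalization the paper intends, but either way it is a one-line estimate.)

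Second, for the identity \eqref{eqn:grad_nn}: start from the policy gradient theorem (Lemma~\ref{lemma:policy_grad}), $\nabla J(\theta)=\frac{1}{1-\gamma}\mathbb{E}_{\sigma_\theta}[Q^{\Pi^\theta}(\mu,h)\nabla\log\Pi^\theta(h\mid\mu)]$. Since $\Pi^\theta=\prod_{s\in\Sc}\Pi^{\theta_s}_s$ by \eqref{eqn:big_pi}, we have $\nabla_{\theta_s}\log\Pi^\theta(h\mid\mu)=\nabla_{\theta_s}\log\Pi^{\theta_s}_s(h(s)\mid\mu(s))$, and for the energy-based parameterization \eqref{eqn:energy_policy} a direct computation of the softmax log-gradient gives $\nabla_{\theta_s}\log\Pi^{\theta_s}_s(h(s)\mid\mu(s))=\tau(\phi_{\theta_s}(\mu(s),h(s))-\mathbb{E}_{h'(s)\sim\Pi^{\theta_s}_s}[\phi_{\theta_s}(\mu(s),h'(s))])=\tau\,\Phi(\theta,s,\mu,h)$, using $\nabla_{\theta_s}f(\zeta_s;\theta_s)=\phi_{\theta_s}(\zeta_s)$ a.e. Substituting yields \eqref{eqn:grad_nn}. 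This is exactly the computation flagged in the remark just before the lemma statement, so it is routine.

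Third, the localization error bound is the main point. Write $\nabla_{\theta_s}J(\theta)=\frac{\tau}{1-\gamma}\mathbb{E}_{\sigma_\theta}[Q^{\Pi^\theta}(\mu,h)\Phi(\theta,s,\mu,h)]$ and $g_s(\theta)=\frac{\tau}{1-\gamma}\mathbb{E}_{\sigma_\theta}[(\sum_{y\in\Nc^k_s}\widehat{Q}^{\Pi^\theta}_y)\Phi(\theta,s,\mu,h)]$; subtracting and using $\|\Phi\|_2\le 2$ (the first claim), the error is bounded by $\frac{2\tau}{1-\gamma}\mathbb{E}_{\sigma_\theta}\big|Q^{\Pi^\theta}(\mu,h)-\sum_{y\in\Nc^k_s}\widehat{Q}^{\Pi^\theta}_y(\mu(\Nc^k_y),h(\Nc^k_y))\big|$. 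Now decompose $Q^{\Pi^\theta}=\sum_{y\in\Sc}Q^{\Pi^\theta}_y$ by Lemma~\ref{equiv_V_tildeV}, split the sum into $y\in\Nc^k_s$ and $y\notin\Nc^k_s$, and handle the two pieces separately. For $y\in\Nc^k_s$: by the Proposition, $|\widehat{Q}^{\Pi^\theta}_y(\mu(\Nc^k_y),h(\Nc^k_y))-Q^{\Pi^\theta}_y(\mu,h)|\le c\rho^{k+1}$, so this contributes at most $|\Nc^k_s|\,c\rho^{k+1}\le|\Sc|\,c\rho^{k+1}$. For $y\notin\Nc^k_s$: here $s\notin\Nc^k_y$ — wait, one must be slightly careful, but by symmetry of the graph distance $y\notin\Nc^k_s\iff s\notin\Nc^k_y$, and we bound $|Q^{\Pi^\theta}_y(\mu,h)|$ directly; the cleaner route is to note that each such term is itself controlled by the exponential-decay estimate against a reference measure, but in fact the simplest bound uses that $r_y\le r_{\max}$ so $0\le Q^{\Pi^\theta}_y\le\frac{r_{\max}}{1-\gamma}$ — this alone is not $O(\rho^{k+1})$, so instead one must use the exponential decay of $Q^{\Pi^\theta}_y$ in the $s$-direction: since $s$ lies outside the $k$-hop neighborhood of $y$, by Definition~\ref{def:exp_decay} and Lemma~\ref{lemma:exp_decay_Q} the dependence of $Q^{\Pi^\theta}_y$ on the coordinates at $s$ is $O(\rho^{k+1})$, and summing over the at most $|\Sc|$ such $y$ gives another $|\Sc|\,c\rho^{k+1}$ contribution. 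Combining, $\mathbb{E}_{\sigma_\theta}|Q^{\Pi^\theta}-\sum_{y\in\Nc^k_s}\widehat{Q}^{\Pi^\theta}_y|\le 2|\Sc|\,c\rho^{k+1}$, and folding constants into a universal $c_0$ (absorbing $c=\tfrac{r_{\max}}{1-\gamma}$ as needed) gives $\|g_s(\theta)-\nabla_{\theta_s}J(\theta)\|\le\frac{c_0\tau|\Sc|}{1-\gamma}\rho^{k+1}$.

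The main obstacle is the $y\notin\Nc^k_s$ term: the naive bound $Q^{\Pi^\theta}_y\le r_{\max}/(1-\gamma)$ is too weak, and one genuinely needs to reinterpret the exponential-decay property "from the other side" — fixing $y$ and varying the configuration at the distant node $s$ — which requires noticing that $\widehat Q^{\Pi^\theta}_y$ (equivalently the weighted-average construction in \eqref{eqn:trunc_Q}) makes $\sum_{y\in\Nc^k_s}\widehat Q^{\Pi^\theta}_y$ a legitimate proxy for $\sum_{y\in\Sc}Q^{\Pi^\theta}_y$ only because the omitted terms each vary by $O(\rho^{k+1})$ as the $s$-coordinate changes, and the "baseline" value can be absorbed. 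Making this precise may require introducing, for each omitted $y$, a surrogate configuration agreeing with $(\mu,h)$ outside $s$'s neighborhood and invoking Definition~\ref{def:exp_decay} with the roles of $s$ and $y$ swapped; I would write this step out carefully rather than wave at it.
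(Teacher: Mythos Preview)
Your first two claims are handled correctly and match the paper's argument: the bound on $\|\Phi\|_2$ is a one-line estimate from the feature definition, and the identity \eqref{eqn:grad_nn} follows from the policy gradient theorem together with the softmax log-gradient computation for the energy-based policy.

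The third claim has a genuine gap. Once you pull $\|\Phi\|_2\le 2$ outside the expectation, you are left bounding $\mathbb{E}_{\sigma_\theta}\big|Q^{\Pi^\theta}(\mu,h)-\sum_{y\in\Nc^k_s}\widehat{Q}_y^{\Pi^\theta}\big|$, and this quantity is \emph{not} $O(\rho^{k+1})$: the omitted terms $\sum_{y\notin\Nc^k_s}Q_y^{\Pi^\theta}(\mu,h)$ are each of order $r_{\max}/(1-\gamma)$, and no ``surrogate configuration'' argument can make them small inside an absolute value --- the baseline you want to absorb has nowhere to go. Your instinct about absorbing baselines is correct, but it only works \emph{before} extracting $\Phi$, because the absorption mechanism is precisely the mean-zero property of the centered score: $\mathbb{E}_{h(s)\sim\Pi_s^{\theta_s}(\cdot\mid\mu(s))}[\Phi(\theta,s,\mu,h)]=0$.

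The paper's resolution is to keep $\nabla_{\theta_s}\log\Pi^{\theta_s}_s$ inside the expectation and observe that for $y\notin\Nc^k_s$ one has $s\notin\Nc^k_y$, so $\widehat{Q}_y^{\Pi^\theta}(\mu(\Nc^k_y),h(\Nc^k_y))$ does not depend on $h(s)$; hence $\mathbb{E}_{\sigma_\theta}\big[\widehat{Q}_y^{\Pi^\theta}\cdot\Phi\big]=0$. This lets you extend the sum in $g_s(\theta)$ from $y\in\Nc^k_s$ to all $y\in\Sc$ at no cost. After that, $g_s(\theta)-\nabla_{\theta_s}J(\theta)=\frac{\tau}{1-\gamma}\,\mathbb{E}_{\sigma_\theta}\big[\sum_{y\in\Sc}(\widehat{Q}_y^{\Pi^\theta}-Q_y^{\Pi^\theta})\cdot\Phi\big]$, and \emph{now} you may pull out $\|\Phi\|_2$ and apply the exponential-decay bound $|\widehat{Q}_y^{\Pi^\theta}-Q_y^{\Pi^\theta}|\le c\rho^{k+1}$ term by term over all $y\in\Sc$, producing the factor $|\Sc|$.
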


\paragraph{Neural Q-function.} Note $\widehat{Q}_s^{\Pi^{\theta}}$ in \eqref{eqn:trunc_Q}   is unknown {\it a priori}. To obtain the localized policy gradient \eqref{eqn:trunc_grad_nn}, the neural network \eqref{eqn:2-layer} to parameterize $\widehat{Q}_s^{\Pi^{\theta}}$ is taken as: 
\begin{equation*}
    Q_s(\mu(\Nc^k_s),h(\Nc^k_s); {\omega_s})=f((\mu(\Nc^k_s),h(\Nc^k_s)) ; \omega_s).
\end{equation*}
This $Q_s$ is called the {\it critic}. For simplicity,  denote $\zeta^k_s=(\mu(\Nc^k_s),h(\Nc^k_s))$, with $d_{\zeta^k_s}$ the dimension of $\zeta^k_s$.

\subsection{Actor-Critic}\label{sec:actor-critic}
\paragraph{Critic Update.}
For a fixed policy $\Pi^{\theta}$, it is to estimate  $\widehat{Q}_s^{\Pi^{\theta}}$  of \eqref{eqn:trunc_Q}  by a two-layer neural network $Q_s(\,\cdot\,;\omega_s)$, where $\widehat{Q}^{\Pi^{\theta}}_s$ serves as an approximation to the team-decentralized Q-function $Q^{\Pi^{\theta}}_s$. 

To design the update rule for $\widehat{Q}_s^{\Pi^{\theta}}$, note that the Bellman equation \eqref{eqn:bellman_op} is for ${Q}^{\Pi^{\theta}}_s$ instead of  $\widehat{Q}^{\Pi^{\theta}}_s$. Indeed,  ${Q}^{\Pi^{\theta}}_s$ takes $(\mu,h)$ as the input  while  $\widehat{Q}^{\Pi^{\theta}}_s$  takes the partial information $(\mu(\Nc^k_s),h(\Nc^k_s))$ as the input.  

In order to update parameter $\omega_s$, we substitute $(\mu(\Nc^k_s),h(\Nc^k_s))$ for the state-action pair in the Bellman equation \eqref{eqn:bellman_op}. 
It is therefore necessary to study the error of using $(\mu(\Nc^k_s),h(\Nc^k_s))$ as the input.
Specifically, given a tuple $(\mu_t,h_t, {r_{s}(\mu_t(\Nc_s),h_t(s))}, \mu_{t+1}, h_{t+1})$ sampled from the stationary distribution $\nu_\theta$ of adopting policy $\Pi^{\theta}$, the parameter $\omega_s$ will be updated to minimize the error:
\begin{equation*}\label{eqn:bellman_error}
    (\delta_{s,t})^2=\left[Q_s(\mu_t(\Nc^k_s),h_t(\Nc^k_s);\omega_s)-r_{s}(\mu_t(\Nc_s),h_t(s))-\gamma\cdot Q_s(\mu_{t+1}(\Nc^k_s),h_{t+1}(\Nc^k_s);\omega_s)\right]^2.
\end{equation*}
Estimating $\delta_{s,t}$ depends only on $\mu_t(\Nc_s^k),h_t(\Nc_s^k)$ and can be {\it collected locally}. (See Theorem \ref{thm:critic_conv}).

The neural critic update takes the iterative forms of
\begin{align}\label{eqn:critic_update}
\omega_s(t+1 / 2) & \leftarrow \omega_s(t)-\eta_{\mathrm{critic}} \cdot \delta_{s,t} \cdot \nabla_{\omega_s} Q_s(\mu_t(\Nc^k_s),h_t(\Nc^k_s);\omega_s), \\\label{eqn:critic_update_2}
\quad \omega_s(t+1) & \leftarrow \argmin_{\omega \in \Bc^{\text{critic}}_s}\|\omega-\omega_s(t+1 / 2)\|_{2},\\ \label{eqn:critic_update_3}
\bar{\omega}_s &\leftarrow {(t+1)}/{(t+2)} \cdot \bar{\omega}_s +{1}/{(t+2)} \cdot \omega_s(t+1),
\end{align}
in which  $\eta_{\mathrm{critic}}$ is the learning rate. Here \reff{eqn:critic_update} is the stochastic semigradient step, \eqref{eqn:critic_update_2} is a projection to the parameter space $\Bc^{\text{critic}}_s:=\big\{\omega_s\in\R^{M\times d_{\zeta^k_s}}:\|\omega_s-\omega_s(0)\|_\infty\leq R/\sqrt{M}\big\}$
for some $R>0$, and \reff{eqn:critic_update_3} is the averaging step. {This critic update is summarized in Algorithm \ref{DNTD}.}

\begin{algorithm}[H]
  \caption{\textbf{Localized-Training-Decentralized-Execution Neural Temporal Difference}}
  \label{DNTD}
\begin{algorithmic}[1]
    \STATE \textbf{Input}: Width of the neural network $M$, radius of the constraint set $R$, number of iterations $T_{\mathrm{critic}}$, policy $\Pi^{\theta}$, learning rate $\eta_{\mathrm{critic}}$, localization parameter $k$.
    \STATE \textbf{Initialize}: For all $m \in[M]$ and $s\in\Sc$, sample $b_{m} \sim \operatorname{Unif}(\{-1,1\})$, $[\omega_{s}(0)]_m \sim N\left(0, I_{d_{\zeta^k_s}} / d_{\zeta^k_s}\right)$, $\bar{\omega}_s=\omega_s(0)$.
    \FOR {$t=0$ to $T_{\mathrm{critic}}-2$}
        \STATE Sample $(\mu_t, h_t, \{r_s(\mu_t(\Nc_s), h_t(s))\}_{s\in\Sc}, {\mu_t}', {h_t}')$ from the stationary distribution {$\nu_{\theta}$} of $\Pi^{\theta}$.
        \FOR {$s\in\Sc$}
            \STATE Denote $\zeta^k_{s,t}=(\mu_t(\Nc^k_s),h_t(\Nc^k_s))$, ${\zeta^k_{s,t}}'=({\mu_t}'(\Nc^k_s),{h_t}'(\Nc^k_s))$.
            \STATE Residual calculation:  $\delta_{s,t} \leftarrow Q_s({\zeta^k_{s,t}};\omega_s(t))-r_s(\mu_t(\Nc_s), h_t(s))-\gamma\cdot Q_s({\zeta^k_{s,t}}';\omega_s(t)).$
            \STATE Temporal difference update:
            \STATE $\omega_s(t+1 / 2) \leftarrow \omega_s(t)-\eta_{\mathrm{critic}} \cdot \delta_{s,t} \cdot \nabla_{\omega_s} Q_s({\zeta^k_{s,t}};\omega_s(t))$.
            \STATE Projection onto the parameter space: $\omega_s(t+1) \leftarrow \argmin_{\omega \in \mathcal{B}^{\text{critic}}_s}\|\omega-\omega_s(t+1 / 2)\|_{2}$.
            \STATE Averaging the output: $\bar{\omega}_s \leftarrow \frac{t+1}{t+2} \cdot \bar{\omega}_s +\frac{1}{t+2} \cdot \omega_s(t+1)$.
        \ENDFOR
    \ENDFOR
    \STATE \textbf{Output}: $Q_s(\,\cdot\,;\bar{\omega}_s), \forall s\in\Sc$.
\end{algorithmic}
\end{algorithm}

\paragraph{Actor Update.}

At the iteration step $t$,    a neural network estimation $Q_s(\,\cdot\,;\bar{\omega}_s)$ is given for the localized Q-function $\widehat{Q}_s^{\Pi^{\theta(t)}}$ under the current policy $\Pi^{\theta(t)}$. Let $\left\{\left(\mu_{l}, h_{l}\right)\right\}_{l \in[B]}$ be samples from the state-action visitation measure $\sigma_{\theta(t)}$ of \eqref{eqn:visitation_measure}, and  define an estimator $\widehat{\Phi}(\theta, s, \mu_l, h_l)$ of $\Phi(\theta, s, \mu_l, h_l)$ in \eqref{eqn:log_pi}:
\vspace{-0.2cm}
\begin{equation*}
    \widehat{\Phi}(\theta, s, \mu_l, h_l)=\phi_{\theta_s}(\mu_l(s), h_l(s))-\mathbb{E}_{\Pi_s^{\theta_s}}\left[\phi_{\theta_s}\left(\mu_l(s), h'(s)\right)\right].
\end{equation*}
By Lemma \ref{lemma:local_grad_fisher}, one can  compute the following estimator of $g_s(\theta(t))$ defined in \eqref{eqn:trunc_grad_nn},
\begin{equation}\label{eqn:trunc_grad_nn_est}
    \widehat{g}_s(\theta(t))=\frac{\tau}{(1-\gamma)B}\sum_{l\in[B]}\left[\Bigg[\sum_{y\in\Nc^k_s}{Q}_y\left(\mu_l(\Nc^k_y), h_l(\Nc^k_y);\bar{\omega}_y\right)\Bigg]\cdot \widehat{\Phi}(\theta(t), s, \mu_l, h_l)\right].
\end{equation}
This estimators $\widehat{g}_s$ in \eqref{eqn:trunc_grad_nn_est} only {\it depends locally} on $\left\{\left(\mu_{l}, h_{l}\right)\right\}_{l \in[B]}$. Hence $\widehat{g}$ and $\widehat{\Phi}$ can be computed in a {\it {localized} fashion} after the samples are collected. Similar to the critic update, $\theta_s(t)$ is updated by performing a gradient step with $\widehat{g}_s$, and then projected onto the parameter space $\Bc_s^{\text{actor}}:=\big\{\theta_s\in\R^{M\times d_{\zeta_s}}:\|\theta_s-\theta_s(0)\|_\infty\leq R/\sqrt{M}\big\}$. 

This actor update is summarized in Algorithm \ref{DNPG}.

\paragraph{Sampling from $\nu_\theta$ and the Visitation Measure $\sigma_\theta$.}
In Algorithms \ref{DNTD} and \ref{DNPG}, it is assumed  that one can sample independently from the stationary distribution $\nu_\theta$ and the visitation measure $\sigma_\theta$, respectively.  Such an assumption of sampling from $\nu_\theta$ can be relaxed by either  sampling from a rapidly-mixing Markov chain mixing,
with weakly-dependent sequence of samples (\citet{bhandari2018finite}), or by randomly picking samples from replay buffers consisting of long trajectories, with reduced correlation between samples. 

To sample from the visitation measure $\sigma_\theta$ and computing the unbiased policy gradient estimator,  \citet{konda2000actor} suggests introducing a new MDP such that the next state is sampled from the transition probability with probability $\gamma$, and from the initial distribution with probability $1 - \gamma$. Then the stationary distribution of this new MDP is exactly the visitation measure. Alternatively, \citet{liu2019off} proposes an importance-sampling-based algorithm which enables off-policy evaluation with low variance.

\begin{algorithm}[H]
  \caption{\textbf{Localized-Training-Decentralized-Execution Neural Actor-Critic}}
  \label{DNPG}
\begin{algorithmic}[1]
    \STATE \textbf{Input}: Width of the neural network $M$, radius of the constraint set $R$, number of iterations $T_{\mathrm{actor}}$ and $T_{\mathrm{critic}}$, learning rate $\eta_{\mathrm{actor}}$ and $\eta_{\mathrm{critic}}$, temperature parameter $\tau$, batch size $B$, localization parameter $k$.
    \STATE \textbf{Initialize}: For all $m \in[M]$ and $s\in\Sc$, sample $b_{m} \sim \operatorname{Unif}(\{-1,1\})$, $[\theta_{s}(0)]_m \sim N\left(0, I_{d_{\zeta_s}} / d_{\zeta_s}\right)$.
    \FOR {$t=1$ to $T_{\mathrm{actor}}$}
        \STATE Define the policy 
        \vspace{-0.2cm}\begin{eqnarray*}\Pi^{\theta}(h\mid\mu):=\Prod_{s\in\Sc}\Pi^{\theta_s}_s(h(s) \mid \mu(s))=\Prod_{s\in\Sc}\frac{\exp [\tau \cdot f((\mu(s),h(s)) ; \theta_s)]}{\sum_{h'(s)\in\Hc^N} \exp \left[\tau \cdot f\left((\mu(s),h'(s)) ; \theta_s\right)\right]}.\end{eqnarray*} \vspace{-0.5cm}
        \STATE Output $Q_s(\,\cdot\,;\bar{\omega}_s)$ using Algorithm \ref{DNTD} with the inputs: policy $\Pi^{\theta}$, width of the neural network $M$, radius of the constraint set $R$, number of iterations $T_{\mathrm{critic}}$, learning rate $\eta_{\mathrm{critic}}$ and localization parameter $k$.
        \STATE Sample $\{\mu_l, h_l\}_{l\in[B]}$ from the state-action visitation measure $\sigma_{\theta}$ \eqref{eqn:visitation_measure} of $\Pi^{\theta}$.
        \FOR {$s\in\Sc$}
            \STATE Compute the local gradient estimator $\widehat{g}_s(\theta(t))$ using \eqref{eqn:trunc_grad_nn_est}.
            \STATE Policy update: $\theta_s(t+1/2) \leftarrow \theta_s(t) + \eta_{\mathrm{actor}}\cdot\widehat{g}_s(\theta(t))$
            \STATE Projection onto the parameter space: $\theta_s(t+1) \leftarrow \argmin_{\theta \in \mathcal{B}^{\text{actor}}_s}\|\theta-\theta_s(t+1/2)\|_{2}$.
        \ENDFOR
    \ENDFOR
    \STATE \textbf{Output}: $\{\Pi^{\theta(t)}\}_{t\in[T_{\text{actor}}]}$.
\end{algorithmic}
\end{algorithm}

{\color{red}

}

\section{Convergence of the Critic and Actor Updates}
We now establish the global convergence  for {\DECAC} proposed in Section \ref{sec:algorithm}.

\paragraph{Convergence of the Critic Update.}
The convergence of the decentralized neural critic update in Algorithm \ref{DNTD} relies on the following assumptions.

\begin{Assumption}(Action-Value Function Class)\label{ass:F_infty}For each $s\in\Sc$, $k\in\N$, define
\begin{equation*}
   \mathcal{F}^{s,k}_{R, \infty}=\left\{f(\zeta^k_s)=Q_{s}(\zeta^k_s;\omega_s(0))+\int \mathds{1}\left\{v^{\top} \zeta^k_s>0\right\} \cdot (\zeta^k_s)^{\top} \iota(v)\, d\mu(v):\|\iota(v)\|_{\infty} \leq R\right\},
\end{equation*}
with $\mu: \R^{d_{\zeta^k_s}}\to\R$  the density function of Gaussian distribution $N(0, I_{d_{\zeta^k_s}} / d_{\zeta^k_s})$ and $Q_{s}(\zeta_s^k;\omega_s(0))$ the two-layer neural network under the initial parameter $\omega_s(0)$. We assume that $\widehat{Q}_s^{\Pi^{\theta}} \in \mathcal{F}^{s,k}_{R, \infty}$.
\end{Assumption}

\begin{Assumption}(Regularity of $\nu_\theta$ and $\sigma_\theta$)\label{ass:reg_nu}
    There exists a universal constant $c_{0}>0$ such that for any policy $\Pi^\theta$, any $\alpha \geq 0$, and any $v \in \mathbb{R}^{d_\zeta}$ with $\|v\|_{2}=1$, the stationary distribution $\nu_\theta$ and the state visitation measure $\sigma_\theta$ satisfy
\begin{equation*}
    \mathbb{P}_{\zeta \sim \nu_\theta}\left(\left|v^{\top} \zeta\right| \leq \alpha\right) \leq c_{0} \cdot \alpha,\quad \mathbb{P}_{\zeta \sim \sigma_\theta}\left(\left|v^{\top} \zeta\right| \leq \alpha\right) \leq c_{0} \cdot \alpha.
\end{equation*}
\end{Assumption}

\begin{Remark}
Both Assumption \ref{ass:F_infty} and Assumption \ref{ass:reg_nu} are similar to the standard assumptions in the analysis of single-agent neural actor-critic algorithms (\citet{cai2019neural,liu2019neural,wang2019neural,cayci2021sample}). 

In particular,  Assumption \ref{ass:F_infty} is a regularity condition for $\widehat{Q}_s^{\Pi^{\theta}}$ in \eqref{eqn:trunc_Q}. Here $\mathcal{F}^{s,k}_{R, \infty}$ is a subset of the reproducing kernel Hilbert space (RKHS) induced by the random feature $\mathds{1}\left\{v^{\top} \zeta^k_s>0\right\} \cdot (\zeta^k_s)$ with $v \sim N(0, I_{d_{{\zeta^k_s}}} / d_{{\zeta^k_s}})$ up to the shift of $Q_{s}(\zeta^k_s;\omega_s(0))$ (\citet{rahimi2008uniform}). 
This RKHS is dense in the space of continuous functions on any compact set (\citet{micchelli2006universal,Ji2020Neural}). (See also Section \ref{subsec:critic_notation} for details of the connection between $\mathcal{F}^{s,k}_{R, \infty}$ and the linearizations of two-layer neural networks \eqref{eqn:local_nn}).

Assumption \ref{ass:reg_nu} holds when $\sigma_\theta$ and $\nu_\theta$ have uniformly upper bounded probability densities (\citet{cai2019neural}).
\end{Remark}

\begin{Theorem}(Convergence of Critic Update)\label{thm:critic_conv}
Assume Assumptions \ref{ass:F_infty} and \ref{ass:reg_nu}.
Set $T_{\mathrm{critic}}=\Omega(M)$ and $\eta_{\mathrm{critic}}=\min \{(1-\gamma)/8,(T_{\mathrm{critic}})^{-1/2}\}$ in Algorithm \ref{DNTD}. Then $Q_s(\,\cdot\,;\bar{\omega}_s)$ generated by Algorithm \ref{DNTD} satisfies
\begin{equation}\label{eqn:Q_conv_bound}
    \mathbb{E}_{{\rm{init}}}\left[\left\|Q_s(\,\cdot\,;\bar{\omega}_s) - Q^{\Pi^{\theta}}_s\left(\cdot\right)\right\|^2_{L^2(\nu_\theta)}\right] \leq \Oc\left(\frac{R^3d^{3/2}_{\zeta^k_s}}{M^{1/2}} + \frac{R^{5/2}d_{\zeta^k_s}^{5/4}}{M^{1/4}} + \frac{r_\text{max}^2\gamma^{k+1}}{(1-\gamma)^2}\right),
\end{equation}
where $\|\,f\,\|_{L^2(\nu_\theta)}:=\left(\E_{\zeta\sim\nu_\theta}[f(\zeta)^2]\right)^{1/2}$, and the expectation \reff{eqn:Q_conv_bound} is taken with respect to the random initialization.
\end{Theorem}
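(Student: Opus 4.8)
The plan is to adapt the global-convergence analysis of neural temporal-difference learning (\citet{cai2019neural}) to the \emph{localized} critic recursion \eqref{eqn:critic_update}--\eqref{eqn:critic_update_3}, and to absorb the mismatch between the truncated input $\zeta^k_s$ and the full pair $(\mu,h)$ into the exponential-decay estimate of Section \ref{subsec:exponential_decay}. First I would split, by the triangle inequality,
\[
\left\|Q_s(\cdot\,;\bar\omega_s)-Q^{\Pi^{\theta}}_s\right\|_{L^2(\nu_\theta)}\le \left\|Q_s(\cdot\,;\bar\omega_s)-\widehat Q^{\Pi^{\theta}}_s\right\|_{L^2(\nu_\theta)}+\left\|\widehat Q^{\Pi^{\theta}}_s-Q^{\Pi^{\theta}}_s\right\|_{L^2(\nu_\theta)}.
\]
The second term is controlled directly by Lemma \ref{lemma:exp_decay_Q} and the localization bound \eqref{eq:exponentialQhat}: it is at most $\tfrac{r_{\max}}{1-\gamma}(\sqrt\gamma)^{k+1}$, which after squaring yields the term $\tfrac{r_{\max}^2\gamma^{k+1}}{(1-\gamma)^2}$ in \eqref{eqn:Q_conv_bound}. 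It then remains to bound the first term, i.e.\ the error of Algorithm \ref{DNTD} relative to its localized target $\widehat Q^{\Pi^{\theta}}_s$, which by Assumption \ref{ass:F_infty} lies in the RKHS-type class $\mathcal{F}^{s,k}_{R,\infty}$.

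\textbf{Linearization of the critic network.} I would introduce the local linearization of the two-layer network around its initialization,
\[
\widehat Q^{\,0}_s(\zeta^k_s;\omega):=Q_s(\zeta^k_s;\omega_s(0))+\phi_{\omega_s(0)}(\zeta^k_s)^{\top}(\omega-\omega_s(0)),
\]
and establish two uniform estimates over the projection ball $\mathcal{B}^{\mathrm{critic}}_s$: a bound on $\|Q_s(\cdot\,;\omega)-\widehat Q^{\,0}_s(\cdot\,;\omega)\|_{L^2(\nu_\theta)}$ of order $R^{3/2}d_{\zeta^k_s}^{3/4}M^{-1/4}$ in expectation over the random initialization, and a companion bound for the feature difference $\phi_\omega-\phi_{\omega_s(0)}$. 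Both follow from the stability of ReLU activation patterns under $\mathcal{O}(R/\sqrt M)$ weight perturbations: Assumption \ref{ass:reg_nu}, i.e.\ anti-concentration of $\nu_\theta$ near the activation hyperplanes, forces only an $\mathcal{O}(R/\sqrt M)$ fraction of the $M$ neurons to flip, so the nonlinearity behaves like its linearization to that order. Squaring and tracking the $R$ and $d_{\zeta^k_s}$ dependence is precisely what produces the first two terms of \eqref{eqn:Q_conv_bound}.

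\textbf{Stochastic approximation in the linearized regime and assembly.} On $\widehat Q^{\,0}_s$ the recursion \eqref{eqn:critic_update}--\eqref{eqn:critic_update_3} is a projected stochastic semigradient (TD) iteration with Polyak--Ruppert averaging. Let $\omega_s^{\ast}\in\mathcal{B}^{\mathrm{critic}}_s$ be the parameter whose linearization solves the mean-path projected Bellman equation in the linearized class, its existence and membership in the radius-$R/\sqrt M$ ball coming from translating Assumption \ref{ass:F_infty} to finite width. The crucial ingredient is the one-point monotonicity of the mean-path TD direction: using Assumption \ref{ass:reg_nu} and $\gamma\in(0,1)$, the expected update direction at $\omega$ has inner product with $\omega-\omega_s^{\ast}$ bounded below by $(1-\gamma)\|\widehat Q^{\,0}_s(\cdot\,;\omega)-\widehat Q^{\,0}_s(\cdot\,;\omega_s^{\ast})\|_{L^2(\nu_\theta)}^2$, up to the linearization bias of the previous step. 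Since the per-step update is bounded (features have norm $\mathcal{O}(1)$, rewards are at most $r_{\max}$), the standard analysis of linear TD (\citet{bhandari2018finite}) gives $\mathbb{E}_{\mathrm{init}}[\|\widehat Q^{\,0}_s(\cdot\,;\bar\omega_s)-\widehat Q^{\,0}_s(\cdot\,;\omega_s^{\ast})\|_{L^2(\nu_\theta)}^2]=\mathcal{O}(1/\sqrt{T_{\mathrm{critic}}})$ for $\eta_{\mathrm{critic}}\asymp T_{\mathrm{critic}}^{-1/2}$. Finally, $\widehat Q^{\Pi^{\theta}}_s$ satisfies the localized Bellman equation only up to an error of order $\tfrac{r_{\max}}{1-\gamma}(\sqrt\gamma)^{k+1}$ --- because $Q^{\Pi^{\theta}}_s$ satisfies the exact Bellman equation \eqref{eqn:bellman_op} and $\widehat Q^{\Pi^{\theta}}_s$ is $(\sqrt\gamma)^{k+1}$-close to it (Lemma \ref{lemma:exp_decay_Q} and \eqref{eq:exponentialQhat}) --- so $\widehat Q^{\,0}_s(\cdot\,;\omega_s^{\ast})$ is within the corresponding error of $\widehat Q^{\Pi^{\theta}}_s$. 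Substituting $T_{\mathrm{critic}}=\Omega(M)$ converts the $T_{\mathrm{critic}}^{-1/2}$ term into $M^{-1/2}$, and collecting the three sources of error --- linearization, stochastic approximation, truncation --- yields \eqref{eqn:Q_conv_bound}.

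\textbf{Main obstacle.} The principal difficulty is that TD is not a genuine stochastic gradient method, so the convergence step cannot rely on a descent lemma; it must instead exploit the $L^2(\nu_\theta)$-contraction of the population Bellman operator, and the linearization bias must be propagated through the stochastic-approximation recursion without degrading the rate. A related subtlety, specific to the decentralized setting, is that the backup in $\delta_{s,t}$ is performed on the $k$-hop truncated input $\zeta^k_s$ rather than on the full $(\mu,h)$, so the localized TD fixed point is only an approximate fixed point of the true team Bellman operator \eqref{eqn:bellman_op}; showing that this gap is exactly of order $\tfrac{r_{\max}}{1-\gamma}(\sqrt\gamma)^{k+1}$ is where the exponential-decay property is invoked a second time, and the bookkeeping there is the most delicate part of the argument.
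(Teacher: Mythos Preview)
Your proposal is correct and mirrors the paper's proof: both combine the neural-TD analysis of \citet{cai2019neural} (linearization plus one-point monotonicity from the $L^2(\nu_\theta)$-contraction of the Bellman operator) with the exponential-decay bound \eqref{eq:exponentialQhat} to absorb the truncation error. The only cosmetic difference is that the paper pivots its triangle inequality around the projected-Bellman fixed point $Q_0(\cdot;\omega^*)\in\mathcal{F}_{R,M}$ rather than around $\widehat Q^{\Pi^\theta}_s$, and then closes the loop with a self-bounding estimate using the $\gamma$-contraction of $\text{Proj}_{\mathcal{F}_{R,M}}\mathcal{T}^\theta_s$; your route arrives at the same bound.
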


Theorem \ref{thm:critic_conv} indicates the trade-off between the {approximation-optimization} error and the localization error.
The first two terms in \eqref{eqn:Q_conv_bound} correspond to the neural network approximation-optimization error, similar to  the single-agent case (\citet{cai2019neural,cayci2021sample}).
This approximation-optimization error decreases when the width of the hidden layer $M$ increases.
Meanwhile, the last term in \eqref{eqn:Q_conv_bound} represents the additional error from using the localized information in \eqref{eqn:critic_update},  unique for the mean-field MARL case.
This localization error and $\gamma^k$  decrease as the number of truncated neighborhood $k$ increases, with more information from a larger neighborhood used in the update.  However, the input dimension $d_{\zeta^k_s}$ and the approximation-optimization error will increase  if the dimension of the  problem increases.

In particular, for a relatively sparse network on $\Sc$,  one can choose $k\ll |\Sc|$ hence $d_{\zeta^k_s}\ll d_{\zeta}$, and Theorem \ref{thm:critic_conv} indicates the superior performance of the localized training scheme in {efficiency} over directly approximating the centralized Q-function.

Proof of Theorem \ref{thm:critic_conv} is presented in Section \ref{app:conv_critic}.



\paragraph{Convergence of the Actor Update.} 

This section establishes the  global convergence of the actor update.
The convergence analysis  consists of two steps. The first step proves the convergence to a stationary point $\widetilde{\theta}$; the second
step controls the gap between the stationary point $\widetilde{\theta}$ and the optimality $\theta^*$ in the overparametrization regime.
The convergence is built under the following assumptions and definition.

\begin{Assumption}(Variance Upper Bound)\label{ass:variance_bound}
    For every $t\in[T_\text{actor}]$ and  $s\in\Sc$, denote $\xi_s(t)=\widehat{g}_s(\theta(t))-\mathbb{E}\left[\widehat{g}_s(\theta(t))\right]$ with $\widehat{g}_s(\theta(t))$ defined in \eqref{eqn:trunc_grad_nn_est}.  Assume there exists $\Sigma>0$ such that $\E\left[\|\xi_s(t)\|_2^2\right]\leq\tau^2\Sigma^2/B$. Here the expectations are taken over  $\sigma_{\theta(t)}$ given $\{\bar{\omega}_s\}_{s\in\Sc}$.
\end{Assumption}

\begin{Assumption}(Regularity Condition on $\sigma_\theta$ and $\nu_\theta$)\label{ass:reg_measure}
There exists an absolute constant $D>0$ such that for every $\Pi^\theta$, the stationary distribution $\nu_\theta$ and the state-action visitation measure $\sigma_\theta$ satisfy
\begin{equation*}
    \left\{\mathbb{E}_{\nu_\theta}\left[\left({\mathrm{d}\sigma_\theta}/{\mathrm{d} \nu_\theta}(\mu,h)\right)^{2}\right]\right\} \leq D^2,
\end{equation*}
where ${\mathrm{d}\sigma_\theta}/{\mathrm{d} \nu_\theta}$ is the Radon-Nikodym derivative of $\sigma_\theta$ with respect to $\nu_\theta$.
\end{Assumption}

\begin{Assumption}(Lipschitz Continuous Policy Gradient)\label{ass:lip_grad}
There exists an absolute constant $L>0$, such that $\nabla_{\theta} J(\theta)$ is $L$-Lipschitz continuous with respect to $\theta$, i.e., for all $\theta_1$, $\theta_2$,
$$\left\|\nabla_{\theta} J(\theta_1) - \nabla_{\theta} J(\theta_2)\right\|_2\leq L\cdot\left\|\theta_1-\theta_2\right\|_2.$$
\end{Assumption}

\begin{Definition}\label{def:station_point}
     $\widetilde{\theta}\in\Bc^{\text{actor}}$ is called a \textit{stationary point} of $J(\theta)$ if for all $\widetilde{\theta}\in\Bc^{\text{actor}}$,
    \begin{equation}
        \nabla_\theta J(\widetilde{\theta})^{\top}(\theta-\widetilde{\theta})\leq 0.
    \end{equation}
\end{Definition}

\begin{Assumption}(Policy Function Class)\label{ass:F_infty_policy}  Define
a function class
\begin{align*}\label{eqn:F_infty_policy}
\mathcal{F}_{R, \infty}=\Bigg\{f(\zeta)=\sum_{s\in\Sc}\Bigg[\phi_{{\theta}_s(0)}(\zeta_s)^\top{\theta}_s(0)+\int \mathds{1}\left\{v^{\top} \zeta_s>0\right\} \cdot (\zeta_s)^{\top} \iota(v)\, d\mu(v)\Bigg]:\|\iota(v)\|_{\infty} \leq R\Bigg\}
\end{align*}
where $\mu: \R^{d_{\zeta_s}}\to\R$ is the density function of the Gaussian distribution $N\left(0, I_{d_{\zeta_s}} / d_{\zeta_s}\right)$ and $\theta(0)$ is the initial parameter. For any stationary point $\widetilde{\theta}$, define the function
\begin{equation*}\label{eqn:u_opt}
    u_{\widetilde{\theta}}(\mu,h) := \frac{d\sigma_{\theta^*}}{d\sigma_{\widetilde\theta}}(\zeta)-\frac{d\bar\sigma_{\theta^*}}{d\bar\sigma_{\widetilde\theta}}(\mu)+\sum_{s\in\Sc}\phi_{\widetilde{\theta}_s}(\zeta_s)^\top\widetilde{\theta}_s,
\end{equation*}
with $\bar\sigma_\theta$ the state visitation measure under policy $\Pi^\theta$, and $\frac{\mathrm{d}\sigma_{\theta^*}}{\mathrm{d}\sigma_{\widetilde\theta}}$,$\frac{\mathrm{d}\bar\sigma_{\theta^*}}{\mathrm{d}\bar\sigma_{\widetilde\theta}}$  the Radon-Nikodym derivatives between corresponding measures. We assume that $u_{\widetilde{\theta}}\in\Fc_{R,\infty}$ for any stationary point $\widetilde{\theta}$.
\end{Assumption}

A few remarks are in place for these Assumption \ref{ass:variance_bound} - Assumption \ref{ass:F_infty_policy}.

\paragraph{Remark.}
All these assumptions are counterparts of  standard assumption in the analysis of single-agent policy gradient method (\citet{pirotta2015policy}, \citet{xu2019sample}, \citet{xu2020improved}, \citet{ZKZB2020}, \citet{wang2019neural}).

In particular, Assumption \ref{ass:variance_bound} and Assumption \ref{ass:reg_measure} hold if the Markov chain \eqref{eqn:P_N} mixes sufficiently fast, and the critic $Q_s(\,\cdot\,;\omega_s)$ has an upper-bounded second moment under $\sigma_{\theta(t)}$ (\citet{wang2019neural}). Note that different from Assumption \ref{ass:reg_nu}, where regularity conditions are imposed separately on $\nu_\theta$ and $\sigma_\theta$, Assumption \ref{ass:reg_measure} imposes the regularity condition directly on the Radon-Nikodym derivative of $\sigma_\theta$ with respect to $\nu_\theta$. This allows the change of measures in the analysis of Theorem \ref{thm:actor_conv}. In general, Assumption \ref{ass:reg_nu} does not necessarily imply Assumption \ref{ass:reg_measure}.

Assumption \ref{ass:lip_grad}  holds when the transition probability and the reward function are both Lipschitz continuous with respect to their inputs (\citet{pirotta2015policy}), or when the reward is uniformly bounded and the score function $\nabla_\theta \Pi^\theta$  is uniformly bounded and Lipschitz continuous with respect to $\theta$ (\citet{ZKZB2020}).

As for Assumption \ref{ass:F_infty_policy}, we first emphasize that $u_{\widetilde{\theta}}(\mu,h)$ is a key element in the proof of Theorem \ref{thm:actor_conv}. More specifically, this assumption is motivated by the well-known Performance Difference Lemma (\citet{KL2002}) in order to characterize the optimality gap of a stationary point $\widetilde{\theta}$.
In particular, it  guarantees that $u_{\widetilde{\theta}}$ can be decomposed into a sum of local functions depending on $\zeta_s$, and that each local function lies in a rich RKHS (see the discussion after Assumption \ref{ass:F_infty}). 

With all these assumptions, we now establish the rate of convergence for Algorithm \ref{DNPG}.

\begin{Theorem}\label{thm:actor_conv}
Assume Assumptions \ref{ass:F_infty} - \ref{ass:F_infty_policy} . Set $
    T_{\mathrm{critic}}=\Omega(M)$, $\eta_{\mathrm{critic}}=\min \{(1-\gamma) / 8,(T_{\mathrm{critic}})^{-1/2}\}$, $\eta_{\mathrm{actor}}=(T_{\mathrm{actor}})^{-1/2}, R=\tau=1$,
    $M=\Omega\left((f(k)|\Ac|)^5(T_{\mathrm{actor}})^8\right)$,
    $\gamma\leq (T_{\mathrm{actor}})^{-2/k}$,
with $f(k):=\max_{s\in\Sc}|\Nc^k_s|$ the size of the largest $k$-neighborhood in the graph $(\Sc, \Ec)$.
Then, the output $\{\theta(t)\}_{t\in[T_{\mathrm{actor}}]}$ of Algorithm \ref{DNPG} satisfies
\begin{equation}
    \min_{t\in[T_{\mathrm{actor}}]}\E\left[J(\theta^*)-J(\theta(t))\right]\leq\Oc\left(|\Sc|^{1/2}B^{-1/2} + |\Sc||\Ac|^{1/4}\left(\gamma^{k/8}+(T_{\mathrm{actor}})^{-1/4}\right)\right).
\end{equation}
\end{Theorem}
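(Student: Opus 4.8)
The plan is the by-now-standard two-stage analysis of over-parameterized neural actor-critic (cf.\ \citet{wang2019neural}), adapted to the localized, team-decomposed setting. Stage one shows that the projected stochastic ascent iterates $\{\theta(t)\}_{t\in[T_{\mathrm{actor}}]}$ reach, in an averaged sense, a stationary point $\widetilde\theta$ in the sense of Definition~\ref{def:station_point}; stage two bounds the optimality gap $J(\theta^{*})-J(\widetilde\theta)$ at any stationary point. The glue between the two stages is a decomposition of the error of the local gradient estimator $\widehat g_s(\theta(t))$ in \eqref{eqn:trunc_grad_nn_est} against the true partial gradient $\nabla_{\theta_s}J(\theta(t))$ into three pieces: (i) the Monte-Carlo fluctuation $\xi_s(t)=\widehat g_s(\theta(t))-\E[\widehat g_s(\theta(t))]$, whose second moment is $\le\tau^{2}\Sigma^{2}/B$ by Assumption~\ref{ass:variance_bound}; (ii) the critic-estimation error obtained by replacing $\widehat Q_y^{\Pi^{\theta(t)}}$ with the neural critic $Q_y(\cdot\,;\bar\omega_y)$ for each $y\in\Nc^k_s$, whose $L^{2}(\sigma_{\theta})$ magnitude is at most $\sqrt{f(k)}$ times the per-team critic error, itself controlled by Theorem~\ref{thm:critic_conv} after a change of measure from $\nu_\theta$ to $\sigma_\theta$ using Assumption~\ref{ass:reg_measure}; and (iii) the localization bias $\|g_s(\theta(t))-\nabla_{\theta_s}J(\theta(t))\|\le c_0\tau|\Sc|(1-\gamma)^{-1}\gamma^{(k+1)/2}$ from Lemma~\ref{lemma:local_grad_fisher}. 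Throughout, $\|\Phi(\theta,s,\mu,h)\|_2\le 2$ (Lemma~\ref{lemma:local_grad_fisher}) keeps every quantity bounded.

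Stage one then proceeds by $L$-smoothness of $J$ (Assumption~\ref{ass:lip_grad}) together with non-expansiveness of the Euclidean projection onto the convex balls $\Bc^{\mathrm{actor}}_s$. The usual one-step inequality for projected gradient ascent with a biased, noisy gradient, summed over $t$ and telescoped with $\eta_{\mathrm{actor}}=(T_{\mathrm{actor}})^{-1/2}$, yields
\begin{equation*}
\min_{t\in[T_{\mathrm{actor}}]}\E\big[\|\mathcal G_{\eta_{\mathrm{actor}}}(\theta(t))\|_2^2\big]\le\Oc\!\Big(\tfrac{1}{\sqrt{T_{\mathrm{actor}}}}+\tfrac{\tau^2\Sigma^2}{B}+\varepsilon_{\mathrm{critic}}+\tfrac{\tau^2|\Sc|^2}{(1-\gamma)^2}\gamma^{k+1}\Big),
\end{equation*}
where $\mathcal G_{\eta_{\mathrm{actor}}}$ is the gradient mapping and $\varepsilon_{\mathrm{critic}}$ collects the $f(k)$-scaled $\nu_\theta\!\to\!\sigma_\theta$-transferred critic error of Theorem~\ref{thm:critic_conv}; in particular some iterate is close to a stationary point $\widetilde\theta$.

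Stage two invokes the Performance Difference Lemma: $J(\theta^{*})-J(\widetilde\theta)=(1-\gamma)^{-1}\E_{\sigma_{\theta^{*}}}[A^{\widetilde\theta}(\mu,h)]$. Using the energy-based parameterization \eqref{eqn:energy_policy} one rewrites the advantage $A^{\widetilde\theta}$ through $\nabla_\theta\log\Pi^{\widetilde\theta}$ and the true localized Q-functions, and introduces the comparator $u_{\widetilde\theta}$ of Assumption~\ref{ass:F_infty_policy}, so that $J(\theta^{*})-J(\widetilde\theta)$ equals $(1-\gamma)^{-1}$ times an inner product of the form $\langle\nabla_\theta J(\widetilde\theta),\,\theta-\widetilde\theta\rangle$ plus residuals: the RKHS projection error of $u_{\widetilde\theta}$ onto the linearized two-layer network class (made $\Oc(M^{-1/4}+M^{-1/2})$-small by over-parameterization with $R=\tau=1$), the critic and localization errors carried over from the gradient decomposition, and a change-of-measure factor bounded by $D$. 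The stationarity inequality $\nabla_\theta J(\widetilde\theta)^{\top}(\theta-\widetilde\theta)\le0$ annihilates the leading inner product. Plugging in $T_{\mathrm{critic}}=\Omega(M)$, $M=\Omega((f(k)|\Ac|)^{5}(T_{\mathrm{actor}})^{8})$ and $\gamma\le(T_{\mathrm{actor}})^{-2/k}$ — so that $\gamma^{(k+1)/2}\le(T_{\mathrm{actor}})^{-1}$, $\gamma^{k/8}$ dominates the remaining localization contributions, and the width kills every $M^{-1/4}$-type term — and summing the per-team bounds over $s\in\Sc$ with the $\sqrt{f(k)}$ factors from the $k$-hop sums, one arrives at $\min_t\E[J(\theta^{*})-J(\theta(t))]\le\Oc(|\Sc|^{1/2}B^{-1/2}+|\Sc||\Ac|^{1/4}(\gamma^{k/8}+(T_{\mathrm{actor}})^{-1/4}))$.

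The main obstacle is Stage two, specifically making the Performance Difference Lemma quantitative: one must (a) express the optimality gap so that exactly the term killed by the stationarity condition is isolated, (b) control the error of approximating the comparator $u_{\widetilde\theta}$ by linearizations of the two-layer networks \emph{uniformly over all stationary points} — this is where Assumption~\ref{ass:F_infty_policy} and the richness of the RKHS enter — and (c) propagate the localization truncation (replacing $Q^{\Pi}_s$ by $\widehat Q^{\Pi}_s$ inside the advantage) without the $|\Sc|$ and $f(k)$ dependence blowing up; the supporting bookkeeping of transferring the critic's $\nu_\theta$-error to a $\sigma_\theta$-error via Assumption~\ref{ass:reg_measure} while summing over $f(k)$ neighbours is routine but must be done carefully to land the stated exponents.
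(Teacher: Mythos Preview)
Your two-stage plan matches the paper's proof: projected-SGD convergence to approximate stationarity (the paper's Theorem~\ref{thm:actor_stationary}), then Performance-Difference-Lemma plus RKHS approximation of the comparator $u_{\widetilde\theta}$ for the optimality gap (Lemma~\ref{lemma:stat_opt}, Proposition~\ref{prop:opt_stat}).

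Two bookkeeping points where the paper differs from your sketch and which are worth getting right. First, in Stage~1 the paper does \emph{not} keep your term~(iii) separate: it extends the sum $\sum_{y\in\Nc^k_s}$ in $\E[\widehat g_s]$ to $\sum_{y\in\Sc}$ using that out-of-neighborhood terms integrate to zero against $\nabla_{\theta_s}\log\Pi^{\theta_s}$, and then compares directly with the full $Q_y^{\Pi^\theta}$; the localization error is thus absorbed into $\epsilon_Q$ and handled by the $\gamma^{k+1}$ term already inside Theorem~\ref{thm:critic_conv}. Consequently the paper's Stage~2 carries \emph{no} critic or localization error --- those live entirely in Stage~1's $\epsilon_Q$. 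Second, the bridge between stages is not ``pick a nearby exact stationary point $\widetilde\theta$ and invoke $\nabla_\theta J(\widetilde\theta)^\top(\theta-\widetilde\theta)\le 0$''; Stage~1 only gives smallness of the gradient mapping $\rho(t)$, not existence of a nearby stationary point. The paper instead uses the quantitative replacement $\nabla_\theta J(\theta(t))^\top(\theta-\theta(t))\le 2(R+\eta r_{\max}/(1-\gamma))\|\rho(t)\|_2$ for all $\theta\in\Bc$, which lets one rerun the Lemma~\ref{lemma:stat_opt} argument at each iterate $\theta(t)$ with an additive $\|\rho(t)\|_2$ term (see \eqref{eqn:trick}--\eqref{eqn:combine_key}), then takes $\min_t$ and plugs in Theorem~\ref{thm:actor_stationary}.
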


The error $\Oc(\gamma^{k/8}|\Sc||\Ac|^{1/4})$ in Theorem \ref{thm:actor_conv},  coming from the localized training, decays exponentially fast as $k$ increases and is negligible  with a careful choice of $k$. According to Theorem \ref{thm:actor_conv}, Algorithm \ref{DNPG} converges  at rate $T_{\mathrm{actor}}^{-1/4}$ with sufficiently large width $M$ and batch size $B$. Technically,  $\{\theta_s(t)\}_{s\in\Sc}$ in Algorithm \ref{DNPG} are updated in parallel and our analysis extends the single agent actor-critic in \citet{cai2019neural} to the multi-agent decentralized case. 

Detailed proof is provided in Section \ref{app:conv_actor}.

\bibliographystyle{informs2014}
\bibliography{refs}

\begin{thebibliography}{66}
\providecommand{\natexlab}[1]{#1}
\providecommand{\url}[1]{\texttt{#1}}
\providecommand{\urlprefix}{URL }

\bibitem[{Agarwal et~al.(2021)Agarwal, Kakade, Lee, \protect\BIBand{}
  Mahajan}]{agarwal2019theory}
Agarwal A, Kakade SM, Lee JD, Mahajan G (2021) On the theory of policy gradient
  methods: Optimality, approximation, and distribution shift. \emph{Journal of
  Machine Learning Research} 22(98):1--76.

\bibitem[{A{\"\i}d et~al.(2021)A{\"\i}d, Dumitrescu, \protect\BIBand{}
  Tankov}]{aid2020entry}
A{\"\i}d R, Dumitrescu R, Tankov P (2021) The entry and exit game in the
  electricity markets: a mean-field game approach. \emph{Journal of Dynamics \&
  Games} 8(4):331.

\bibitem[{Allen-Zhu et~al.(2019)Allen-Zhu, Li, \protect\BIBand{}
  Song}]{ZLS2019}
Allen-Zhu Z, Li Y, Song Z (2019) A convergence theory for deep learning via
  over-parameterization. \emph{International Conference on Machine Learning},
  242--252 (PMLR).

\bibitem[{Bhandari et~al.(2018)Bhandari, Russo, \protect\BIBand{}
  Singal}]{bhandari2018finite}
Bhandari J, Russo D, Singal R (2018) A finite time analysis of temporal
  difference learning with linear function approximation. \emph{Conference on
  Learning Theory}, 1691--1692 (PMLR).

\bibitem[{Cabannes et~al.(2021)Cabannes, Lauriere, Perolat, Marinier, Girgin,
  Perrin, Pietquin, Bayen, Goubault, \protect\BIBand{}
  Elie}]{cabannes2021solving}
Cabannes T, Lauriere M, Perolat J, Marinier R, Girgin S, Perrin S, Pietquin O,
  Bayen AM, Goubault E, Elie R (2021) Solving n-player dynamic routing games
  with congestion: a mean-field approach. \emph{arXiv preprint
  arXiv:2110.11943} .

\bibitem[{Cai et~al.(2019)Cai, Yang, Lee, \protect\BIBand{}
  Wang}]{cai2019neural}
Cai Q, Yang Z, Lee JD, Wang Z (2019) Neural temporal-difference learning
  converges to global optima. \emph{Advances in Neural Information Processing
  Systems}, volume~32, 11315--11326.

\bibitem[{Calderone \protect\BIBand{} Sastry(2017)}]{calderone2017markov}
Calderone D, Sastry SS (2017) Markov decision process routing games.
  \emph{International Conference on Cyber-Physical Systems}, 273--280 (IEEE).

\bibitem[{Cao et~al.(2012)Cao, Yu, Ren, \protect\BIBand{}
  Chen}]{cao2012overview}
Cao Y, Yu W, Ren W, Chen G (2012) An overview of recent progress in the study
  of distributed multi-agent coordination. \emph{IEEE Transactions on
  Industrial informatics} 9(1):427--438.

\bibitem[{Carmona et~al.(2015)Carmona, Fouque, \protect\BIBand{}
  Sun}]{carmona2013mean}
Carmona R, Fouque JP, Sun LH (2015) Mean-field games and systemic risk.
  \emph{Communications in Mathematical Sciences} 13(4):911--933.

\bibitem[{Carmona et~al.(2019{\natexlab{a}})Carmona, Lauri{\`e}re,
  \protect\BIBand{} Tan}]{carmona2019linear}
Carmona R, Lauri{\`e}re M, Tan Z (2019{\natexlab{a}}) Linear-quadratic
  mean-field reinforcement learning: convergence of policy gradient methods.
  \emph{arXiv preprint arXiv:1910.04295} .

\bibitem[{Carmona et~al.(2019{\natexlab{b}})Carmona, Lauri{\`e}re,
  \protect\BIBand{} Tan}]{carmona2019model}
Carmona R, Lauri{\`e}re M, Tan Z (2019{\natexlab{b}}) Model-free mean-field
  reinforcement learning: mean-field {M}{D}{P} and mean-field {Q}-learning.
  \emph{arXiv preprint arXiv:1910.12802} .

\bibitem[{Casgrain \protect\BIBand{} Jaimungal(2020)}]{casgrain2020mean}
Casgrain P, Jaimungal S (2020) Mean-field games with differing beliefs for
  algorithmic trading. \emph{Mathematical Finance} 30(3):995--1034.

\bibitem[{Cayci et~al.(2021)Cayci, Satpathi, He, \protect\BIBand{}
  Srikant}]{cayci2021sample}
Cayci S, Satpathi S, He N, Srikant R (2021) Sample complexity and
  overparameterization bounds for projection-free neural {T}{D} learning.
  \emph{arXiv preprint arXiv:2103.01391} .

\bibitem[{Chen et~al.(2021)Chen, Zhang, Giannakis, \protect\BIBand{}
  Basar}]{chen2018communication}
Chen T, Zhang K, Giannakis GB, Basar T (2021) Communication-efficient policy
  gradient methods for distributed reinforcement learning. \emph{IEEE
  Transactions on Control of Network Systems} .

\bibitem[{Dawson(1993)}]{Dawson1993}
Dawson D (1993) Measure-valued {M}arkov processes. \emph{{\'E}cole
  d'{\'e}t{\'e} de probabilit{\'e}s de Saint-Flour XXI-1991}, 1--260
  (Springer).

\bibitem[{El-Tantawy et~al.(2013)El-Tantawy, Abdulhai, \protect\BIBand{}
  Abdelgawad}]{EAA2013}
El-Tantawy S, Abdulhai B, Abdelgawad H (2013) Multi-agent reinforcement
  learning for integrated network of adaptive traffic signal controllers
  ({M}{A}{R}{L}{I}{N}-{A}{T}{S}{C}): {M}ethodology and large-scale application
  on downtown {T}oronto. \emph{IEEE Transactions on Intelligent Transportation
  Systems} 14(3):1140--1150.

\bibitem[{Foerster et~al.(2018)Foerster, Farquhar, Afouras, Nardelli,
  \protect\BIBand{} Whiteson}]{foerster2018counterfactual}
Foerster J, Farquhar G, Afouras T, Nardelli N, Whiteson S (2018) Counterfactual
  multi-agent policy gradients. \emph{AAAI Conference on Artificial
  Intelligence}, volume~32.

\bibitem[{Fu et~al.(2020)Fu, Yang, \protect\BIBand{} Wang}]{fu2020single}
Fu Z, Yang Z, Wang Z (2020) Single-timescale actor-critic provably finds
  globally optimal policy. \emph{International Conference on Learning
  Representations}.

\bibitem[{Gamarnik(2013)}]{Gamarnik2013}
Gamarnik D (2013) Correlation decay method for decision, optimization, and
  inference in large-scale networks. \emph{Theory Driven by Influential
  Applications}, 108--121 (INFORMS).

\bibitem[{Gamarnik et~al.(2014)Gamarnik, Goldberg, \protect\BIBand{}
  Weber}]{GGW2014}
Gamarnik D, Goldberg DA, Weber T (2014) Correlation decay in random decision
  networks. \emph{Mathematics of Operations Research} 39(2):229--261.

\bibitem[{Germain et~al.(2021)Germain, Pham, \protect\BIBand{}
  Warin}]{germain2021level}
Germain M, Pham H, Warin X (2021) A level-set approach to the control of
  state-constrained mckean-vlasov equations: application to renewable energy
  storage and portfolio selection. \emph{arXiv preprint arXiv:2112.11059} .

\bibitem[{Glorot \protect\BIBand{} Bengio(2010)}]{GB2010}
Glorot X, Bengio Y (2010) Understanding the difficulty of training deep
  feedforward neural networks. \emph{International Conference on Artificial
  Intelligence and Statistics}, 249--256.

\bibitem[{Gu et~al.(2019)Gu, Guo, Wei, \protect\BIBand{} Xu}]{gu2019dynamic}
Gu H, Guo X, Wei X, Xu R (2019) Dynamic programming principles for learning
  {M}{F}{C}s. \emph{arXiv preprint arXiv:1911.07314} .

\bibitem[{Gu et~al.(2021)Gu, Guo, Wei, \protect\BIBand{} Xu}]{gu2020mean}
Gu H, Guo X, Wei X, Xu R (2021) Mean-field controls with {Q}-learning for
  cooperative {M}{A}{R}{L}: convergence and complexity analysis. \emph{SIAM
  Journal on Mathematics of Data Science} 3(4):1168--1196.

\bibitem[{Gu{\'e}riau \protect\BIBand{} Dusparic(2018)}]{gueriau2018samod}
Gu{\'e}riau M, Dusparic I (2018) Samod: Shared autonomous mobility-on-demand
  using decentralized reinforcement learning. \emph{International Conference on
  Intelligent Transportation Systems}, 1558--1563 (IEEE).

\bibitem[{Guo et~al.(2019)Guo, Hu, Xu, \protect\BIBand{}
  Zhang}]{guo2019learning}
Guo X, Hu A, Xu R, Zhang J (2019) Learning mean-field games. \emph{Advances in
  Neural Information Processing Systems}, volume~32, 4966--4976.

\bibitem[{Hu \protect\BIBand{} Zariphopoulou(2021)}]{hu2021n}
Hu R, Zariphopoulou T (2021) {N}-player and mean-field games in
  {I}t{\^o}-diffusion markets with competitive or homophilous interaction.
  \emph{arXiv preprint arXiv:2106.00581} .

\bibitem[{H{\"u}ttenrauch et~al.(2017)H{\"u}ttenrauch, {\v{S}}o{\v{s}}i{\'c},
  \protect\BIBand{} Neumann}]{huttenrauch2017guided}
H{\"u}ttenrauch M, {\v{S}}o{\v{s}}i{\'c} A, Neumann G (2017) Guided deep
  reinforcement learning for swarm systems. \emph{arXiv preprint
  arXiv:1709.06011} .

\bibitem[{Iyer et~al.(2014)Iyer, Johari, \protect\BIBand{}
  Sundararajan}]{iyer2014mean}
Iyer K, Johari R, Sundararajan M (2014) Mean-field equilibria of dynamic
  auctions with learning. \emph{Management Science} 60(12):2949--2970.

\bibitem[{Ji et~al.(2020)Ji, Telgarsky, \protect\BIBand{} Xian}]{Ji2020Neural}
Ji Z, Telgarsky M, Xian R (2020) Neural tangent kernels, transportation
  mappings, and universal approximation. \emph{International Conference on
  Learning Representations}.

\bibitem[{Jin et~al.(2018)Jin, Song, Li, Gai, Wang, \protect\BIBand{}
  Zhang}]{JSLGWZ2018}
Jin J, Song C, Li H, Gai K, Wang J, Zhang W (2018) Real-time bidding with
  multi-agent reinforcement learning in display advertising. \emph{ACM
  International Conference on Information and Knowledge Management},
  2193--2201.

\bibitem[{Kakade \protect\BIBand{} Langford(2002)}]{KL2002}
Kakade S, Langford J (2002) Approximately optimal approximate reinforcement
  learning. \emph{International Conference on Machine Learning}, 267--274
  (PMLR).

\bibitem[{Konda \protect\BIBand{} Tsitsiklis(2000)}]{konda2000actor}
Konda VR, Tsitsiklis JN (2000) Actor-critic algorithms. \emph{Advances in
  Neural Information Processing Systems}, volume~12, 1008--1014.

\bibitem[{Lacker \protect\BIBand{} Zariphopoulou(2019)}]{lacker2019mean}
Lacker D, Zariphopoulou T (2019) Mean-field and n-agent games for optimal
  investment under relative performance criteria. \emph{Mathematical Finance}
  29(4):1003--1038.

\bibitem[{Li et~al.(2019)Li, Qin, Jiao, Yang, Wang, Wang, Wu, \protect\BIBand{}
  Ye}]{LQJYWWWY2019}
Li M, Qin Z, Jiao Y, Yang Y, Wang J, Wang C, Wu G, Ye J (2019) Efficient
  ridesharing order dispatching with mean-field multi-agent reinforcement
  learning. \emph{The World Wide Web Conference}, 983--994.

\bibitem[{Li et~al.(2021)Li, Tang, Zhang, \protect\BIBand{}
  Li}]{li2019distributed}
Li Y, Tang Y, Zhang R, Li N (2021) Distributed reinforcement learning for
  decentralized linear quadratic control: A derivative-free policy optimization
  approach. \emph{IEEE Transactions on Automatic Control} .

\bibitem[{Lin et~al.(2021)Lin, Qu, Huang, \protect\BIBand{}
  Wierman}]{lin2021multi}
Lin Y, Qu G, Huang L, Wierman A (2021) Multi-agent reinforcement learning in
  stochastic networked systems. \emph{Advances in Neural Information Processing
  Systems}, volume~34.

\bibitem[{Liu et~al.(2019{\natexlab{a}})Liu, Cai, Yang, \protect\BIBand{}
  Wang}]{liu2019neural}
Liu B, Cai Q, Yang Z, Wang Z (2019{\natexlab{a}}) Neural trust region/proximal
  policy optimization attains globally optimal policy. \emph{Advances in Neural
  Information Processing Systems}, volume~32, 10565--10576.

\bibitem[{Liu et~al.(2019{\natexlab{b}})Liu, Swaminathan, Agarwal,
  \protect\BIBand{} Brunskill}]{liu2019off}
Liu Y, Swaminathan A, Agarwal A, Brunskill E (2019{\natexlab{b}}) Off-policy
  policy gradient with stationary distribution correction. \emph{Conference on
  Uncertainty in Artificial Intelligence}, volume 115, 1180--1190 (PMLR).

\bibitem[{Lowe et~al.(2017)Lowe, Wu, Tamar, Harb, Pieter~Abbeel,
  \protect\BIBand{} Mordatch}]{lowe2017multi}
Lowe R, Wu YI, Tamar A, Harb J, Pieter~Abbeel O, Mordatch I (2017) Multi-agent
  actor-critic for mixed cooperative-competitive environments. \emph{Advances
  in Neural Information Processing Systems}, volume~30, 6382--6393.

\bibitem[{Micchelli et~al.(2006)Micchelli, Xu, \protect\BIBand{}
  Zhang}]{micchelli2006universal}
Micchelli CA, Xu Y, Zhang H (2006) Universal kernels. \emph{Journal of Machine
  Learning Research} 7(12):2651--2667.

\bibitem[{Motte \protect\BIBand{} Pham(2019)}]{motte2019mean}
Motte M, Pham H (2019) Mean-field markov decision processes with common noise
  and open-loop controls. \emph{arXiv preprint arXiv:1912.07883} .

\bibitem[{Pirotta et~al.(2015)Pirotta, Restelli, \protect\BIBand{}
  Bascetta}]{pirotta2015policy}
Pirotta M, Restelli M, Bascetta L (2015) Policy gradient in lipschitz {M}arkov
  decision processes. \emph{Machine Learning} 100(2):255--283.

\bibitem[{Qu et~al.(2020)Qu, Wierman, \protect\BIBand{} Li}]{qu2020scalable}
Qu G, Wierman A, Li N (2020) Scalable reinforcement learning of localized
  policies for multi-agent networked systems. \emph{Learning for Dynamics and
  Control}, 256--266 (PMLR).

\bibitem[{Rabbat \protect\BIBand{} Nowak(2004)}]{rabbat2004distributed}
Rabbat M, Nowak R (2004) Distributed optimization in sensor networks.
  \emph{International Symposium on Information Processing in Sensor Networks},
  20--27.

\bibitem[{Rahimi \protect\BIBand{} Recht(2008)}]{rahimi2008uniform}
Rahimi A, Recht B (2008) Uniform approximation of functions with random bases.
  \emph{Annual Allerton Conference on Communication, Control, and Computing},
  555--561 (IEEE).

\bibitem[{Rashid et~al.(2018)Rashid, Samvelyan, Schroeder, Farquhar, Foerster,
  \protect\BIBand{} Whiteson}]{rashid2018qmix}
Rashid T, Samvelyan M, Schroeder C, Farquhar G, Foerster J, Whiteson S (2018)
  Q{M}{I}{X}: Monotonic value function factorisation for deep multi-agent
  reinforcement learning. \emph{International Conference on Machine Learning},
  4295--4304 (PMLR).

\bibitem[{Shalev-Shwartz et~al.(2016)Shalev-Shwartz, Shammah, \protect\BIBand{}
  Shashua}]{SSS2016}
Shalev-Shwartz S, Shammah S, Shashua A (2016) Safe, multi-agent, reinforcement
  learning for autonomous driving. \emph{arXiv preprint arXiv:1610.03295} .

\bibitem[{Sra et~al.(2012)Sra, Nowozin, \protect\BIBand{}
  Wright}]{sra2012optimization}
Sra S, Nowozin S, Wright SJ (2012) \emph{Optimization for Machine Learning}
  (MIT Press).

\bibitem[{Sunehag et~al.(2018)Sunehag, Lever, Gruslys, Czarnecki, Zambaldi,
  Jaderberg, Lanctot, Sonnerat, Leibo, Tuyls, \protect\BIBand{}
  Thore}]{sunehag2017value}
Sunehag P, Lever G, Gruslys A, Czarnecki WM, Zambaldi V, Jaderberg M, Lanctot
  M, Sonnerat N, Leibo JZ, Tuyls K, Thore G (2018) Value-decomposition networks
  for cooperative multi-agent learning based on team reward.
  \emph{International Conference on Autonomous Agents and Multi-agent Systems},
  volume~3, 2085--2087.

\bibitem[{Sutton et~al.(2000)Sutton, McAllester, Singh, \protect\BIBand{}
  Mansour}]{sutton1999policy}
Sutton RS, McAllester DA, Singh SP, Mansour Y (2000) Policy gradient methods
  for reinforcement learning with function approximation. \emph{Advances in
  Neural Information Processing Systems}, volume~99, 1057--1063.

\bibitem[{Vadori et~al.(2020)Vadori, Ganesh, Reddy, \protect\BIBand{}
  Veloso}]{vadori2020calibration}
Vadori N, Ganesh S, Reddy P, Veloso M (2020) Calibration of shared equilibria
  in general sum partially observable markov games. \emph{Advances in Neural
  Information Processing Systems}, volume~33, 14118--14128.

\bibitem[{Wang et~al.(2020)Wang, Cai, Yang, \protect\BIBand{}
  Wang}]{wang2019neural}
Wang L, Cai Q, Yang Z, Wang Z (2020) Neural policy gradient methods: Global
  optimality and rates of convergence. \emph{International Conference on
  Learning Representations}.

\bibitem[{Xu et~al.(2019)Xu, Gao, \protect\BIBand{} Gu}]{xu2019sample}
Xu P, Gao F, Gu Q (2019) Sample efficient policy gradient methods with
  recursive variance reduction. \emph{International Conference on Learning
  Representations}.

\bibitem[{Xu et~al.(2020)Xu, Gao, \protect\BIBand{} Gu}]{xu2020improved}
Xu P, Gao F, Gu Q (2020) An improved convergence analysis of stochastic
  variance-reduced policy gradient. \emph{Uncertainty in Artificial
  Intelligence}, 541--551 (PMLR).

\bibitem[{Yang et~al.(2020{\natexlab{a}})Yang, Hao, Chen, Tang, Chen, Hu, Fan,
  \protect\BIBand{} Wei}]{yang2020q}
Yang Y, Hao J, Chen G, Tang H, Chen Y, Hu Y, Fan C, Wei Z (2020{\natexlab{a}})
  Q-value path decomposition for deep multiagent reinforcement learning.
  \emph{International Conference on Machine Learning}, 10706--10715 (PMLR).

\bibitem[{Yang et~al.(2020{\natexlab{b}})Yang, Wen, Wang, Chen, Shao, Mguni,
  \protect\BIBand{} Zhang}]{yang2020multi}
Yang Y, Wen Y, Wang J, Chen L, Shao K, Mguni D, Zhang W (2020{\natexlab{b}})
  Multi-agent determinantal {Q}-learning. \emph{International Conference on
  Machine Learning}, 10757--10766 (PMLR).

\bibitem[{You et~al.(2020)You, Li, Xu, Feng, Zhao, \protect\BIBand{}
  Yan}]{you2020toward}
You X, Li X, Xu Y, Feng H, Zhao J, Yan H (2020) Toward packet routing with
  fully distributed multiagent deep reinforcement learning. \emph{IEEE
  Transactions on Systems, Man, and Cybernetics: Systems} .

\bibitem[{Zhang et~al.(2020{\natexlab{a}})Zhang, Koppel, Zhu, \protect\BIBand{}
  Basar}]{ZKZB2020}
Zhang K, Koppel A, Zhu H, Basar T (2020{\natexlab{a}}) Global convergence of
  policy gradient methods to (almost) locally optimal policies. \emph{SIAM
  Journal on Control and Optimization} 58(6):3586--3612.

\bibitem[{Zhang et~al.(2020{\natexlab{b}})Zhang, Liu, Liu, Liu,
  \protect\BIBand{} Ba{\c{s}}ar}]{zhang2020distributed}
Zhang K, Liu Y, Liu J, Liu M, Ba{\c{s}}ar T (2020{\natexlab{b}}) Distributed
  learning of average belief over networks using sequential observations.
  \emph{Automatica} 115:108857.

\bibitem[{Zhang et~al.(2018{\natexlab{a}})Zhang, Yang, \protect\BIBand{}
  Basar}]{zhang2018networked2}
Zhang K, Yang Z, Basar T (2018{\natexlab{a}}) Networked multi-agent
  reinforcement learning in continuous spaces. \emph{Conference on Decision and
  Control}, 2771--2776 (IEEE).

\bibitem[{Zhang et~al.(2021{\natexlab{a}})Zhang, Yang, \protect\BIBand{}
  Ba{\c{s}}ar}]{zhang2019multi}
Zhang K, Yang Z, Ba{\c{s}}ar T (2021{\natexlab{a}}) Multi-agent reinforcement
  learning: A selective overview of theories and algorithms. \emph{Handbook of
  Reinforcement Learning and Control}, 321--384 (Springer).

\bibitem[{Zhang et~al.(2018{\natexlab{b}})Zhang, Yang, Liu, Zhang,
  \protect\BIBand{} Basar}]{zhang2018fully}
Zhang K, Yang Z, Liu H, Zhang T, Basar T (2018{\natexlab{b}}) Fully
  decentralized multi-agent reinforcement learning with networked agents.
  \emph{International Conference on Machine Learning}, 5872--5881 (PMLR).

\bibitem[{Zhang et~al.(2021{\natexlab{b}})Zhang, Yang, Liu, Zhang,
  \protect\BIBand{} Basar}]{zhang2021finite}
Zhang K, Yang Z, Liu H, Zhang T, Basar T (2021{\natexlab{b}}) Finite-sample
  analysis for decentralized batch multi-agent reinforcement learning with
  networked agents. \emph{IEEE Transactions on Automatic Control} .

\bibitem[{Zheng et~al.(2020)Zheng, Trott, Srinivasa, Naik, Gruesbeck, Parkes,
  \protect\BIBand{} Socher}]{zheng2020ai}
Zheng S, Trott A, Srinivasa S, Naik N, Gruesbeck M, Parkes DC, Socher R (2020)
  The {A}{I} economist: Improving equality and productivity with {A}{I}-driven
  tax policies. \emph{arXiv preprint arXiv:2004.13332} .

\bibitem[{Zhou et~al.(2021)Zhou, Mertikopoulos, Moustakas, Bambos,
  \protect\BIBand{} Glynn}]{zhou2021robust}
Zhou Z, Mertikopoulos P, Moustakas AL, Bambos N, Glynn P (2021) Robust power
  management via learning and game design. \emph{Operations Research}
  69(1):331--345.

\end{thebibliography}

\newpage

\begin{center}
{\Large \bf {\centering Appendix}}
\end{center}

\begin{appendix}

\section{Proof of Lemma \ref{equiv_V_tildeV}} \label{app:section23lemma}


The goal is to show that $V(\mu) = \widetilde V(\mu)$, with the former the value function of \reff{eq:N_agent_reward} subject to the transition probability ${P}$ defined in \reff{eq:N_agent_transition_heter} under a given individual policy $\pi \in \mathfrak{u}$, and the latter the value function of \reff{eq:N_agent_reward_reformu}  subject to the joint transition probability $\mathbf{P}^N$ defined in \reff{eqn:P_N} under the policy $\Pi \in \mathfrak{U}$. The proof consists of two steps. Step 1  shows that $V(\mu)$ can be reformulated as a {\it measured-valued} Markov decision problem. Step 2 shows that the measured-valued Markov decision problem from Step 1 is equivalent to $\widetilde V(\mu)$ in \reff{eq:N_agent_reward_reformu}.

{\it Step 1:} \; Recall that $\mu_{t + 1}: = \frac{1}{N} \sum_{i=1}^N \delta_{s_{t+1}^i}$ with $s_{t + 1}^i$ subject to \reff{eq:N_agent_transition_heter}. First, one can show that $\mu_t$ is a measure-valued Markov decision process under $\pi$. To see this, denote $\Fc_t^s = \sigma(s_t^1, \cdots, s_t^N)$ as the $\sigma$-algebra generated by $s_t^1, \cdots, s_t^N$. Then it suffices to show
\vspace{-0.2cm}
\begin{align} \label{eqn: markov_mu}
    \P(\mu_{t + 1} \;|\; \sigma(\mu_t) \vee \Fc_t^s) = \P(\mu_{t + 1} \;|\; \sigma(\mu_t)), \;\; \P-a.s..
\end{align}
Following similar arguments for Lemma 2.3.1 and  Proposition 2.3.3 in \citet{Dawson1993}, \reff{eqn: markov_mu} holds due to the exchangeability of the individual transition dynamics \reff{eq:N_agent_transition_heter} under $\pi$. \reff{eqn: markov_mu} implies that there exists a joint transition probability induced from \reff{eq:N_agent_transition_heter} under $\pi$, denoted as $\widetilde{\mathbf P}^N$ such that
\vspace{-0.2cm}
\begin{align} \label{eqn: measure_valued_MDP_mu}
\mu_{t + 1} \sim {\widetilde{\mathbf P}}^N(\cdot \; |\; \mu_t, \pi).
\end{align}
Meanwhile, rewrite $V^{\pi}(\mu)$ in \reff{eq:N_agent_reward} by regrouping the agents according to their states
\begin{eqnarray}  
V^{\pi}(\mu) &:=&\mathbb{E}\biggl[ \sum_{t = 0}^\infty\gamma^t\sum_{i=1}^N\frac{1}{N} {{r}(s^i_t,\mu_t(\Nc_{s^i_t}),a^i_t)} \left| \mu_0 = \mu \biggl],\right. \label{mf_objective}\\ 
&=& \mathbb{E}\biggl[ \sum_{t = 0}^\infty\gamma^t\sum_{s\in\Sc}\mu_t(s)\sum_{a\in\Ac}{{r}(s,\mu_t(\Nc_{s}),a)}\pi(s, \mu_t(s))(a) \left| \mu_0 = \mu \biggl].\right. \nonumber
\end{eqnarray}
We see  \reff{eq:N_agent_transition_heter}-\reff{eq:N_agent_reward} is reformulated in an equivalent form of \reff{eqn: measure_valued_MDP_mu}-\reff{mf_objective}.

{\it Step 2:} \; It suffices to show that \reff{eqn: measure_valued_MDP_mu} under $\pi$ is the same as \reff{eqn:P_N} under $\Pi$ and that $V^{\pi}$ in \reff{mf_objective} equals to $\widetilde V^{\Pi}$ in \reff{eq:N_agent_reward_reformu}. To see this, denote $\langle g, \mu \rangle = \sum_{s \in \Sc} g(s) \mu(s)$ for any measurable bounded function $g: \Sc \to \R$, then
\begin{align}
   &\E\big[\langle g, \mu_{t + 1}\rangle \mid \sigma(\mu_t)\big] \nonumber\\
   &=  \frac{1}{N} \E\Big[\sum_{i=1}^N\E\big[g(s^j_{t+1})\mid \sigma(\mu_t) \vee \Fc_t^s \big]\Big] \nonumber\\
   &= \frac{1}{N} \sum_{s' \in \Sc} \sum_{i=1}^N\sum_{a \in \Ac}g(s') {P}(s'\,|\,s^i_t, \mu_{t}(\Nc(s^i_{t})), a) \pi(s_t^i, \mu_t(s_t^i))(a) \nonumber\\
    &= \frac{1}{N} \sum_{s' \in \Sc} g(s')\sum_{s \in \Sc}\sum_{i=1}^N \mathds{1}(s_t^i = s)\sum_{a \in \Ac} {P}(s'\,|\,s^i_t, \mu_{t}(\Nc(s^i_{t})), a) \pi(s_t^i, \mu_t(s_t^i))(a) \nonumber\\
    &= \sum_{s' \in \Sc} g(s')\sum_{s \in \Sc}\mu_t(s)\sum_{a \in \Ac} {P}(s'\,|\,s, \mu_{t}(\Nc(s)), a) \pi(s, \mu_t(s))(a) \nonumber\\
   &=\sum_{s' \in \Sc} g(s') \sum_{s \in \Sc} \mu_t(s)\sum_{h \in \Pc^{N\cdot \mu_t(s)}(\Ac)} \Pi(h\mid\mu_t(s))\sum_{a \in \Ac}{P}(s'\,|\,s, \mu_{t}(\Nc(s)), a) h(s)(a),\label{eqn: measure_valued_MDP}
  \end{align}
where in the last step, the expectation of random variable $h(s)(a)$ with respect to distribution $\Pi(h\; | \; \mu )$ is $\pi(s, \mu_t(s))$. And from the last equality, clearly $\mu_{t + 1}$ evolves according to transition dynamics ${\mathbf P}^N(\cdot| \mu_t, h_t)$ under $\Pi(h_t\mid\mu_t)$. This implies the equivalence of \reff{eqn: measure_valued_MDP_mu} and \reff{eqn:P_N}. As a byproduct, when taking $g(s') = \mathds{1}(s'=s^o)$ for any fixed $s^o \in \Sc$, \reff{eqn: measure_valued_MDP} becomes
\vspace{-0.2cm}
\begin{align*}
    \E\big[\mu_{t+1}(s^o)|\sigma(\mu_t)\big] = \sum_{s \in \Nc(s^o)} \mu_t(s)\sum_{h \in \Pc^{N\cdot \mu_t(s)}(\Ac)} \Pi(h\mid\mu_t(s))\sum_{a \in \Ac}{P}(s^o\,|\,s, \mu_{t}(\Nc(s)), a) h(s)(a),
\end{align*}
where the local structure \reff{eq:N_agent_transition_heter} is used.  This suggests that $\mu_{t+1}(s^o)$ only depends on $\mu_t(\Nc^2_{s^o})$ and $h_t(\Nc_{s^o})$ since $\Nc(s) = \Nc^2(s^o)$ for $s \in \Nc(s^o)$.\\
Now we show that $V^{\pi}(\mu)$ in \reff{mf_objective} and $\widetilde V^{\Pi}(\mu)$ in \reff{eq:N_agent_reward_reformu} are equal.
Take $\widetilde V^{\Pi}$ defined in \reff{eq:N_agent_reward_reformu},
\begin{eqnarray*}\label{eq:N_agent_reward_reformu_app}
\widetilde V^{\Pi}(\mu):&=&  \mathbb{E}_{h_t\sim\Pi(\cdot\,|\,\mu_t),\,\mu_{t+1}\sim \mathbf{P}^N(\cdot\,|\,\mu_t,h_t)}\biggl[ \sum_{t = 0}^\infty \sum_{s \in \Sc}\gamma^t\, r_s(\mu_t(\Nc_{s}), h_t) \left| \mu_0 = \mu \biggl]\right.\\
&=&  \mathbb{E}_{\mu_{t+1}\sim \mathbf{P}^N(\cdot\,|\,\mu_t,h_t)}\biggl[ \sum_{t = 0}^\infty \gamma^t\sum_{s \in \Sc}\, \E_{h_t\sim\Pi(\cdot\,|\,\mu_t)}\big[r_s(\mu_t(\Nc_{s}), h_t)| \mu_t\big] \left| \mu_0 = \mu \biggl]\right.\\
&=& \mathbb{E}_{\mu_{t+1}\sim \mathbf{P}^N(\cdot\,|\,\mu_t,h_t)}\biggl[ \sum_{t = 0}^\infty\gamma^t \sum_{s \in \Sc}\, \sum_{h_t \in \Pc^{N\cdot \mu_t(s)}(\Ac)}r_s(\mu_t(\Nc_{s}), h_t(s)) \Pi(h; \pi) \left| \mu_0 = \mu \biggl]\right.\\
&=& \mathbb{E}_{\mu_{t+1}\sim \mathbf{P}^N(\cdot\,|\,\mu_t,h_t)}\biggl[ \sum_{t = 0}^\infty  \gamma^t\sum_{s \in \Sc}\, \mu_t(s)\sum_{h_t \in \Pc^{N\cdot \mu_t(s)}(\Ac)}\Pi(h_t\mid\mu_t) \sum_{a\in\Ac}{r}(s,\mu_t(\Nc_{s}),a)h(a) \left| \mu_0 = \mu \biggl]\right.\\
&=& \mathbb{E}_{\mu_{t+1}\sim \widetilde{\mathbf{P}}^N(\cdot\,|\,\mu_t,\pi)}\biggl[ \sum_{t = 0}^\infty  \gamma^t\sum_{s \in \Sc}\, \mu_t(s) \sum_{a\in\Ac}{r}(s, \mu_t(\Nc_{s}),a)\pi_t(s, \mu_t(s))(a) \left| \mu_0 = \mu \biggl]\right.\\
&=& V^{\pi}(\mu),
\end{eqnarray*}
where in the last second step, $\mathbf{P}^N$ under $\pi$ is equivalent to ${\widetilde{\mathbf P}}^N$ under $\Pi$, and the expectation of $h_t(s)(a)$ with distribution $\Pi(h_t\mid\mu_t)$ is $\pi(s, \mu_t(s))(a)$ such that 
\begin{align*}
    \sum_{h \in \Pc^{N\cdot \mu_t(s)}(\Ac)}  \Pi(h_t\mid\mu_t) \sum_{a\in\Ac}{r}(s,\mu_t(\Nc_{s}),a)h(a) & = \E_{h \sim \Pi(\cdot\mid\mu_t)} \Big[\sum_{a \in \Ac} {r}(s,\mu_t(\Nc_{s}, a) h(a)\Big]\\
    & = \sum_{a\in\Ac}{r}(s,\mu_t(\Nc_{s}),a)\pi_t(s, \mu_t(s))(a).
\end{align*}
Finally, the decomposition of $\widetilde V(\mu)$ and $Q^{\Pi^\theta}(\mu, h)$ according to the states is straightforward.   \hfill \mbox{\bf Q.E.D.}

\section{Proof of Lemma  \ref{lemma:exp_decay_Q}}\label{app:exponential_decay_lemma}

Let $\mathfrak{P}_{t, s}$ and $\mathfrak{P'}_{t, s}$ be, respectively, distribution of $(\mu_t(\Nc_s), h_t(s))$ and $(\mu'_t(\Nc_s), h'_t(s))$ under policy $\Pi^\theta$. By localized transition kernel \eqref{eq:N_agent_transition_heter}, it is easy to see that for any given $s \in \Sc$, $\mu_{t+1}(s)$ only depends on $\mu_t(\Nc^2_s)$ and $h_t(\Nc_s)$. Then by the local dependency, \reff{eqn:P_N} can be rewritten as
\begin{equation}\label{equ:P_N_loc}
    \mu_{t+1}(s)\sim \mathbf{P}^N_s(\cdot\,|\, \mu_t(\Nc^2_s), h_t(\Nc_s)).
\end{equation}
Due to the local structure of dynamics \reff{equ:P_N_loc} and local dependence of $\Pi^\theta$, the distribution $\mathfrak{P}_{t, s}$, $t \leq \lfloor\frac{ k }{2}\rfloor$ only depends on the initial value $(\mu(\Nc_s^k), h(\Nc_s^k))$. Therefore, $\mathfrak{P}_{t, s} = \mathfrak{P'}_{t, s}$, $t \leq \lfloor\frac{ k }{2}\rfloor$,
\begin{eqnarray*}
& & \bigg|Q_{s}^{\Pi^\theta}\left(\mu(\Nc^{k}_s), \mu(\Nc^{-k}_s), h(\Nc^{k}_s), h(\Nc^{-k}_s)\right) -Q_{s}^{\Pi^\theta}\left(\mu(\Nc^{k}_s), {\mu}'(\Nc^{-k}_s), h(\Nc^{k}_s), h'(\Nc^{-k}_s)\right)\bigg|\\
&=& \sum_{t=\lfloor \frac{k}{2} \rfloor + 1}^\infty  \E_{(\mu_t(\Nc_s), h_t(s)) \sim \mathfrak{P}_{t, s}}\big[ r_s(\mu_t(\Nc_s), h_t(s))\big] - \E_{(\mu'_t(\Nc_s), h'_t(s)) \sim \mathfrak{P}'_{t, s}}\big[ r_s(\mu'_t(\Nc_s), h'_t(s))\big]\\
&\leq & \sum_{t=\lfloor \frac{k}{2} \rfloor + 1}^\infty \gamma^t r_{\text{max}} {\rm TV}(\mathfrak{P}_{t, s},\mathfrak{P'}_{t, s}) \leq \frac{r_{\text{max}}}{1 -\gamma}  \gamma^{\lfloor\frac{ k }{2}\rfloor + 1},
\end{eqnarray*}
where  ${\rm TV}(\mathfrak{P}_{t, s},\mathfrak{P'}_{t, s})$ is total variation between $\mathfrak{P}_{t, s}$ and $\mathfrak{P'}_{t, s}$ that is upper bounded by 1.  \hfill \mbox{\bf Q.E.D.}

\section{Proof of Lemma \ref{lemma:local_grad_fisher}} \label{app:section_local_grad}
For any $\theta \in \Theta$, $s\in\Sc$, $\mu\in\Pc^N(\Sc)$ and $h\in\Hc^N(\mu)$, it is easy to verify that
$\left\|\Phi(\theta, s, \mu, h)\right\|_2\leq \|\zeta_s\|_{2}\leq 2$, by the definitions of the feature mapping $\phi$ in \eqref{eqn:feature} and the center feature mapping $\Phi$ in \eqref{eqn:log_pi}.

To prove \eqref{eqn:grad_nn}, note that by Lemma \ref{lemma:policy_grad} \& the definition of energy-based policy $\Pi_s^{\theta_s}$ \reff{eqn:energy_policy},
\vspace{-0.2cm}
\begin{eqnarray*}
\nabla_{\theta_s}\log \Pi_s^{\theta_s}(h(s)\mid \mu(s)) &=&  \tau \cdot \nabla_{\theta_s}f((\mu(s), h(s)); \theta_s) - \tau \cdot \E_{h(s)' \sim \Pi^{\theta_s}(\cdot \mid \mu(s))}[\nabla_{\theta_s} f(\mu(s), h'(s))]\\
&=& \tau \cdot \phi_{\theta_s}(\mu(s), h(s)) - \tau \cdot \E_{h(s)' \sim \Pi^{\theta_s}(\cdot \mid \mu(s))}[\phi_{\theta_s}(\mu(s), h(s))]\\
&=& \tau\cdot\Phi(\theta, s, \mu, h).
\end{eqnarray*}
The second equality follows from the fact that $\nabla_{\theta_s}f((\mu(s), h(s)); \theta_s)= \phi_{\theta_s}(\mu(s), h(s))$. Therefore, 
\begin{eqnarray*}
\nabla_{\theta_s}J(\theta)=\frac{\tau}{1-\gamma}\mathbb{E}_{\sigma_{\theta}}\left[{Q}^{\Pi^{\theta}}(\mu, h) \cdot\Phi(\theta, s, \mu, h)\right]=\frac{\tau}{1-\gamma}\mathbb{E}_{\sigma_{\theta}}\left[\sum_{y\in\Sc}{Q}_y^{\Pi^{\theta}}(\mu, h) \cdot\Phi(\theta, s, \mu, h)\right],
\end{eqnarray*}
where the second equality is by the decomposition of Q-function in Lemma \ref{equiv_V_tildeV}.

The proof of \reff{eqn:trunc_grad_nn} is based on the exponential decay property in Definition \ref{def:exp_decay}. 
Notice that
\begin{eqnarray}\label{eqn: trunc_grad_nn_equiva}
g_s(\theta)&=&\frac{1}{1-\gamma}\mathbb{E}_{\sigma_{\theta}}\left[\Bigg[\sum_{y\in\Nc^k_s}\widehat{Q}_y^{\Pi^{\theta}}(\mu(\Nc^k_y), h(\Nc^k_y)\Bigg] \nabla_{\theta_s} \log \Pi^{\theta_{s}}(h(s) \mid \mu(s))\right]\nonumber\\
&=&\frac{1}{1-\gamma}\mathbb{E}_{\sigma_{\theta}}\left[\Bigg[\sum_{y\in\Sc}\widehat{Q}_y^{\Pi^{\theta}}(\mu(\Nc^k_y), h(\Nc^k_y)\Bigg] \nabla_{\theta_s} \log \Pi^{\theta_{s}}(h(s) \mid \mu(s))\right].
\end{eqnarray}
This is because for all $y\not\in\Nc^k_s$, $\widehat{Q}_y^{\Pi^{\theta}}(\mu(\Nc^k_y), h(\Nc^k_y)$ is independent of $s$. Consequently, 
$$\mathbb{E}_{\sigma_{\theta}}\left[\left[\sum_{y\not\in\Nc^k_s}\widehat{Q}_y^{\Pi^{\theta}}(\mu(\Nc^k_y), h(\Nc^k_y)\right]\nabla_{\theta_s} \log \Pi^{\theta_{s}}(h(s) \mid \mu(s))\right]=0.$$
Given Lemma \ref{lemma:policy_grad} and \reff{eqn: trunc_grad_nn_equiva}, we have the following bound: 
\begin{eqnarray*}
&&\|{g}_s(\theta)- \nabla_{\theta_s} J(\theta)\|_2 \\
&\leq& \frac{1}{1-\gamma} \sum_{y \in \Sc}\sup_{\substack{\mu\in\Pc^N(\Sc),\\ h\in\Hc^N(\mu)}}\Bigg[\left|\widehat{Q}_{y}^{\Pi^{\theta}}\Big(\mu(\Nc^{k}_y), h(\Nc^{k}_y)\Big) - {Q}_{y}^{\Pi^{\theta}}(\mu, h) \right|  \cdot\|\nabla_{\theta_s} \log \Pi^{\theta_{s}}(h(s) \mid \mu(s))\|_2\Bigg]\\
&\leq & \frac{c_0\tau|\Sc|}{1-\gamma} \rho^{k+1}.
\end{eqnarray*}
The last inequality follows from \reff{eq:exponentialQhat} and $\|\log \Pi^{\theta_{s}}(h(s) \mid \mu(s))\|_2 =\left\|\Phi(\theta, s, \mu, h)\right\|_2\leq 2$ for any $\mu \in \Pc^N(\Sc), h \in \Hc^N(\mu)$.  \hfill \mbox{\bf Q.E.D.}

\section{Proof of Theorems \ref{thm:critic_conv} and \ref{thm:actor_conv}} 
\subsection{Proof of Theorem \ref{thm:critic_conv}: Convergence of Critic Update}\label{app:conv_critic}

This section presents the proof of  convergence of the decentralized neural critic update. It consists of several steps. Section \ref{subsec:critic_notation} introduces necessary notations and definitions.  Section \ref{subsec:critic_1} proves that  the critic update minimizes the projected mean-square Bellman error given a two-layer neural network. 
Section \ref{subsec:critic_2} shows that  the global minimizer of the projected mean-square Bellman error converges to the true team-decentralized Q-function as the width of hidden layer $M\to\infty$.

\subsubsection{Notations}\label{subsec:critic_notation}

Recall that  the set of all state-action (distribution) pairs is denoted as $\Xi:=\cup_{\mu\in\Pc^N(\Sc)}\{\zeta=(\mu,h):h\in\Hc^N(\mu)\}$. For any $\zeta=(\mu,h)\in\Xi$, denote the localized state-action (distribution) pair as $\zeta^k_{s}=(\mu(\Nc^k_s),h(\Nc^k_s))$. Meanwhile, denote $\Xi^k_s=\{\zeta^k_{s}:\zeta\in\Xi\}$ as the set of all possible localized state-action (distribution) pairs. Without loss of generality, assume $\|\zeta^k_{s}\|_2\leq 1$ for any $\zeta^k_{s}\in\Xi^k_s$.

Let $d_\zeta$ denote the dimension of the space $\Xi$. Since $\Pc^N(\Sc)$ has dimension ${(|\Sc|-1)}$ and $\Hc^N(\mu)$ has dimension ${|\Sc|(|\Ac|-1)}$ for any $\mu\in\Pc^N(\Sc)$, the product space $\Xi$ has dimension $d_\zeta={|\Sc| |\Ac|-1}$. Similarly, one can see that the dimension of the space $\Xi^k_s$, denoted by $d_{\zeta^k_s}$, is at most ${f(k)|\Ac|}$, where $f(k):=\max_{s\in\Xc}|\Nc^k_s|$ is the size of the largest $k$-neighborhood in the graph $(\Sc, \Ec)$. 

Let $\R^{\Xi}$ and $\R^{\Xi^k_s}$ be the sets of real-valued square-integrable functions (with respect to $\nu_\theta$) on $\Xi$ and $\Xi^k_s$, respectively. Define the norm $\|\cdot\|_{L^2(\nu_\theta)}$ on $\R^{\Xi}$ by
\begin{equation}\label{eqn:norm}
    \|\,f\,\|_{L^2(\nu_\theta)}:=\left(\E_{\zeta\sim\nu_\theta}[f(\zeta)^2]\right)^{1/2},\quad\forall f\in\R^{\Xi}.
\end{equation}
Note that for any function $f\in\R^{\Xi^k_s}$, a function $\tilde{f}\in\R^{\Xi}$ is called a \textit{natural extension} of $f$ if $\tilde{f}(\zeta)=f(\zeta^k_s)$ for {all $\zeta\in\Xi$}. Since the natural extension is an injective mapping from $\R^{\Xi^k_s}$ to $\R^{\Xi}$, one can view $\R^{\Xi^k_s}$ as a subset of $\R^{\Xi}$. In addition for a function $f\in\R^{\Xi^k_s}$, we use the same notation $f\in\R^{\Xi}$ to denote the natural extension of $f$.

For any closed and convex function class $\Fc\subset\R^{\Xi}$, define the project operator $\text{Proj}_{\Fc}$ from $\R^{\Xi}$ onto $\Fc$ by
\begin{equation}\label{eqn:projection}
    \text{Proj}_{\Fc}(g):= \argmin_{f\in\Fc}\|f-g\|_{L^2(\nu_\theta)}.
\end{equation}
This projection operator $\text{Proj}_{\Fc}$ is non-expansive in the sense that
\begin{equation}\label{eqn:proj_non_expen}
    \|\text{Proj}_{\Fc}(f)-\text{Proj}_{\Fc}(g)\|_{L^2(\nu_\theta)}\leq\|f-g\|_{L^2(\nu_\theta)}.
\end{equation}

Recall that for each state $s\in\Sc$, the critic parameter $\omega_s$ is updated in a localized fashion using information from the $k$-hop neighborhood of $s$. Without loss of generality, let us omit the subscript $s$ of $\omega_s$ in the following presentation, and the result holds for all $s\in\Sc$ simultaneously.

Given an initialization $\omega(0)\in\R^{M\times d_{\zeta^k_s}}$,
define the following function class
\begin{align}\label{eqn:local_nn}
   {\mathcal{F}}_{R,M}=\bigg\{{Q}_{0}(\zeta^k_s;\omega):=\frac{1}{\sqrt{M}}\sum_{m=1}^{M} & \mathds{1}\left\{[\omega(0)]_m^{\top} \zeta^k_s>0\right\} \omega_{m}^{\top} \zeta^k_s:\notag\\
    &\omega \in \mathbb{R}^{M \times d_{\zeta^k_s}}, \|\omega-\omega(0)\|_{\infty} \leq R/\sqrt{M}\bigg\}.
\end{align}
${Q}_{0}(\,\cdot\,;\omega)$ locally linearizes the neural network ${Q}(\,\cdot\,;\omega)$ (with respect to $\omega$) at $\omega(0)$. Any function ${Q}_{0}(\,\cdot\,;\omega)\in\mathcal{F}_{R,M}$ can be viewed as an inner product between the feature mapping $\phi_{\omega(0)}(\cdot)$ defined in \eqref{eqn:feature} and the parameter $\omega$, i.e. ${Q}_{0}(\,\cdot\,;\omega)=\phi_{\omega(0)}(\cdot)^\top \omega$. In addition  it holds that $\nabla_{\omega}{Q}_{0}(\,\cdot\,;\omega)=\phi_{\omega(0)}(\cdot)$. All functions in $\Fc_{R,M}$ share the same feature mapping $\phi_{\omega(0)}(\cdot)$ which only depends on the initialization $\omega(0)$. 


Recall the Bellman operator $\Tc^\theta_s:\R^{\Xi}\to\R^{\Xi}$ defined in \eqref{eqn:bellman_op},
\begin{equation*}
    \mathcal{T}^{\theta}_s {Q}^{\Pi^{\theta}}_s(\mu, h)=\mathbb{E}_{{\mu}'\sim {\mathbf P}^N(\cdot\,|\,\mu,h),\,h'\sim\Pi^{\theta}(\cdot\,|\,\mu)}\left[ r_s(\mu, h)+\gamma\cdot {Q}^{\Pi^{\theta}}_s({\mu}', h')\right], \forall(\mu, h) \in \Xi.
\end{equation*}
The team-decentralized Q-function $Q^{\Pi^\theta}_s$ in  \eqref{eq:Q_decompostion} is the unique fixed point of $\Tc^\theta_s$: $Q^{\Pi^\theta}_s=\Tc^\theta_s Q^{\Pi^\theta}_s$. Now given a general parameterized function class $\Fc$, we aim to learn a $Q_s(\,\cdot\,;\omega)\in\Fc$ to approximate $Q^{\Pi^\theta}_s$ by minimizing the following projected mean-squared Bellman error (PMSBE):
\begin{equation}\label{eqn:PMSBE}
    \min _{\omega} \operatorname{PMSBE}(\omega)=\mathbb{E}_{\zeta \sim \nu_\theta}\left[\Big(Q_s(\,\zeta^k_s\,;\omega)-\text{Proj}_\Fc\mathcal{T}^\theta_s Q_s(\,\zeta^k_s\,;\omega)\Big)^{2}\right].
\end{equation}
In the first step of the convergence analysis, we  take $\Fc=\Fc_{R,M}$ (the locally linearized two-layer neural network defined in \eqref{eqn:local_nn}) and consider the following PMSBE:
\begin{equation}\label{eqn:PMSBE_local_nn}
    \min _{\omega}\mathbb{E}_{\zeta \sim \nu_\theta}\left[\left(Q_0(\,\zeta^k_s\,;\omega)-\text{Proj}_{\Fc_{R,M}}\mathcal{T}^\theta_s Q_0(\,\zeta^k_s\,;\omega)\right)^{2}\right].
\end{equation}
We will show in Section \ref{subsec:critic_1} that the output of Algorithm \ref{DNTD} converges to the global minimizer of \eqref{eqn:PMSBE_local_nn}.

\subsubsection{Convergence to the Global Minimizer in \texorpdfstring{$\Fc_{R,M}$}{Lg}}\label{subsec:critic_1}
The following lemma guarantees the existence and the uniqueness of the global minimizer of MSPBE that corresponds to the projection onto $\Fc_{R,M}$ in \eqref{eqn:PMSBE_local_nn}.
\begin{Lemma}(Existence and Uniqueness of the Global Minimizer in $\Fc_{R,M}$)\label{lemma:global_opt}
For any $b \in \mathbb{R}^{M}$ and $\omega(0) \in \mathbb{R}^{M\times d_{\zeta^k_s}}$, there exists an  $\omega^{*}$ such that ${Q}_{0}\left(\,\cdot\,; \omega^{*}\right)\in\Fc_{R,M}$ is unique almost everywhere in $\Fc_{R,M}$ and is the global minimizer of MSPBE that corresponds to the projection onto $\Fc_{R,M}$ in \eqref{eqn:PMSBE_local_nn}.
\end{Lemma}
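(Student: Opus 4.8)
The plan is to recast the minimization of the PMSBE \eqref{eqn:PMSBE_local_nn} over the locally-linearized class $\Fc_{R,M}$ as a fixed-point problem and invoke the Banach fixed-point theorem. First I would record the structural properties of $\Fc_{R,M}$, viewed through the natural extension as a subset of $\big(\R^{\Xi},\|\cdot\|_{L^2(\nu_\theta)}\big)$. Since the indicators $\mathds{1}\{[\omega(0)]_m^{\top}\zeta^k_s>0\}$ depend only on the initialization $\omega(0)$, the map $\omega\mapsto Q_0(\cdot\,;\omega)$ is linear in $\omega$, and it is bounded into $L^2(\nu_\theta)$ because $\|\zeta^k_s\|_2\le 1$; hence $\Fc_{R,M}$ is the image of the $\ell^\infty$-ball $\{\omega:\|\omega-\omega(0)\|_\infty\le R/\sqrt M\}$ under a continuous linear map, so it is nonempty, convex, bounded, and compact, and in particular a complete metric space under $\|\cdot\|_{L^2(\nu_\theta)}$ that, being closed and convex, admits a well-defined non-expansive metric projection $\text{Proj}_{\Fc_{R,M}}$, cf. \eqref{eqn:projection}--\eqref{eqn:proj_non_expen}. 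I would then introduce the map $\mathcal{M}:\Fc_{R,M}\to\Fc_{R,M}$, $\mathcal{M}(f):=\text{Proj}_{\Fc_{R,M}}\Tc^{\theta}_s f$, which is well-defined since $\Tc^{\theta}_s f$ is square-integrable ($r_s$ is bounded by $r_{\max}$ and $f$ is bounded on $\Xi$) and the projection lands back in $\Fc_{R,M}$.

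The core step — and the main obstacle — is showing that $\mathcal{M}$ is a $\gamma$-contraction on $\Fc_{R,M}$, for which the crucial ingredient is the contraction of the Bellman operator $\Tc^{\theta}_s$ of \eqref{eqn:bellman_op} in the $L^2(\nu_\theta)$ norm (the $\sup$-norm contraction is automatic, but the $L^2(\nu_\theta)$ version relies on $\nu_\theta$ being the stationary distribution of the Markov chain on $\Xi$ induced by $\Pi^{\theta}$). Writing $\zeta'$ for the one-step successor of $\zeta$ under this chain, $\Tc^{\theta}_s f_1-\Tc^{\theta}_s f_2=\gamma\,\E_{\zeta'\mid\zeta}\big[(f_1-f_2)(\zeta')\big]$, and the conditional Jensen inequality followed by stationarity of $\nu_\theta$ gives
\begin{align*}
\|\Tc^{\theta}_s f_1-\Tc^{\theta}_s f_2\|_{L^2(\nu_\theta)}^2
&=\gamma^2\,\E_{\zeta\sim\nu_\theta}\Big[\big(\E_{\zeta'\mid\zeta}[(f_1-f_2)(\zeta')]\big)^2\Big]\\
&\le\gamma^2\,\E_{\zeta\sim\nu_\theta}\E_{\zeta'\mid\zeta}\big[(f_1-f_2)(\zeta')^2\big]=\gamma^2\,\|f_1-f_2\|_{L^2(\nu_\theta)}^2 .
\end{align*}
Composing with the non-expansive projection yields $\|\mathcal{M}(f_1)-\mathcal{M}(f_2)\|_{L^2(\nu_\theta)}\le\gamma\,\|f_1-f_2\|_{L^2(\nu_\theta)}$. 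By the Banach fixed-point theorem, $\mathcal{M}$ has a unique fixed point in $\Fc_{R,M}$, which can be written as $Q_0(\cdot\,;\omega^{*})$ for some $\omega^{*}$ with $\|\omega^{*}-\omega(0)\|_\infty\le R/\sqrt M$ (such a parameter exists by the definition of $\Fc_{R,M}$, though it need not be unique).

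Finally I would identify this fixed point with the global minimizer of the PMSBE (the minimization being over $\{\omega:\|\omega-\omega(0)\|_\infty\le R/\sqrt M\}$). Since $\operatorname{PMSBE}(\omega)=\|Q_0(\cdot\,;\omega)-\text{Proj}_{\Fc_{R,M}}\Tc^{\theta}_s Q_0(\cdot\,;\omega)\|_{L^2(\nu_\theta)}^2=\|Q_0(\cdot\,;\omega)-\mathcal{M}(Q_0(\cdot\,;\omega))\|_{L^2(\nu_\theta)}^2\ge 0$ and equals $0$ at $\omega^{*}$, the parameter $\omega^{*}$ is a global minimizer; conversely any minimizer $\omega$ attains the value $0$, i.e. $Q_0(\cdot\,;\omega)=\mathcal{M}(Q_0(\cdot\,;\omega))$, so by uniqueness of the fixed point $Q_0(\cdot\,;\omega)=Q_0(\cdot\,;\omega^{*})$ in $L^2(\nu_\theta)$, i.e. $\nu_\theta$-almost everywhere. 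This is exactly the asserted uniqueness of the minimizer almost everywhere in $\Fc_{R,M}$ (uniqueness of the realized function, not of the parameter vector). Apart from the two non-routine points flagged above — the compactness/completeness of $\Fc_{R,M}$ and the stationarity-based $L^2(\nu_\theta)$-contraction of $\Tc^{\theta}_s$ — the remaining arguments are standard.
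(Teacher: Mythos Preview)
Your proposal is correct and follows essentially the same approach as the paper: show that $\Tc^{\theta}_s$ is a $\gamma$-contraction in $L^2(\nu_\theta)$ via conditional Jensen and stationarity of $\nu_\theta$, compose with the non-expansive projection, and invoke the Banach fixed-point theorem on $\Fc_{R,M}$. You are somewhat more careful than the paper in justifying completeness/closedness of $\Fc_{R,M}$ and in explicitly arguing the equivalence between the fixed point and the PMSBE minimizer (and the function-level rather than parameter-level uniqueness), but these are natural elaborations of the same proof.
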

\begin{proof}{Proof of Lemma \ref{lemma:global_opt}}
We first show that the operator $\mathcal{T}^\theta_s:\R^{\Xi}\to\R^{\Xi}$ \eqref{eqn:bellman_op} is a $\gamma$-contraction in the $L^2(\nu_\theta)$-norm.
$$
\begin{aligned}
&\|\mathcal{T}^{\theta}_s Q_{1}-\mathcal{T}^{\theta}_s Q_{2}\|^2_{L^2(\nu_\theta)} = \mathbb{E}_{\zeta \sim \nu_\theta}\left[\left(\mathcal{T}^{\theta}_s Q_{1}(\zeta)-\mathcal{T}^{\theta}_s Q_{2}(\zeta)\right)^{2}\right] \\
=&\,\gamma^{2} \mathbb{E}_{\zeta \sim \nu_\theta}\left[\left(\mathbb{E}\left[Q_{1}\left(\zeta'\right)-Q_{2}\left(\zeta'\right) \big| \zeta'=({\mu}', h'),{\mu}'\sim P^N(\cdot\,|\,\zeta), h'\sim\Pi^\theta(\cdot\,|\,{\mu}')\right]\right)^{2}\right] \\
\leq&\, \gamma^{2}\mathbb{E}_{\zeta \sim \nu_\theta}\left[\mathbb{E}\left[\left(Q_{1}\left(\zeta'\right)-Q_{2}\left(\zeta'\right)\right)^2 \big| \zeta'=({\mu}', h'),{\mu}'\sim P^N(\cdot\,|\,\zeta), h'\sim\Pi^\theta(\cdot\,|\,{\mu}')\right]\right]\\
=&\, \gamma^{2}\mathbb{E}_{\zeta' \sim \nu_\theta}[\left(Q_{1}\left(\zeta'\right)-Q_{2}\left(\zeta'\right)\right)^2]=\, \gamma^{2}\|Q_{1} - Q_{2}\|^2_{L^2(\nu_\theta)},
\end{aligned}
$$
where the first inequality follows from H\"older's inequality for the conditional expectation and the third equality stems from the fact that  $\zeta^{\prime}$ and
$\zeta$ have the same stationary distribution $\nu_\theta$.

Meanwhile, the projection operator $\text{Proj}_{\Fc_{R,M}}:\R^{\Xi}\to\Fc_{R,M}$ is non-expansive. Therefore, the operator $\text{Proj}_{\Fc_{R,M}}\mathcal{T}^\theta_s:\Fc_{R,M}\to\Fc_{R,M}$ is $\gamma$-contraction in the $L^2(\nu_\theta)$-norm. Hence $\text{Proj}_{\Fc_{R,M}}$ admits a unique fixed point ${Q}_{0}\left(\,\cdot\,; \omega^{*}\right)\in\Fc_{R,M}$. By definition, ${Q}_{0}\left(\,\cdot\,; \omega^{*}\right)$ is the global minimizer of MSPBE that corresponds to the projection onto $\Fc_{R,M}$ in \eqref{eqn:PMSBE_local_nn}.
\end{proof}
\hfill \mbox{\bf Q.E.D.}

We will show  that  the function class $\Fc_{R,M}$ will approximately become $\Fc^{s,k}_{R,\infty}$ (defined in Assumption \ref{ass:F_infty}) as $M\to\infty$, where $\Fc^{s,k}_{R,\infty}$ is a rich reproducing kernel Hilbert space (RKHS). Consequently,  ${Q}_{0}\left(\,\cdot\,; \omega^{*}\right)$ will become the global minimum of the MSPBE \eqref{eqn:PMSBE_local_nn} on $\Fc^{s,k}_{R,\infty}$ given Lemma \ref{lemma:global_opt}. 
Moreover, by using similar argument and technique developed in  \cite[Theorem 4.6]{cai2019neural}, we can establish the convergence of Algorithm \ref{DNTD} to ${Q}_{0}\left(\,\cdot\,; \omega^{*}\right)$ as the following.
\begin{Theorem}(Convergence to ${Q}_{0}\left(\,\cdot\,; \omega^{*}\right)$)\label{thm:critic_conv_local}
Set $\eta_{\mathrm{critic}}=\min \{(1-\gamma) / 8,1 / \sqrt{T_{\text{critic}}}\}$ in Algorithm \ref{DNTD}. 
Then the output $Q_s(\,\cdot\,;\bar{\omega})$ of Algorithm \ref{DNTD} satisfies
$$
\mathbb{E}_{{\rm{init}}}\left[\left\|Q_s(\,\cdot\,;\bar{\omega}) - Q_{0}\left(\,\cdot\,; \omega^{*}\right)\right\|^2_{L^2(\nu_\theta)}\right] \leq \Oc\left(\frac{R^3d_{\zeta^k_s}^{3/2}}{\sqrt{M}} + \frac{R^{5/2}d_{\zeta^k_s}^{5/4}}{\sqrt[4]{M}} + \frac{R^2d_{\zeta^k_s}}{\sqrt{T_{\mathrm{critic}}}}\right),
$$
where the expectation is taken with respect to the random initialization.
\end{Theorem}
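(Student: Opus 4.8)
The plan is to adapt the single-agent neural temporal-difference analysis of \cite[Theorem 4.6]{cai2019neural} to the localized critic run by Algorithm \ref{DNTD}. The target here is the \emph{local} minimizer $Q_0(\cdot;\omega^*)$ --- which, by Lemma \ref{lemma:global_opt}, is the unique fixed point of the $\gamma$-contraction $\text{Proj}_{\mathcal{F}_{R,M}}\mathcal{T}^\theta_s$ on the linearized class $\mathcal{F}_{R,M}$ of \eqref{eqn:local_nn} --- not the true $Q_s^{\Pi^\theta}$, whose distance to $Q_0(\cdot;\omega^*)$ contributes the separate $r_{\max}^2\gamma^{k+1}/(1-\gamma)^2$ term of Theorem \ref{thm:critic_conv}; consequently no localization error appears here and only a neural approximation--optimization bound is needed. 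I would proceed in three steps: a linearization estimate, a one-point monotonicity property of the temporal-difference direction, and a projected-SGD recursion.

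First, for the \emph{linearization estimate}: the projection \eqref{eqn:critic_update_2} keeps every iterate in the ball $\|\omega_s(t)-\omega_s(0)\|_\infty\le R/\sqrt M$, and on this ball the ReLU network $Q_s(\cdot;\omega)$ and its tangent model $Q_0(\cdot;\omega)=\phi_{\omega(0)}(\cdot)^\top\omega$ from \eqref{eqn:local_nn}--\eqref{eqn:feature} differ only through neurons whose activation indicator $\mathds{1}\{[\omega(0)]_m^\top\zeta^k_s>0\}$ is flipped by the perturbation; the fraction of such neurons, and hence $\|Q_s(\cdot;\omega)-Q_0(\cdot;\omega)\|_{L^2(\nu_\theta)}$ and $\|\nabla_\omega Q_s(\cdot;\omega)-\phi_{\omega(0)}(\cdot)\|$, is bounded --- after averaging over the Gaussian draw of each $[\omega_s(0)]_m$ --- using the anti-concentration estimate of Assumption \ref{ass:reg_nu} near the hyperplanes $\{[\omega(0)]_m^\top\zeta=0\}$, which yields $\mathbb{E}_{\rm init}\|Q_s(\cdot;\omega)-Q_0(\cdot;\omega)\|^2_{L^2(\nu_\theta)}=\Oc(R^3 d_{\zeta^k_s}^{3/2}/\sqrt M)$ uniformly over the ball. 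Then for the \emph{monotonicity} property, writing $g_t=\delta_{s,t}\nabla_\omega Q_s(\zeta^k_{s,t};\omega_s(t))$ for the stochastic semigradient of \eqref{eqn:critic_update} and $\bar g_t=\mathbb{E}_{\nu_\theta}[g_t]$ for its conditional mean, I would establish --- up to the Step-1 error ---
\[
\langle \bar g_t,\ \omega_s(t)-\omega^*\rangle\ \ge\ (1-\gamma)\,\big\|Q_0(\cdot;\omega_s(t))-Q_0(\cdot;\omega^*)\big\|^2_{L^2(\nu_\theta)},
\]
which follows from the $\gamma$-contraction of $\text{Proj}_{\mathcal{F}_{R,M}}\mathcal{T}^\theta_s$ in $L^2(\nu_\theta)$, the fact that $Q_0(\cdot;\omega^*)$ is its fixed point, and the Pythagorean identity for the $L^2(\nu_\theta)$-projection onto the linear space $\mathcal{F}_{R,M}$; I would also record $\mathbb{E}\|g_t\|_2^2=\Oc(R^2 d_{\zeta^k_s})$ from $|\delta_{s,t}|=\Oc(r_{\max}/(1-\gamma))$ on the constraint ball and $\|\phi_{\omega(0)}(\zeta^k_s)\|_2\le 1$.

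Finally, for the \emph{recursion}: expanding $\|\omega_s(t+1/2)-\omega^*\|_2^2$, using non-expansiveness of the Euclidean projection \eqref{eqn:critic_update_2}, and inserting the two estimates above gives
\[
\mathbb{E}\|\omega_s(t+1)-\omega^*\|_2^2\le \mathbb{E}\|\omega_s(t)-\omega^*\|_2^2-2\eta_{\mathrm{critic}}(1-\gamma)\,\mathbb{E}\big\|Q_0(\cdot;\omega_s(t))-Q_0(\cdot;\omega^*)\big\|^2_{L^2(\nu_\theta)}+\eta_{\mathrm{critic}}^2\Oc(R^2 d_{\zeta^k_s})+\text{(Step-1 error)}.
\]
Telescoping over $t=0,\dots,T_{\mathrm{critic}}-2$, dividing by $2\eta_{\mathrm{critic}}(1-\gamma)T_{\mathrm{critic}}$, using convexity of $\omega\mapsto\|Q_0(\cdot;\omega)-Q_0(\cdot;\omega^*)\|^2_{L^2(\nu_\theta)}$ together with Jensen's inequality to pass to the averaged iterate $\bar\omega_s$ of \eqref{eqn:critic_update_3}, and setting $\eta_{\mathrm{critic}}=\min\{(1-\gamma)/8,\,T_{\mathrm{critic}}^{-1/2}\}$ produces the $\Oc(R^2 d_{\zeta^k_s}/\sqrt{T_{\mathrm{critic}}})$ term; a last application of the linearization estimate converts the bound on $Q_0(\cdot;\bar\omega_s)$ into the claimed one on $Q_s(\cdot;\bar\omega_s)$, the $R^{5/2}d_{\zeta^k_s}^{5/4}/\sqrt[4]M$ term arising from a Cauchy--Schwarz/Young step that pairs the $O(M^{-1/2})$ function error with an $O(1)$ descent quantity.

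The hard part will be the linearization estimate: making the departure of the ReLU network from its tangent model quantitative \emph{uniformly over the whole constraint ball} and in the $L^2(\nu_\theta)$ norm, which is precisely where Assumption \ref{ass:reg_nu} and the Gaussian initialization are used, and where one must carry the \emph{localized} input dimension $d_{\zeta^k_s}\le f(k)|\Ac|$ through the computation rather than the global $d_\zeta=|\Sc||\Ac|-1$ --- this localized scaling is what ultimately makes the critic update scalable. The second and third steps are then a fairly routine projected-SGD argument once the contraction of Lemma \ref{lemma:global_opt} is in place.
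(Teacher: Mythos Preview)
Your proposal is correct and takes essentially the same approach as the paper: the paper's own proof of this theorem is simply a one-line deferral to \cite[Theorem~4.6]{cai2019neural}, and your three-step outline (linearization via Assumption~\ref{ass:reg_nu} and Gaussian initialization, one-point monotonicity from the $\gamma$-contraction of $\text{Proj}_{\Fc_{R,M}}\Tc^\theta_s$ established in Lemma~\ref{lemma:global_opt}, and the projected-SGD telescoping with Jensen for the averaged iterate) is precisely that argument, with the localized input dimension $d_{\zeta^k_s}$ substituted throughout. Your identification of the key point --- that the bound here targets $Q_0(\cdot;\omega^*)$ rather than $Q_s^{\Pi^\theta}$, so no localization error enters --- is exactly right and is what distinguishes this intermediate result from Theorem~\ref{thm:critic_conv}.
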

The proof of Theorem \ref{thm:critic_conv_local} is  straightforward  from  \cite[Theorem 4.6]{cai2019neural} and hence  omitted.

\subsubsection{Convergence to \texorpdfstring{$Q^{\Pi^\theta}_s$}{Lg}}\label{subsec:critic_2}
Next, we analyze the error between the global minimizer of \eqref{eqn:PMSBE_local_nn} and the team-decentralized Q-function $Q^{\Pi^\theta}_s$ (defined in \eqref{eq:Q_decompostion}) to complete the convergence analysis. Different from the single-agent case as in \citet{cai2019neural}, we have to bound an additional error from using the localized information in the critic update, in addition to the neural network approximation-optimization error.

\begin{proof}{Proof of Theorem \ref{thm:critic_conv}}
First recall that by Lemma \ref{lemma:exp_decay_Q}, $Q^{\Pi^\theta}_s$ satisfies the $(c,\rho)$-exponential decay property in Definition \ref{def:exp_decay}, with $c=\frac{r_\text{max}}{1-\gamma}$, $\rho=\sqrt{\gamma}$. Now, let $\widehat{Q}^{\Pi^\theta}_s$ be any localized Q-function in \eqref{eqn:trunc_Q}, then
\begin{equation}\label{eqn:diff_Q_trunc}
    \left|Q^{\Pi^\theta}_s(\zeta) - \widehat{Q}^{\Pi^\theta}_s(\zeta^k_s)\right|\leq c\rho^{k+1}, \quad\forall \zeta\in\Xi.
\end{equation}

By the triangle inequality and $(a+b)^2\leq 2(a^2+b^2)$,
\begin{align}\label{eqn:critic_derive_1}
    \left\|Q_s(\,\cdot\,;\bar{\omega}) - Q^{\Pi^\theta}_s(\cdot)\right\|^2_{L^2(\nu_\theta)} \leq& \left(\left\|Q_s(\,\cdot\,;\bar{\omega}) - Q_{0}\left(\,\cdot\,; \omega^{*}\right)\right\|_{L^2(\nu_\theta)} + \left\|Q^{\Pi^\theta}_s(\cdot) - Q_{0}\left(\,\cdot\,; \omega^{*}\right)\right\|_{L^2(\nu_\theta)}\right)^2\notag\\
    \leq &\, 2\left(\left\|Q_s(\,\cdot\,;\bar{\omega}) - Q_{0}\left(\,\cdot\,; \omega^{*}\right)\right\|^2_{L^2(\nu_\theta)} + \left\|Q^{\Pi^\theta}_s(\cdot) - Q_{0}\left(\,\cdot\,; \omega^{*}\right)\right\|^2_{L^2(\nu_\theta)}\right).
\end{align}
The first term in \eqref{eqn:critic_derive_1} is studied in Theorem \ref{thm:critic_conv_local} and it suffices to bound the second term. By interpolating two intermediate terms $\widehat{Q}^{\Pi^\theta}_s$ and $\text{Proj}_{\Fc_{R,M}}\widehat{Q}^{\Pi^\theta}_s$, we have
\begin{align}\label{eqn:critic_derive_2}
    \left\|Q^{\Pi^\theta}_s(\cdot) - Q_{0}\left(\,\cdot\,; \omega^{*}\right)\right\|_{L^2(\nu_\theta)} \leq& \underbrace{\left\|Q^{\Pi^\theta}_s(\cdot) - \widehat{Q}^{\Pi^\theta}_s(\cdot)\right\|_{L^2(\nu_\theta)} }_{(\text{I})}+ \underbrace{\left\|\widehat{Q}^{\Pi^\theta}_s(\cdot) - \text{Proj}_{\Fc_{R,M}}\widehat{Q}^{\Pi^\theta}_s(\cdot)\right\|_{L^2(\nu_\theta)}}_{(\text{II})}\notag \\
    &+ \underbrace{\left\|Q_{0}\left(\,\cdot\,; \omega^{*}\right) - \text{Proj}_{\Fc_{R,M}}\widehat{Q}^{\Pi^\theta}_s(\cdot)\right\|_{L^2(\nu_\theta)}}_{(\text{III})}.
\end{align}
First, we have $(\text{I})\leq c\rho^{k+1}$  according to \eqref{eqn:diff_Q_trunc}.
To bound (III), we have 
\begin{align}\label{eqn:critic_derive_3}
    (\text{III})
    =& \left\|\text{Proj}_{\Fc_{R,M}}\Tc^\theta_s Q_{0}\left(\,\cdot\,; \omega^{*}\right) - \text{Proj}_{\Fc_{R,M}}\widehat{Q}^{\Pi^\theta}_s(\cdot)\right\|_{L^2(\nu_\theta)}\notag\\
    \leq& \left\|\text{Proj}_{\Fc_{R,M}}\Tc^\theta_s Q_{0}\left(\,\cdot\,; \omega^{*}\right) - \text{Proj}_{\Fc_{R,M}}\Tc^\theta_s{Q}^{\Pi^\theta}_s(\cdot)\right\|_{L^2(\nu_\theta)} + \left\|\text{Proj}_{\Fc_{R,M}}\Tc^\theta_s {Q}^{\Pi^\theta}_s(\cdot) - \text{Proj}_{\Fc_{R,M}}\widehat{Q}^{\Pi^\theta}_s(\cdot)\right\|_{L^2(\nu_\theta)}\notag\\
    \leq&\,\gamma\left\|Q_{0}\left(\,\cdot\,; \omega^{*}\right) - {Q}^{\Pi^\theta}_s(\cdot)\right\|_{L^2(\nu_\theta)} + \left\|\Tc^\theta_s {Q}^{\Pi^\theta}_s(\cdot) - \widehat{Q}^{\Pi^\theta}_s(\cdot)\right\|_{L^2(\nu_\theta)}\notag\\
    =&\,\gamma\left\|Q_{0}\left(\,\cdot\,; \omega^{*}\right) - {Q}^{\Pi^\theta}_s(\cdot)\right\|_{L^2(\nu_\theta)} + \underbrace{\left\|Q^{\Pi^\theta}_s(\cdot) - \widehat{Q}^{\Pi^\theta}_s(\cdot)\right\|_{L^2(\nu_\theta)} }_{(\text{I})}\notag\\
    \leq&\,\gamma\left\|Q_{0}\left(\,\cdot\,; \omega^{*}\right) - {Q}^{\Pi^\theta}_s(\cdot)\right\|_{L^2(\nu_\theta)} + c\rho^{k+1}.
\end{align}
The first line in \eqref{eqn:critic_derive_3} is due to the fact that $Q_0(\cdot;\omega^*)$ is the unique fixed point of the operator $\text{Proj}_{\Fc_{R,M}}\Tc^\theta_s$, (as proved in Lemma \ref{lemma:global_opt}); the third line in \eqref{eqn:critic_derive_3} is because the operator $\text{Proj}_{\Fc_{R,M}}\Tc^\theta_s$ is a $\gamma$-contraction in the $L^2(\nu_\theta)$ norm, and $\text{Proj}_{\Fc_{R,M}}$ is non-expansive; the fourth line in \eqref{eqn:critic_derive_3} uses the fact that $Q^{\Pi^\theta}_s$ is the unique fixed point of $\Tc^\theta_s$; and the last line comes from the fact that $(\text{I})\leq c\rho^{k+1}$. Therefore, combining the self-bounding inequality \eqref{eqn:critic_derive_3} with \eqref{eqn:critic_derive_2} and the bound on $(\text{I})$ gives us
\begin{align*}
    \left\|Q^{\Pi^\theta}_s(\cdot) - Q_{0}\left(\,\cdot\,; \omega^{*}\right)\right\|_{L^2(\nu_\theta)} \leq& \,\frac{1}{1-\gamma}\left( 2c\rho^{k+1} + \underbrace{\left\|\widehat{Q}^{\Pi^\theta}_s(\cdot) - \text{Proj}_{\Fc_{R,M}}\widehat{Q}^{\Pi^\theta}_s(\cdot)\right\|_{L^2(\nu_\theta)}}_{(\text{II})}\right),
\end{align*}
and consequently,
\begin{align}\label{eqn:critic_derive_4}
    \left\|Q^{\Pi^\theta}_s(\cdot) - Q_{0}\left(\,\cdot\,; \omega^{*}\right)\right\|_{L^2(\nu_\theta)}^2 \leq& \,\frac{1}{(1-\gamma)^2}\left( 8c^2\rho^{2k+2} + 2\underbrace{\left\|\widehat{Q}^{\Pi^\theta}_s(\cdot) - \text{Proj}_{\Fc_{R,M}}\widehat{Q}^{\Pi^\theta}_s(\cdot)\right\|_{L^2(\nu_\theta)}^2}_{(\text{II})}\right).
\end{align}
Plugging \eqref{eqn:critic_derive_4} into \eqref{eqn:critic_derive_1} yields
\begin{align}\label{eqn:critic_derive_5}
    &\E_{\rm{init}}\left[\left\|Q_s(\,\cdot\,;\bar{\omega}) - Q^{\Pi^\theta}_s(\cdot)\right\|^2_{L^2(\nu_\theta)}\right]\notag\\ \leq&\,2\left(\E_{\rm{init}}\left[\left\|Q_s(\,\cdot\,;\bar{\omega}) - Q_{0}\left(\,\cdot\,; \omega^{*}\right)\right\|^2_{L^2(\nu_\theta)}\right] + \E_{\rm{init}}\left[\left\|Q^{\Pi^\theta}_s(\cdot) - Q_{0}\left(\,\cdot\,; \omega^{*}\right)\right\|^2_{L^2(\nu_\theta)}\right]\right)\notag\\
    \leq&\,\Oc\left(\frac{R^3d^{3/2}_{\zeta^k_s}}{\sqrt{M}} + \frac{R^{5/2}d_{\zeta^k_s}^{5/4}}{\sqrt[4]{M}} + \frac{R^2d_{\zeta^k_s}}{\sqrt{T}} + c^2\rho^{2k+2}\right)+\frac{4}{(1-\gamma)^2}\E_{\rm{init}}\left[\underbrace{\left\|\widehat{Q}^{\Pi^\theta}_s(\cdot) - \text{Proj}_{\Fc_{R,M}}\widehat{Q}^{\Pi^\theta}_s(\cdot)\right\|_{L^2(\nu_\theta)}^2}_{(\text{II})}\right].
\end{align}

Term (II)  measures the distance between $\widehat{Q}^{\Pi^\theta}_s$ and the class $\Fc_{R,M}$. As discussed in Section \ref{subsec:critic_notation}, the function class $\Fc_{R,M}$ converges to $\Fc^{s,k}_{R,\infty}$ (defined in Assumption \ref{ass:F_infty}) as $M\to\infty$. Consequently, term (II) decreases as the neural network gets wider. To quantitatively characterize the approximation error between $\Fc_{R,M}$ and $\Fc^{s,k}_{R,\infty}$,  one needs the following lemma from \citet{rahimi2008uniform} and \cite[Proposition 4.3]{cai2019neural}:

\begin{Lemma}\label{lemma:F_infty}
    Assume Assumption \ref{ass:F_infty}, we have
    \begin{equation}\label{eqn:F_infty}
        \E_{\rm{init}}\left[\underbrace{\left\|\widehat{Q}^{\Pi^\theta}_s(\cdot) - \text{Proj}_{\Fc_{R,M}}\widehat{Q}^{\Pi^\theta}_s(\cdot)\right\|_{L^2(\nu_\theta)}^2}_{(\text{II})}\right]\leq\Oc\left(\frac{R^2d_{\zeta^k_s}}{M}\right).
    \end{equation}
\end{Lemma}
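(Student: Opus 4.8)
The bound \eqref{eqn:F_infty} is a random-feature approximation estimate, and the plan is to replay the construction of \citet{rahimi2008uniform} and \cite[Proposition 4.3]{cai2019neural}, specialized to the \emph{localized} input $\zeta^k_s\in\Xi^k_s$ (of dimension $d_{\zeta^k_s}\le f(k)|\Ac|$) and the target $\widehat{Q}^{\Pi^\theta}_s$. By Assumption \ref{ass:F_infty}, $\widehat{Q}^{\Pi^\theta}_s$ admits an infinite-width representation
\begin{equation*}
\widehat{Q}^{\Pi^\theta}_s(\zeta^k_s)=Q_{s}(\zeta^k_s;\omega_s(0))+\int \mathds{1}\{v^{\top}\zeta^k_s>0\}\,(\zeta^k_s)^{\top}\iota(v)\,d\mu(v),\qquad \|\iota(v)\|_\infty\le R,
\end{equation*}
where $\mu$ is the $N(0,I_{d_{\zeta^k_s}}/d_{\zeta^k_s})$ density, i.e.\ exactly the law of each row $[\omega_s(0)]_m$ of the critic's initialization. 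The structural fact I would exploit is that the frozen ReLU activation pattern defining the linearized class $\Fc_{R,M}$ in \eqref{eqn:local_nn} is generated by these same rows, so discretizing the $v$-integral by the empirical average over $\{[\omega_s(0)]_m\}_{m\in[M]}$ produces a function lying in $\Fc_{R,M}$.

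Concretely, I would introduce the candidate
\begin{equation*}
\widetilde{Q}_0(\zeta^k_s):=Q_{s}(\zeta^k_s;\omega_s(0))+\frac{1}{M}\sum_{m=1}^{M}\mathds{1}\{[\omega_s(0)]_m^{\top}\zeta^k_s>0\}\,(\zeta^k_s)^{\top}\iota([\omega_s(0)]_m),
\end{equation*}
and verify that $\widetilde{Q}_0=Q_0(\,\cdot\,;\omega)\in\Fc_{R,M}$ for the parameter $\omega$ with rows $\omega_m=[\omega_s(0)]_m+M^{-1/2}\iota([\omega_s(0)]_m)$, since then $\|\omega_m-[\omega_s(0)]_m\|_\infty=M^{-1/2}\|\iota([\omega_s(0)]_m)\|_\infty\le R/\sqrt M$, which is precisely the $\ell^\infty$-ball constraint in \eqref{eqn:local_nn}. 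Because $\text{Proj}_{\Fc_{R,M}}$ returns the $L^2(\nu_\theta)$-closest element of $\Fc_{R,M}$ and the integrand is uniformly bounded (using $\|\zeta^k_s\|_2\le 1$ and $\|\iota\|_\infty\le R$), Fubini gives
\begin{equation*}
\E_{\rm init}\!\left[\left\|\widehat{Q}^{\Pi^\theta}_s-\text{Proj}_{\Fc_{R,M}}\widehat{Q}^{\Pi^\theta}_s\right\|^2_{L^2(\nu_\theta)}\right]\le \E_{\zeta\sim\nu_\theta}\,\E_{\rm init}\!\left[\big(\widehat{Q}^{\Pi^\theta}_s(\zeta^k_s)-\widetilde{Q}_0(\zeta^k_s)\big)^2\right].
\end{equation*}
For each fixed $\zeta^k_s$ the inner difference is the Monte Carlo error of approximating $\int\mathds{1}\{v^\top\zeta^k_s>0\}(\zeta^k_s)^\top\iota(v)\,d\mu(v)$ by an $M$-sample empirical average, whose second moment is $\Oc(M^{-1})$ times the second moment of a single summand; bounding the latter by $\|\zeta^k_s\|_2^2\,\|\iota(v)\|_2^2\le d_{\zeta^k_s}R^2$ (passing from $\|\iota\|_\infty\le R$ to $\|\iota(v)\|_2\le\sqrt{d_{\zeta^k_s}}\,R$) and averaging over $\zeta\sim\nu_\theta$ gives $\Oc(R^2 d_{\zeta^k_s}/M)$.

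I do not anticipate a substantive obstacle, since this is essentially a transcription of \cite[Proposition 4.3]{cai2019neural} with the global pair $\zeta$ (of dimension $d_\zeta=|\Sc||\Ac|-1$) replaced by the localized pair $\zeta^k_s$ (of dimension $d_{\zeta^k_s}$) --- which is exactly where the scaling with the \emph{localized} dimension $d_{\zeta^k_s}$ rather than the global $d_\zeta$ comes from. The one point genuinely deserving care, and for which I would lean on Cai et al.\ rather than redo it, is that the representation in Assumption \ref{ass:F_infty} carries the random shift $Q_s(\,\cdot\,;\omega_s(0))$, so the weight $\iota$ may itself depend on the initialization; their argument shows the induced Monte Carlo error is nonetheless controlled in expectation over the random initialization, and the only adaptation needed here is the dimensional bookkeeping above. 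The remaining verifications --- the membership $\widetilde{Q}_0\in\Fc_{R,M}$, which hinges on the $M^{-1/2}$ scaling in \eqref{eqn:local_nn} being compatible with $\|\iota\|_\infty\le R$, and the single-summand second-moment estimate --- are elementary.
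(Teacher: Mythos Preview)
Your proposal is correct and matches the paper's approach: the paper itself does not give an independent proof but states the lemma as a consequence of \citet{rahimi2008uniform} and \cite[Proposition 4.3]{cai2019neural}, which is exactly the random-feature Monte Carlo argument you sketch. Your write-up is in fact more detailed than the paper's treatment, and the one subtle point you flag --- that the shift $Q_s(\,\cdot\,;\omega_s(0))$ (and hence potentially $\iota$) depends on the same initialization used as Monte Carlo samples --- is precisely the point for which both you and the paper defer to Cai et al.
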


With this lemma, Theorem \ref{thm:critic_conv} follows immediately by plugging \eqref{eqn:F_infty} into \eqref{eqn:critic_derive_5}, and setting $c=\frac{r_\text{max}}{1-\gamma}$, $\rho=\sqrt{\gamma}$, $T_\text{critic}=\Omega(M)$ in \eqref{eqn:critic_derive_5}.
\end{proof} \hfill \mbox{\bf Q.E.D.}

\subsection{Proof of Theorem \ref{thm:actor_conv}: Convergence of Actor Update}\label{app:conv_actor}


The proof of Theorem \ref{thm:actor_conv} consists of two steps: the first step in Section \ref{subsec:actor_1} shows that the actor update converges to a stationary point of $J$ \eqref{eqn:total_reward}, and the second step in Section \ref{subsec:actor_2} bridges the gap between the stationary point and the optimality.

For the rest of this section, we use $\eta$ to denote $\eta_{\text{actor}}$ and $\Bc_s$ to denote $\Bc_s^{\text{actor}}:=\big\{\theta_s\in\R^{M\times d_{\zeta_s}}:\|\theta_s-\theta_s(0)\|_\infty\leq R/\sqrt{M}\big\}$ for ease of notation. Meanwhile, define $\Bc=\prod_{s\in\Sc}\Bc_s$, the product space of $\Bc_s$'s, which is a convex set in $\R^{M\times d_{\zeta}}$.

\subsubsection{Convergence to Stationary Point}\label{subsec:actor_1}

\begin{Definition}\label{def:stationary}
    A point $\widetilde{\theta}\in\Bc$ is called a stationary point of $J(\cdot)$ if it holds that
\begin{equation}\label{eqn:stationary_theta}
    \nabla_\theta J(\widetilde{\theta})^{\top}(\theta-\widetilde{\theta})\leq 0,\quad\forall\theta\in\Bc.
\end{equation}
\end{Definition}

Define the following mapping $G$ from $\R^{M\times d_{\zeta}}$ to itself:
\begin{equation}\label{eqn:grad_map}
    G(\theta):=\eta^{-1}\cdot\left[\text{Proj}_\Bc\left(\theta+\eta\cdot\nabla_\theta J(\theta)\right)-\theta\right].
\end{equation}
It is  well-known that \eqref{eqn:stationary_theta} holds if and only if $G(\widetilde{\theta})=0$ (\citet{sra2012optimization}). Now denote $\rho(t):=G(\theta(t))$, where $\theta(t)=\{\theta_s(t)\}_{s\in\Sc}$ is the actor parameter updated in Algorithm \ref{DNPG} in iteration $t$. 

To show that Algorithm \ref{DNPG} converges to a stationary point, we focus on analyzing $\|\rho(t)\|_2$.

\begin{Theorem}\label{thm:actor_stationary}
Assume Assumptions \ref{ass:variance_bound} - \ref{ass:lip_grad}. Set $\eta = (T_{\mathrm{actor}})^{-1/2}$ and assume $1-L\eta\geq1/2$, where $L$ is the Lipschitz constant in Assumotion \ref{ass:lip_grad}.
Then the output $\{\theta(t)\}_{t\in[T_{\mathrm{actor}}]}$ of Algorithm \ref{DNPG} satisfies
\begin{align}\label{eqn:actor_stat}
    \min_{t\in[T_{\text{actor}}]}\E\left[\|\rho(t)\|^2_2\right]\leq\frac{8\tau^2\Sigma^2|\Sc|}{B}+\frac{4}{\sqrt{T_{\text{actor}}}}\E[J(\theta({T_{\text{actor}}}+1))-J(\theta(1))]+\epsilon_Q(T_{\text{actor}}).
\end{align}
Here $\epsilon_Q$ measures the error accumulated from the critic steps which is defined as
\begin{align}\label{eqn:actor_stat_Q}
    \epsilon_Q(T_{\text{actor}})=&\,\frac{32\tau DRd_{\zeta_s}^{1/2}|\Sc|}{(1-\gamma)\eta T_{\text{actor}}}\cdot\sum_{t=1}^{T_{\text{actor}}} \sum_{s\in\Sc}\E\left[\left\|Q_s(\,\cdot\,;\bar{\omega}_s,t) - Q_s^{\Pi^{\theta(t)}}\left(\cdot\right)\right\|_{L^2(\nu_{\theta(t)})}\right]\notag\\
    &+\frac{16\tau^2D^2|\Sc|^2}{(1-\gamma)^2 T_{\text{actor}}}\cdot\sum_{t=1}^{T_{\text{actor}}}\sum_{s\in\Sc}\E\left[\left\|Q_s(\,\cdot\,;\bar{\omega}_s,t) - Q_s^{\Pi^{\theta(t)}}\left(\cdot\right)\right\|^2_{L^2(\nu_{\theta(t)})}\right],
\end{align}
where $\{Q_{s}(\,\cdot\,;\bar{\omega}_s,t)\}_{s\in\Sc}$ is the output of the critic update at step $t$ in Algorithm \ref{DNPG}. All expectations in \eqref{eqn:actor_stat} and \eqref{eqn:actor_stat_Q} are taken over all randomness in Algorithm \ref{DNTD} and Algorithm \ref{DNPG}.
\end{Theorem}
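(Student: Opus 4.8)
The plan is to follow the standard descent-lemma analysis for projected stochastic gradient ascent on the nonconvex objective $J$, but carefully tracking the two extra sources of error specific to our setting: the bias introduced by the localized gradient estimator $\widehat g_s$ relative to the true gradient $\nabla_{\theta_s} J$, and the critic approximation error $\|Q_s(\cdot;\bar\omega_s,t)-Q_s^{\Pi^{\theta(t)}}\|_{L^2(\nu_\theta)}$. First I would invoke Assumption \ref{ass:lip_grad} to write the descent inequality $J(\theta(t+1)) \ge J(\theta(t)) + \nabla_\theta J(\theta(t))^\top(\theta(t+1)-\theta(t)) - \tfrac{L}{2}\|\theta(t+1)-\theta(t)\|_2^2$. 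Using the projection update $\theta(t+1) = \text{Proj}_{\Bc}(\theta(t)+\eta\,\widehat g(\theta(t)))$ and the definition of the gradient mapping $G$ in \eqref{eqn:grad_map}, together with the nonexpansiveness of $\text{Proj}_\Bc$ and the firm-nonexpansiveness inequality for projections onto convex sets, I would lower-bound the inner-product term and obtain, after rearranging, $\eta(1-L\eta)\|\rho(t)\|_2^2 \le 2\big(J(\theta(t+1))-J(\theta(t))\big) + \big(\text{cross terms involving } \widehat g(\theta(t))-\nabla_\theta J(\theta(t))\big)$. With $1-L\eta\ge 1/2$ this gives $\tfrac{\eta}{2}\|\rho(t)\|_2^2$ on the left.

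Next I would decompose the error $\widehat g_s(\theta(t))-\nabla_{\theta_s}J(\theta(t))$ into three pieces: (i) the zero-mean noise $\xi_s(t)=\widehat g_s(\theta(t))-\E[\widehat g_s(\theta(t))]$, controlled in second moment by $\tau^2\Sigma^2/B$ via Assumption \ref{ass:variance_bound}; (ii) the critic-substitution error, i.e. the difference between $\E[\widehat g_s(\theta(t))]$ (which uses $Q_y(\cdot;\bar\omega_y,t)$) and $g_s(\theta(t))$ from \eqref{eqn:trunc_grad_nn} (which uses $\widehat Q_y^{\Pi^{\theta(t)}}$) — here I would use $\|\Phi\|_2\le 2$ from Lemma \ref{lemma:local_grad_fisher}, the change of measure from $\sigma_\theta$ to $\nu_\theta$ afforded by Assumption \ref{ass:reg_measure} (Cauchy–Schwarz against the Radon–Nikodym derivative, giving the factor $D$), and sum over the at-most-$|\Sc|$ relevant neighbors $y$; and (iii) the localization bias $\|g_s(\theta(t))-\nabla_{\theta_s}J(\theta(t))\| \le \tfrac{c_0\tau|\Sc|}{1-\gamma}\rho^{k+1}$ from Lemma \ref{lemma:local_grad_fisher}. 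In Theorem \ref{thm:actor_conv} the choice $\gamma\le (T_{\text{actor}})^{-2/k}$ makes piece (iii) lower order, so for the purposes of \eqref{eqn:actor_stat} it is folded into $\epsilon_Q$ or treated as negligible; the dominant non-noise contribution is (ii), which produces exactly the two terms in $\epsilon_Q(T_{\text{actor}})$ — a linear-in-error term (with coefficient $\propto \tau D R d_{\zeta_s}^{1/2}|\Sc|$, the $Rd_{\zeta_s}^{1/2}$ bounding $\|\bar\omega_y\|$ hence the magnitude of $Q_y(\cdot;\bar\omega_y,t)$) from the cross term $\langle \nabla_\theta J,\, \widehat g-\nabla_\theta J\rangle$, and a quadratic-in-error term (coefficient $\propto \tau^2 D^2|\Sc|^2$) from the $\|\widehat g\|_2^2 \le 2\|\nabla_\theta J\|_2^2 + 2\|\widehat g-\nabla_\theta J\|_2^2$ bound feeding the $\tfrac{L}{2}\|\theta(t+1)-\theta(t)\|_2^2$ term and the $\eta\|\widehat g\|^2$ term.

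Finally I would sum the per-step inequality over $t=1,\dots,T_{\text{actor}}$, telescope the $J(\theta(t+1))-J(\theta(t))$ terms into $J(\theta(T_{\text{actor}}+1))-J(\theta(1))$, divide by $\eta T_{\text{actor}} = \sqrt{T_{\text{actor}}}$, take expectations (the noise term contributes $8\tau^2\Sigma^2|\Sc|/B$ after bounding $\sum_s \E\|\xi_s(t)\|_2^2$ and pulling through the constants), and bound $\min_t$ by the average, yielding \eqref{eqn:actor_stat}. The main obstacle I anticipate is piece (ii): handling the change of measure from the visitation measure $\sigma_{\theta(t)}$ (under which $\widehat g_s$ is defined) to the stationary measure $\nu_{\theta(t)}$ (under which the critic error is measured), cleanly enough that the critic error enters with the stated $L^2(\nu_\theta)$ norm and the correct dimensional/$D$-dependence — this requires Assumption \ref{ass:reg_measure} and a careful Cauchy–Schwarz, and it is also where the decomposition $Q^{\Pi^\theta}=\sum_s Q_s^{\Pi^\theta}$ from Lemma \ref{equiv_V_tildeV} must be used to reduce the global critic error to a sum of team-wise critic errors. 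The bookkeeping of which constants ($D$, $R$, $d_{\zeta_s}^{1/2}$, $|\Sc|$, $\tau$, $1-\gamma$) attach to which term is tedious but routine once the decomposition is set up.
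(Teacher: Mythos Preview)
Your overall plan --- descent lemma via Assumption \ref{ass:lip_grad}, decomposition of the gradient error, telescoping, averaging --- matches the paper's proof. Two points, however, deserve correction.

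\medskip
\textbf{First (a structural difference).} The paper does not obtain $\|\rho(t)\|_2^2$ directly from the descent lemma. The projection identity you need, namely $\widehat g_s(\theta(t))^\top \delta_s(t)\geq\|\delta_s(t)\|_2^2$, holds for $\delta_s(t):=\eta^{-1}(\theta_s(t{+}1)-\theta_s(t))$, the \emph{actual} projected step, not for $\rho_s(t)$, which is defined with the exact gradient $\nabla_{\theta_s}J$. The paper therefore (i) uses the descent lemma and this projection identity to bound $\E[\|\delta(t)\|_2^2]$ (its eq.~\eqref{eqn:actor_ana_7}), (ii) separately bounds $\|\delta_s(t)-\rho_s(t)\|_2\leq\|\widehat g_s(\theta(t))-\nabla_{\theta_s}J(\theta(t))\|_2$ by nonexpansiveness of $\text{Proj}_{\Bc_s}$, and (iii) combines via $\|\rho(t)\|_2^2\leq 2\|\delta(t)\|_2^2+2\|\delta(t)-\rho(t)\|_2^2$. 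Your ``firm-nonexpansiveness'' shortcut to $\eta(1-L\eta)\|\rho(t)\|_2^2$ on the left is not available here; you will end up needing this two-step route anyway. (Incidentally, the factor $Rd_{\zeta_s}^{1/2}$ in the linear-error term of $\epsilon_Q$ arises from bounding $\|\delta_s(t)\|_2$ by the $\ell_2$-diameter of $\Bc_s$ divided by $\eta$, not from bounding the magnitude of $Q_y(\cdot;\bar\omega_y)$.)

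\medskip
\textbf{Second (a genuine gap).} Your piece (iii), the localization bias $\|g_s(\theta)-\nabla_{\theta_s}J(\theta)\|\leq \tfrac{c_0\tau|\Sc|}{1-\gamma}\rho^{k+1}$, does not appear anywhere in the stated bound: $\epsilon_Q$ in \eqref{eqn:actor_stat_Q} is expressed purely through $\|Q_s(\cdot;\bar\omega_s,t)-Q_s^{\Pi^{\theta(t)}}\|_{L^2(\nu_{\theta(t)})}$, so there is nothing to ``fold'' a residual $\rho^{k+1}$ into. The paper eliminates (iii) entirely by the same trick used in the proof of Lemma \ref{lemma:local_grad_fisher}: in $\E[\widehat g_s(\theta(t))]$, the sum $\sum_{y\in\Nc^k_s}Q_y(\mu(\Nc^k_y),h(\Nc^k_y);\bar\omega_y)$ can be extended to $\sum_{y\in\Sc}$ because for $y\notin\Nc^k_s$ the factor $Q_y$ does not depend on $h(s)$ and hence its product with $\nabla_{\theta_s}\log\Pi_s^{\theta_s}(h(s)\mid\mu(s))$ has zero $\sigma_{\theta}$-expectation. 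After this extension, $\nabla_{\theta_s}J(\theta)-\E[\widehat g_s(\theta)]$ is controlled directly by $\sum_{y\in\Sc}\|Q_y(\cdot;\bar\omega_y)-Q_y^{\Pi^\theta}\|_{L^2(\nu_\theta)}$ (via the change of measure of Assumption \ref{ass:reg_measure}, exactly as you anticipated), with no separate localization term. Without this trick your decomposition produces extra $\rho^{k+1}$ contributions and does not yield \eqref{eqn:actor_stat} as stated.
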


\begin{proof}{Proof of Theorem \ref{thm:actor_stationary}}

Let $t\in[T_{\mathrm{actor}}]$, we first lower bound the difference between the expected total rewards of $\Pi^{\theta(t+1)}$ and $\Pi^{\theta(t)}$.
By Assumption \ref{ass:lip_grad}, $\nabla_{\theta} J\left(\theta\right)$ is $L$-Lipschitz continuous. Hence by Taylor's expansion,
\begin{equation}\label{eqn:actor_ana_1}
J\left(\theta(t+1)\right)-J\left(\theta(t)\right) \geq \eta \cdot \nabla_{\theta} J\left(\theta(t)\right)^{\top} \delta(t)-L / 2 \cdot\left\|\theta(t+1)-\theta(t)\right\|_{2}^{2},
\end{equation}
where $\delta(t)=\left(\theta(t+1)-\theta(t)\right) / \eta$. Meanwhile denote $\xi_s(t)=\widehat{g}_s(\theta(t))-\mathbb{E}\left[\widehat{g}_s(\theta(t))\right]$, where $\widehat{g}_s(\theta(t))$ is defined in \eqref{eqn:trunc_grad_nn_est} and the expectation is taken over $\sigma_{\theta(t)}$ given $\{\bar{\omega}_s\}_{s\in\Sc}$. Then
\begin{align}\label{eqn:actor_ana_2}
    \nabla_{\theta} J\left(\theta(t)\right)^{\top} \delta(t)&=\sum_{s\in\Sc}\nabla_{\theta_s} J\left(\theta(t)\right)^{\top} \delta_s(t)\notag\\
    &=\sum_{s\in\Sc}\left[\left(\nabla_{\theta_s} J\left(\theta(t)\right)-\mathbb{E}\left[\widehat{g}_s(\theta(t))\right]\right)^{\top} \delta_{s}(t)-\xi_s(t)^{\top} \delta_{s}(t)+\widehat{g}_s(\theta(t))^{\top} \delta_{s}(t)\right],
\end{align}
where $\delta_s(t):=\left(\theta_s(t+1)-\theta_s(t)\right) / \eta$. The first term in \eqref{eqn:actor_ana_2} represents the error of estimating $\nabla_{\theta_s} J\left(\theta(t)\right)$ using $$\mathbb{E}\left[\widehat{g}_s(\theta(t))\right]=\frac{1}{1-\gamma}\mathbb{E}_{\sigma_{\theta(t)}}\left[\Bigg[\sum_{y\in\Nc^k_s}{Q}_y\left(\mu(\Nc^k_y), h(\Nc^k_y);\bar{\omega}_y,t\right)\Bigg]\nabla_{\theta_s} \log \Pi^{\theta_{s}}(h(s) \mid \mu(s))\right].$$

To bound the first term, first notice that
$$\mathbb{E}\left[\widehat{g}_s(\theta(t))\right]=\frac{1}{1-\gamma}\mathbb{E}_{\sigma_{\theta(t)}}\left[\left[\sum_{y\in\Sc}{Q}_y\left(\mu(\Nc^k_y), h(\Nc^k_y);\bar{\omega}_y,t\right)\right]\nabla_{\theta_s} \log \Pi^{\theta_{s}}(h(s) \mid \mu(s))\right].$$
This is because for all $y\not\in\Nc^k_s$, ${Q}_y\left(\mu(\Nc^k_y), h(\Nc^k_y);\bar{\omega}_y\right)$ is independent of $s$ and consequently, we can verify that
$$\mathbb{E}_{\sigma_{\theta(t)}}\left[\left[\sum_{y\not\in\Nc^k_s}{Q}_y\left(\mu(\Nc^k(y)), h(\Nc^k(y));\bar{\omega}_y,t\right)\right]\nabla_{\theta_s} \log \Pi^{\theta_{s}}(h(s) \mid \mu(s))\right]=0.$$
Therefore, following the similar computation in Lemma D.2, \citet{cai2019neural}, we have
\begin{align}\label{eqn:actor_ana_3}
    \left|\left(\nabla_{\theta_s} J\left(\theta(t)\right)-\mathbb{E}\left[\widehat{g}_s(\theta(t))\right]\right)^{\top} \delta_{s}(t)\right|\leq\frac{4\tau DRd_{\zeta_s}^{1/2}}{(1-\gamma)\eta}\sum_{s\in\Sc}\left\|Q_s(\,\cdot\,;\bar{\omega}_s,t) - Q_s^{\theta(t)}\left(\cdot\right)\right\|_{L^2(\nu_{\theta(t)})}.
\end{align}

To bound the second term in \eqref{eqn:actor_ana_2}, we simply have
\begin{equation}\label{eqn:actor_ana_4}
    \xi_s(t)^{\top} \delta_{s}(t)\le \|\xi_s(t)\|_2^2 + \|\delta_{s}(t)\|_2^2.
\end{equation}

To handle the last term in \eqref{eqn:actor_ana_2}, we have
\begin{align}\label{eqn:actor_ana_5}
&\widehat{g}_s(\theta(t))^{\top} \delta_{s}(t)-\|\delta_{s}(t)\|^2_2=\eta^{-1}\cdot(\eta\widehat{g}_s(\theta(t))-\left(\theta_s(t+1)-\theta_s(t)\right))^{\top} \delta_{s}\notag\\
=&\eta^{-1}\cdot\left(\theta_s(t+1/2)-\text{Proj}_{\Bc_s}(\theta_s(t+1/2))\right)^{\top} \delta_{s}(t)\notag\\
=&\eta^{-2}\cdot\left(\theta_s(t+1/2)-\text{Proj}_{\Bc_s}(\theta_s(t+1/2))\right)^{\top} \left(\text{Proj}_{\Bc_s}(\theta_s(t+1/2))-\theta_s(t)\right)\geq 0
\end{align}
Here we write $\theta_s(t)+\eta\widehat{g}_s(\theta(t))$ as $\theta_s(t+1/2)$ to simplify the notation. The last inequality comes from the property of the projection onto a convex set.

Therefore, combining \eqref{eqn:actor_ana_2}, \eqref{eqn:actor_ana_3}, \eqref{eqn:actor_ana_4} and \eqref{eqn:actor_ana_5} suggests
\begin{equation*}
    \nabla_{\theta_s} J\left(\theta(t)\right)^{\top} \delta_s(t)\geq-\frac{4\tau DRd_{\zeta_s}^{1/2}}{(1-\gamma)\eta}\sum_{s\in\Sc}\left[\left\|Q_s(\,\cdot\,;\bar{\omega}_s,t) - Q_s^{\theta(t)}\left(\cdot\right)\right\|_{L^2(\nu_{\theta(t)})}\right] + \frac{1}{2}\left(\|\delta_{s}(t)\|_2^2 - \|\xi_s(t)\|_2^2\right).
\end{equation*}
Consequently,
\begin{equation}\label{eqn:actor_ana_6}
    \nabla_{\theta} J\left(\theta(t)\right)^{\top} \delta(t)\geq-\frac{4\tau DRd_{\zeta_s}^{1/2}}{(1-\gamma)\eta}|\Sc|\sum_{s\in\Sc}\left[\left\|Q_s(\,\cdot\,;\bar{\omega}_s,t) - Q_s^{\Pi^{\theta(t)}}\left(\cdot\right)\right\|_{L^2(\nu_{\theta(t)})}\right] + \frac{1}{2}\left(\|\delta(t)\|_2^2 - \|\xi(t)\|_2^2\right).
\end{equation}
Thus, by plugging \eqref{eqn:actor_ana_6} into \eqref{eqn:actor_ana_1} and by Assumption \ref{ass:variance_bound}, we have
\begin{align}\label{eqn:actor_ana_7}
    \frac{1-L\cdot\eta}{2}\E\left[\|\delta(t)\|_2^2\right]\leq\,&\eta^{-1}\cdot\E\left[J(\theta(t+1))-J(\theta(t))\right]+\frac{\tau^2\Sigma^2|\Sc|}{2B}\notag\\
    &+\frac{4\tau DRd_{\zeta_s}^{1/2}|\Sc|}{(1-\gamma)\eta}\sum_{s\in\Sc}\left\|Q_s(\,\cdot\,;\bar{\omega}_s,t) - Q_s^{\Pi^{\theta(t)}}\left(\cdot\right)\right\|_{L^2(\nu_{\theta(t)})}.
\end{align}
Here the expectation is taken over $\sigma_{\theta(t)}$ given $\{\bar{\omega}_s\}_{s\in\Sc}$.

Now, in order to bridge the gap between $\|\delta(t)\|_2$ in \eqref{eqn:actor_ana_7} and $\|\rho(t)\|_2=\|G(\theta(t))\|_2$ in \eqref{eqn:grad_map}, we next will bound the difference $\|\delta(t)-\rho(t)\|_2$.
We start with defining a local gradient mapping $G_s$ from $\R^{M\times d_{\zeta}}$ to $\R^{M\times d_{\zeta_s}}$:
\begin{equation}\label{eqn:grad_map_local}
    G_s(\theta):=\eta^{-1}\cdot\left[\text{Proj}_{\Bc_s}\left(\theta_s+\eta\cdot\nabla_{\theta_s} J(\theta)\right)-\theta_s\right].
\end{equation}
Since $\Bc_s$ is an $l_\infty$-ball around the initialization, it is easy to verify that $G_s(\theta)=(G(\theta))_s$.
Therefore, we can further define $\rho_s(t)=G_s(\theta(t))$ and the  following decomposition holds:
$$\|\delta(t)-\rho(t)\|_2^2=\sum_{s\in\Sc}\|\delta_s(t)-\rho_s(t)\|_2^2.$$
From the definitions of $\delta_s(t)$ and $\rho_s(t)$,
\begin{align*}
    \|\delta_s(t)-\rho_s(t)\|_2&=\eta^{-1}\cdot\left\|\text{Proj}_{\Bc_s}\left(\theta_s+\eta\cdot\nabla_{\theta_s} J(\theta)\right)-\theta_s-\text{Proj}_{\Bc_s}\left(\theta_s+\eta\cdot \widehat{g}_s(\theta)\right)+\theta_s\right\|_2\\
    &=\eta^{-1}\cdot\left\|\text{Proj}_{\Bc_s}\left(\theta_s+\eta\cdot\nabla_{\theta_s} J(\theta)\right)-\text{Proj}_{\Bc_s}\left(\theta_s+\eta\cdot \widehat{g}_s(\theta)\right)\right\|_2\\
    &\leq\eta^{-1}\cdot\left\|\theta_s+\eta\cdot\nabla_{\theta_s} J(\theta)-\theta_s+\eta\cdot \widehat{g}_s(\theta)\right\|_2=\left\|\nabla_{\theta_s} J(\theta)- \widehat{g}_s(\theta)\right\|_2
\end{align*}
Following similar calculations in  \cite[Lemma D.3]{cai2019neural},
\begin{align}\label{eqn:actor_ana_8}
    \E\left[\|\nabla_{\theta_s} J(\theta)- \widehat{g}_s(\theta)\|_2^2\right]&\leq\frac{2\tau^2\Sigma^2}{B}+\frac{8\tau^2D^2}{(1-\gamma)^2}\left(\sum_{s\in\Sc}\left\|Q_s(\,\cdot\,;\bar{\omega}_s,t) - Q_s^{\Pi^{\theta(t)}}\left(\cdot\right)\right\|_{L^2(\nu_{\theta(t)})}\right)^2\notag\\
    &\leq\frac{2\tau^2\Sigma^2}{B}+\frac{8\tau^2D^2|\Sc|}{(1-\gamma)^2}\left(\sum_{s\in\Sc}\left\|Q_s(\,\cdot\,;\bar{\omega}_s,t) - Q_s^{\Pi^{\theta(t)}}\left(\cdot\right)\right\|^2_{L^2(\nu_{\theta(t)})}\right).
\end{align}
The expectation is taken over $\sigma_{\theta(t)}$ given $\{\bar{\omega}_s\}_{s\in\Sc}$. Consequently,
\begin{align}\label{eqn:actor_ana_9}
    \E\left[\|\delta(t)- \rho(t)\|_2^2\right]\leq\frac{2\tau^2\Sigma^2|\Sc|}{B}+\frac{8\tau^2D^2|\Sc|^2}{(1-\gamma)^2}\left(\sum_{s\in\Sc}\left\|Q_s(\,\cdot\,;\bar{\omega}_s,t) - Q_s^{\Pi^{\theta(t)}}\left(\cdot\right)\right\|^2_{L^2(\nu_{\theta(t)})}\right).
\end{align}

Set $\eta=1/\sqrt{T_{\text{actor}}}$ and take \eqref{eqn:actor_ana_7} and \eqref{eqn:actor_ana_9}, we obtain \eqref{eqn:actor_stat} from the following estimations:
\begin{align*}
    \min_{t\in[T_{\text{actor}}]}\E\left[\|\rho(t)\|^2_2\right]&\leq\frac{1}{T_{\text{actor}}}\cdot\sum_{t=1}^{T_{\text{actor}}}\|\rho(t)\|^2_2\leq\frac{2}{T_{\text{actor}}}\cdot\sum_{t=1}^{T_{\text{actor}}}\left(\E\left[\|\delta(t)-\rho(t)\|^2_2\right] + \E\left[\|\delta(t)\|^2_2\right]\right)\notag\\
    &\leq\frac{2}{T_{\text{actor}}}\cdot\sum_{t=1}^{T_{\text{actor}}}\left(\E\left[\|\delta(t)-\rho(t)\|^2_2\right] + 2(1-L\cdot\eta)\E\left[\|\delta(t)\|^2_2\right]\right)\notag\\
    &\leq\frac{8\tau^2\Sigma^2|\Sc|}{B}+\frac{4}{\sqrt{T_{\text{actor}}}}\E[J(\theta({T_{\text{actor}}}+1))-J(\theta(1))]+\epsilon_Q(T_{\text{actor}}),
\end{align*}
where $\epsilon_Q$ measures the error accumulated from the critic steps which  is defined in \eqref{eqn:actor_stat_Q}, i.e.,
\begin{align*}
    \epsilon_Q(T_{\text{actor}})=&\,\frac{32\tau DRd_{\zeta_s}^{1/2}|\Sc|}{(1-\gamma)\eta T_{\text{actor}}}\cdot\sum_{t=1}^{T_{\text{actor}}} \sum_{s\in\Sc}\E\left[\left\|Q_s(\,\cdot\,;\bar{\omega}_s) - Q_s^{\Pi^{\theta(t)}}\left(\cdot\right)\right\|_{L^2(\nu_{\theta(t)})}\right]\notag\\
    &+\frac{16\tau^2D^2|\Sc|^2}{(1-\gamma)^2 T_{\text{actor}}}\cdot\sum_{t=1}^{T_{\text{actor}}}\sum_{s\in\Sc}\E\left[\left\|Q_s(\,\cdot\,;\bar{\omega}_s) - Q_s^{\Pi^{\theta(t)}}\left(\cdot\right)\right\|^2_{L^2(\nu_{\theta(t)})}\right].
\end{align*}
Here the expectations in \eqref{eqn:actor_stat} and \eqref{eqn:actor_stat_Q} are taken over all randomness in Algorithm \ref{DNTD} and Algorithm \ref{DNPG}.
\end{proof}
\hfill \mbox{\bf Q.E.D.}

\subsubsection{Bridging the gap between Stationarity and Optimality}\label{subsec:actor_2}

Recall that $\sigma_\theta$ in \eqref{eqn:visitation_measure} denotes the state-action visitation measure under policy $\Pi^\theta$. Denote $\bar{\sigma}_\theta$ as the state visitation measure under policy $\Pi^\theta$. Consequently, $$\bar{\sigma}_\theta(\mu)\Pi^\theta(h\mid\mu)={\sigma}_\theta(\mu,h).$$

Following similar steps in the proof of  \cite[Theorem 4.8]{cai2019neural}, one can characterize the global optimality of the obtained stationary point $\widetilde{\theta}\in\Bc$ as the following.

\begin{Lemma}\label{lemma:stat_opt}
Let $\widetilde{\theta}\in\Bc$ be a stationary point of $J(\cdot)$ satisfying  condition \eqref{eqn:stationary_theta} and let $\theta^*\in\Bc$ be the global maximum point of $J(\cdot)$ in $\Bc$. Then the following inequality holds:
\begin{align}\label{eqn:stat_opt_center}
    (1-\gamma)\left(J(\theta^*)-J(\widetilde\theta)\right)&\leq\frac{2{r}_\text{max}}{1-\gamma}\inf_{\theta\in\Bc} \left\| u_{\widetilde{\theta}}(\mu,h)- \sum_{s \in \Sc}\phi_{\widetilde{\theta}_s}(\mu(s),h(s))^\top\theta_s\right\|_{L^2(\sigma_{\widetilde\theta})},
\end{align}
where $ u_{\widetilde{\theta}}(\mu,h) := \frac{d\sigma_{\theta^*}}{d\sigma_{\widetilde\theta}}(\mu,h)-\frac{d\bar\sigma_{\theta^*}}{d\bar\sigma_{\widetilde\theta}}(\mu)+\sum_{s\in\Sc}\phi_{\widetilde{\theta}_s}(\mu(s),h(s))^\top\widetilde{\theta}_s$, and $\frac{\mathrm{d}\sigma_{\theta^*}}{\mathrm{d}\sigma_{\widetilde\theta}}$,$\frac{\mathrm{d}\bar\sigma_{\theta^*}}{\mathrm{d}\bar\sigma_{\widetilde\theta}}$ are the Radon-Nikodym derivatives between the corresponding measures.
\end{Lemma}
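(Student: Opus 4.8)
The plan is to follow the strategy of \cite[Theorem~4.8]{cai2019neural}, adapted to the decentralized setting: view the lifted process of Lemma~\ref{equiv_V_tildeV} as a single measure-valued MDP with ``state'' $\mu$ and ``action'' $h$, and track the product structure of the energy-based policy $\Pi^{\theta}=\prod_{s\in\Sc}\Pi^{\theta_s}_s$. Write $\ell_\theta(\mu,h):=\sum_{s\in\Sc}\phi_{\widetilde\theta_s}(\mu(s),h(s))^{\top}\theta_s$, so that $u_{\widetilde\theta}=\frac{\mathrm{d}\sigma_{\theta^*}}{\mathrm{d}\sigma_{\widetilde\theta}}-\frac{\mathrm{d}\bar\sigma_{\theta^*}}{\mathrm{d}\bar\sigma_{\widetilde\theta}}+\ell_{\widetilde\theta}$. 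First I would apply the Performance Difference Lemma (\citet{KL2002}) to the lifted MDP, using $J(\theta)=\E_{\mu_0\sim P_0}[\widetilde V^{\Pi^{\theta}}(\mu_0)]$, to get
$$(1-\gamma)\big(J(\theta^*)-J(\widetilde\theta)\big)=\E_{(\mu,h)\sim\sigma_{\theta^*}}\big[A^{\widetilde\theta}(\mu,h)\big],\qquad A^{\widetilde\theta}(\mu,h):=Q^{\Pi^{\widetilde\theta}}(\mu,h)-\widetilde V^{\Pi^{\widetilde\theta}}(\mu),$$
where $Q^{\Pi^{\widetilde\theta}}=\sum_{s\in\Sc}Q^{\Pi^{\widetilde\theta}}_s$ by Lemma~\ref{equiv_V_tildeV}. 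The structural fact I would exploit is $\E_{h\sim\Pi^{\widetilde\theta}(\cdot\mid\mu)}[A^{\widetilde\theta}(\mu,h)]=0$, valid because $\widetilde V^{\Pi^{\widetilde\theta}}(\mu)=\E_{h\sim\Pi^{\widetilde\theta}(\cdot\mid\mu)}[Q^{\Pi^{\widetilde\theta}}(\mu,h)]$; hence changing measure from $\sigma_{\theta^*}$ to $\sigma_{\widetilde\theta}$ and subtracting the $\mu$-measurable function $\frac{\mathrm{d}\bar\sigma_{\theta^*}}{\mathrm{d}\bar\sigma_{\widetilde\theta}}(\mu)$ (which contributes nothing, since $\E_{\sigma_{\widetilde\theta}}[g(\mu)A^{\widetilde\theta}(\mu,h)]=\E_{\bar\sigma_{\widetilde\theta}}[g(\mu)\,\E_{h\sim\Pi^{\widetilde\theta}(\cdot\mid\mu)}A^{\widetilde\theta}]=0$) turns the identity into $(1-\gamma)(J(\theta^*)-J(\widetilde\theta))=\E_{\sigma_{\widetilde\theta}}[(u_{\widetilde\theta}-\ell_{\widetilde\theta})A^{\widetilde\theta}]$.

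Next I would bring in the stationarity condition. By Lemma~\ref{lemma:policy_grad}, the energy-based form \eqref{eqn:energy_policy}, and the identity $\nabla_{\theta_s}\log\Pi^{\theta}(h\mid\mu)=\tau\,\Phi(\widetilde\theta,s,\mu,h)$ established in the proof of Lemma~\ref{lemma:local_grad_fisher}, stationarity of $\widetilde\theta$ gives, for every $\theta\in\Bc$,
$$0\ \ge\ \nabla_\theta J(\widetilde\theta)^{\top}(\theta-\widetilde\theta)=\frac{\tau}{1-\gamma}\,\E_{\sigma_{\widetilde\theta}}\Big[Q^{\Pi^{\widetilde\theta}}(\mu,h)\sum_{s\in\Sc}\Phi(\widetilde\theta,s,\mu,h)^{\top}(\theta_s-\widetilde\theta_s)\Big].$$
Because $\Pi^{\widetilde\theta}(\cdot\mid\mu)$ factorizes over $s$ and $\Phi(\widetilde\theta,s,\cdot)$ is exactly the $\Pi^{\widetilde\theta_s}_s(\cdot\mid\mu(s))$-centering of $\phi_{\widetilde\theta_s}$, the inner sum equals the conditional centering $\ell_{\theta-\widetilde\theta}(\mu,h)-\E_{h'\sim\Pi^{\widetilde\theta}(\cdot\mid\mu)}[\ell_{\theta-\widetilde\theta}(\mu,h')]$; I would then transfer this centering onto the other factor (legitimate by the same identity $\widetilde V^{\Pi^{\widetilde\theta}}(\mu)=\E_{h\sim\Pi^{\widetilde\theta}(\cdot\mid\mu)}[Q^{\Pi^{\widetilde\theta}}(\mu,h)]$), which rewrites the right-hand side as $\frac{\tau}{1-\gamma}\E_{\sigma_{\widetilde\theta}}[A^{\widetilde\theta}(\ell_\theta-\ell_{\widetilde\theta})]$. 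Thus $\E_{\sigma_{\widetilde\theta}}[A^{\widetilde\theta}(\ell_\theta-\ell_{\widetilde\theta})]\le 0$ for every $\theta\in\Bc$.

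To conclude, I would decompose $u_{\widetilde\theta}-\ell_{\widetilde\theta}=(u_{\widetilde\theta}-\ell_\theta)+(\ell_\theta-\ell_{\widetilde\theta})$ in the identity from the first paragraph, discard the nonpositive term from the second, and apply Cauchy--Schwarz in $L^2(\sigma_{\widetilde\theta})$:
$$(1-\gamma)\big(J(\theta^*)-J(\widetilde\theta)\big)\ \le\ \E_{\sigma_{\widetilde\theta}}\big[(u_{\widetilde\theta}-\ell_\theta)A^{\widetilde\theta}\big]\ \le\ \|u_{\widetilde\theta}-\ell_\theta\|_{L^2(\sigma_{\widetilde\theta})}\,\|A^{\widetilde\theta}\|_{L^2(\sigma_{\widetilde\theta})},$$
valid for every $\theta\in\Bc$. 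Finally I would bound $\|A^{\widetilde\theta}\|_{L^2(\sigma_{\widetilde\theta})}\le\|A^{\widetilde\theta}\|_\infty\le\frac{2r_\text{max}}{1-\gamma}$, using $\|Q^{\Pi^{\widetilde\theta}}\|_\infty,\|\widetilde V^{\Pi^{\widetilde\theta}}\|_\infty\le r_\text{max}/(1-\gamma)$, and take the infimum over $\theta\in\Bc$, which gives \eqref{eqn:stat_opt_center}.

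The hard part is the bookkeeping around the block/centered feature structure: one must carefully verify that the decentralized score $\nabla_\theta\log\Pi^{\theta}$ equals the $\mu$-conditional centering of the linear feature $\ell_\theta$, that this centering can be transferred from the feature onto the advantage without incurring any error, and that $\frac{\mathrm{d}\bar\sigma_{\theta^*}}{\mathrm{d}\bar\sigma_{\widetilde\theta}}(\mu)$ is the correct (and only admissible) $\mu$-measurable adjustment dictated by the energy-based parameterization. The remaining ingredients---the Performance Difference Lemma, the RKHS density behind Assumption~\ref{ass:F_infty_policy}, and the uniform boundedness of $Q^{\Pi^{\widetilde\theta}}$ and $\widetilde V^{\Pi^{\widetilde\theta}}$---are routine and mirror the single-agent analysis of \citet{cai2019neural}.
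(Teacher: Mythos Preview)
Your proposal is correct and follows essentially the same approach as the paper's proof: Performance Difference Lemma, change of measure plus the vanishing of $\mu$-measurable terms against the advantage, the transfer of centering from $\Phi$ to $A^{\widetilde\theta}$ to turn stationarity into $\E_{\sigma_{\widetilde\theta}}[A^{\widetilde\theta}(\ell_\theta-\ell_{\widetilde\theta})]\le 0$, and a Cauchy--Schwarz/sup-norm bound on the advantage. The only cosmetic differences are that the paper orders the steps as stationarity first and PDL second, and it records the ``transfer of centering'' as two separate identities \eqref{eqn:actor_meanfree1}--\eqref{eqn:actor_meanfree2} rather than your one-line conditional-covariance argument; your $\ell_\theta$ shorthand makes the bookkeeping a bit cleaner but the content is identical.
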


\begin{proof}{Proof of Lemma \ref{lemma:stat_opt}}
First recall that by \eqref{eqn:grad_nn}, for any $\theta\in\Bc$,
\begin{equation*}
    \nabla_{\theta} J(\widetilde{\theta})^{\top}(\theta-\widetilde{\theta}) = \sum_{s\in\Sc}\nabla_{\theta_s} J({\widetilde\theta})^{\top}(\theta_s-\widetilde{\theta}_s)=\frac{\tau}{1-\gamma}\sum_{s\in\Sc} \mathbb{E}_{\sigma_{\widetilde\theta}}\left[Q^{\Pi^{\widetilde\theta}}(\mu, h) \cdot \Phi(\widetilde\theta, s, \mu, h)^{\top}(\theta_s-\widetilde{\theta}_s)\right],
\end{equation*}
in which $\Phi(\theta, s, \mu, h):=\phi_{\theta_s}(\mu(s), h(s))-\mathbb{E}_{{h(s)' \sim \Pi_s^{\theta_s}(\cdot \mid \mu(s))}}\left[\phi_{\theta_s}\left(\mu(s), h'(s)\right)\right]$ is defined in \eqref{eqn:log_pi}.

Since $\widetilde{\theta}\in\Bc$ is a stationary point
of $J(\cdot)$, 
\begin{equation}\label{eqn:actor_key}
    \sum_{s\in\Sc} \mathbb{E}_{\sigma_{\widetilde\theta}}\left[Q^{\Pi^{\widetilde\theta}}(\mu, h) \cdot \Phi(\widetilde\theta, s, \mu, h)^{\top}(\theta_s-\widetilde{\theta}_s)\right]\leq 0, \quad \forall \theta \in \mathcal{B}.
\end{equation}
Denote $A^{\Pi^{\widetilde\theta}}(\mu,h):=Q^{\Pi^{\widetilde\theta}}(\mu, h) - V^{\Pi^{\widetilde\theta}}(\mu)$ as the advantage function under policy $\Pi^{\widetilde\theta}$. It holds from the definition that $\E_{h\sim\Pi^{\widetilde\theta}(\cdot\mid\mu)}[A^{\Pi^{\widetilde\theta}}(\mu,h)]=V^{\Pi^{\widetilde\theta}}(\mu)-V^{\Pi^{\widetilde\theta}}(\mu)=0$. Meanwhile,  $\sup_{(\mu,h)\in\Xi}\left|A^{\Pi^{\widetilde\theta}}(\mu,h)\right|\leq 2\sup_{\mu\in\Pc^N(\Sc)}\left|V^{\Pi^{\widetilde\theta}}(\mu)\right|\leq\frac{2r_\text{max}}{1-\gamma}$.

Given that $\E_{h\sim\Pi^{\widetilde\theta}(\cdot\mid\mu)}[A^{\Pi^{\widetilde\theta}}(\mu,h)]=0$ and $\E_{h\sim\Pi^{\widetilde\theta}(\cdot\mid\mu)}[\Phi(\widetilde\theta, s, \mu, h)]=0$, we have for any $s\in\Sc$,
\begin{eqnarray}\label{eqn:actor_meanfree1}
\mathbb{E}_{\sigma_{\widetilde\theta}}\left[V^{\Pi^{\widetilde\theta}}(\mu) \cdot \Phi(\widetilde\theta, s, \mu, h)\right]=0, \qquad \text{ and }
\end{eqnarray}
\begin{eqnarray}\label{eqn:actor_meanfree2}
\quad \mathbb{E}_{\sigma_{\widetilde\theta}}\left[A^{\Pi^{\widetilde\theta}}(\mu,h) \cdot \mathbb{E}_{{h(s)' \sim \Pi_s^{\widetilde\theta_s}(\cdot \mid \mu(s))}}\left[\phi_{\widetilde\theta_s}\left(\mu(s), h'(s)\right)\right]\right]=0.
\end{eqnarray}
Combining \eqref{eqn:actor_key} with \eqref{eqn:actor_meanfree1} and \eqref{eqn:actor_meanfree2},
\begin{equation}\label{eqn:actor_meanfree3}
    \sum_{s\in\Sc} \mathbb{E}_{\sigma_{\widetilde\theta}}\left[A^{\Pi^{\widetilde\theta}}(\mu, h) \cdot \phi_{\widetilde\theta_s}\left(\mu(s), h(s)\right)^{\top}(\theta_s-\widetilde{\theta}_s)\right]\leq 0, \quad \forall \theta \in \mathcal{B}.
\end{equation}

Moreover, by the Performance Difference Lemma (\citet{KL2002}),
\begin{equation}\label{eqn:perform_diff}
(1-\gamma) \cdot\left(J(\theta^*)-J(\widehat{\theta})\right)=\mathbb{E}_{\bar\sigma_{\theta^*}}\left[\left\langle A^{\Pi^{\widetilde\theta}}(\mu, \cdot), {\Pi^{\theta^*}}(\cdot \mid \mu)-\Pi^{\widetilde{\theta}}(\cdot \mid \mu)\right\rangle\right].
\end{equation}
Combining \eqref{eqn:perform_diff} with \eqref{eqn:actor_meanfree3}, it holds that for any $\theta \in \mathcal{B}$,
\begin{align}
&(1-\gamma) \cdot\left(J(\theta^*)-J(\widehat{\theta})\right)\nonumber\\
\leq&\,\mathbb{E}_{\bar\sigma_{\theta^*}}\left[\left\langle A^{\Pi^{\widetilde\theta}}(\mu, \cdot), {\Pi^{\theta^*}}(\cdot \mid \mu)-\Pi^{\widetilde{\theta}}(\cdot \mid \mu)\right\rangle\right] - \sum_{s\in\Sc} \mathbb{E}_{\sigma_{\widetilde\theta}}\left[A^{\Pi^{\widetilde\theta}}(\zeta) \cdot \phi_{\widetilde\theta_s}\left(\zeta_s\right)^{\top}(\theta_s-\widetilde{\theta}_s)\right]\nonumber\\
=&\,\mathbb{E}_{\sigma_{\widetilde\theta}}\left[A^{\Pi^{\widetilde\theta}}(\mu,h)\cdot\left(\frac{\mathrm{d}\sigma_{\theta^*}}{\mathrm{d}\sigma_{\widetilde\theta}}(\mu,h)-\frac{\mathrm{d}\bar\sigma_{\theta^*}}{\mathrm{d}\bar\sigma_{\widetilde\theta}}(\mu)-\sum_{s\in\Sc}\phi_{\widetilde{\theta}_s}(\mu(s),h(s))^\top(\theta_s-\widetilde{\theta}_s)\right)\right].
\end{align}
Therefore, 
\begin{align}
&(1-\gamma)\cdot\left(J(\theta^*)-J(\widehat{\theta})\right)\nonumber \\
\leq&\,\frac{2{r}_\text{max}}{1-\gamma}\inf_{\theta\in\Bc}\left\|\frac{\mathrm{d}\sigma_{\theta^*}}{\mathrm{d}\sigma_{\widetilde\theta}}(\mu,h)-\frac{\mathrm{d}\bar\sigma_{\theta^*}}{\mathrm{d}\bar\sigma_{\widetilde\theta}}(\mu)-\sum_{s\in\Sc}\phi_{\widetilde{\theta}_s}(\mu(s),h(s))^\top(\theta_s-\widetilde{\theta}_s)\right\|_{L^2(\sigma_{\widetilde\theta})} \nonumber\\
=&\,\frac{2{r}_\text{max}}{1-\gamma}\inf_{\theta\in\Bc} \left\| u_{\widetilde{\theta}}(\mu,h)- \sum_{s \in \Sc}\phi_{\widetilde{\theta}_s}(\mu(s),h(s))^\top\theta_s\right\|_{L^2(\sigma_{\widetilde\theta})},
\end{align}
where $ u_{\widetilde{\theta}}(\mu,h) := \frac{d\sigma_{\theta^*}}{d\sigma_{\widetilde\theta}}(\mu,h)-\frac{d\bar\sigma_{\theta^*}}{d\bar\sigma_{\widetilde\theta}}(\mu)+\sum_{s\in\Sc}\phi_{\widetilde{\theta}_s}(\mu(s),h(s))^\top\widetilde{\theta}_s$, and $\frac{\mathrm{d}\sigma_{\theta^*}}{\mathrm{d}\sigma_{\widetilde\theta}}$,$\frac{\mathrm{d}\bar\sigma_{\theta^*}}{\mathrm{d}\bar\sigma_{\widetilde\theta}}$ are the Radon-Nikodym derivatives between corresponding measures.
\end{proof}
\hfill \mbox{\bf Q.E.D.}

To further bound the right-hand-side of \eqref{eqn:stat_opt_center} in Lemma \ref{lemma:stat_opt}, define the following function class
\begin{align}\label{eqn:local_nn_policy}
    \widetilde{\mathcal{F}}_{R,M}=\bigg\{f_0(\zeta;\theta)&:= \sum_{s \in \Sc}\underbrace{\Bigg[\frac{1}{\sqrt{M}}\sum_{m=1}^{M}  \mathds{1}\left\{[\theta_{s}(0)]_m^{\top} \zeta_s>0\right\} [\theta_{s}]_m^{\top} \zeta_s\Bigg]}_{(\star)}:\notag\\
    &\theta_s \in \mathbb{R}^{M \times d_{\zeta_s}}, \|\theta_s-\theta_s(0)\|_{\infty} \leq R/\sqrt{M}\bigg\},
\end{align}
given an initialization $\theta_s(0)\in\R^{M\times d_{\zeta_s}}$, $s \in \Sc$ and $b\in\R^M$.

$\widetilde{\mathcal{F}}_{R,M}$ \eqref{eqn:local_nn_policy} is a local linearization of the actor neural network. 
More specifically, term ($\star$) in \eqref{eqn:local_nn_policy} locally linearizes the decentralized actor neural network $f(\zeta_s;\theta_s)$ \eqref{eqn:energy_policy} with respect to $\theta_s$. 
Any $f_0(\zeta;\theta)\in\widetilde{\mathcal{F}}_{R,M}$ is a sum of $|\Sc|$ inner products between feature mapping $\phi_{\theta_s(0)}(\cdot)$ \eqref{eqn:feature} and parameter $\theta_s$: $f_0(\zeta;\theta)=\sum_{s\in\Sc}\phi_{\theta_s(0)}(\zeta_s)\cdot\theta_s$. 
As the width of the neural network $M\to\infty$, $\widetilde\Fc_{R,M}$ converges to $\Fc_{R,\infty}$ (defined in Assumption \ref{ass:F_infty_policy}).
The approximation error between $\widetilde{\mathcal{F}}_{R,M}$ and $\Fc_{R,\infty}$ is bounded in the following lemma. 

\begin{Lemma}\label{lemma:F_infty_policy}
    For any function $f(\zeta)\in\Fc_{R,\infty}$ defined in Assumption \ref{ass:F_infty_policy}, we have
    \begin{equation}\label{eqn:F_infty_policy}
        \E_{{\rm{init}}}\left[\left\|f(\cdot) - \text{Proj}_{\widetilde\Fc_{R,M}}f(\cdot)\right\|_{L^2(\sigma_{\widetilde\theta})}\right]\leq\Oc\left(\frac{|\Sc|Rd^{1/2}_{\zeta_s}}{M^{1/2}}\right).
    \end{equation}
\end{Lemma}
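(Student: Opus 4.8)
The plan is to follow the standard random-feature approximation argument, adapted to the decomposable structure of $\widetilde{\mathcal{F}}_{R,M}$. First I would fix an arbitrary $f(\zeta) = \sum_{s\in\Sc}\big[\phi_{\theta_s(0)}(\zeta_s)^\top\theta_s(0) + \int \mathds{1}\{v^\top\zeta_s>0\}(\zeta_s)^\top\iota(v)\,d\mu(v)\big]\in\Fc_{R,\infty}$, and for each $s\in\Sc$ define the candidate finite-width parameter componentwise by $[\theta_s]_m := \theta_s(0)_m + \iota([\theta_s(0)]_m)/\sqrt{M}$. Since $\|\iota(v)\|_\infty\leq R$ for all $v$, this choice automatically satisfies $\|\theta_s-\theta_s(0)\|_\infty\leq R/\sqrt{M}$, so the resulting $f_0(\cdot;\theta) = \sum_{s\in\Sc}\phi_{\theta_s(0)}(\zeta_s)^\top\theta_s$ lies in $\widetilde{\mathcal{F}}_{R,M}$. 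The key point is that the $\iota([\theta_s(0)]_m)$ are i.i.d.\ bounded random vectors (as functions of the i.i.d.\ initialization $[\theta_s(0)]_m\sim N(0,I_{d_{\zeta_s}}/d_{\zeta_s})$), so each summand $\phi_{\theta_s(0)}(\zeta_s)^\top\theta_s - \phi_{\theta_s(0)}(\zeta_s)^\top\theta_s(0)$ is a Monte-Carlo average of $M$ i.i.d.\ terms whose mean is exactly the integral $\int \mathds{1}\{v^\top\zeta_s>0\}(\zeta_s)^\top\iota(v)\,d\mu(v)$.

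Next I would invoke the standard $\Oc(1/\sqrt{M})$ bound for random-feature approximation, which is precisely the content of \cite[Proposition 4.3]{cai2019neural} (or Rahimi--Recht \citet{rahimi2008uniform}). Applied to each state $s$, this gives $\E_{\rm{init}}\big[\big\|\phi_{\theta_s(0)}(\cdot)^\top\theta_s(0) + \int\mathds{1}\{v^\top(\cdot)>0\}(\cdot)^\top\iota(v)\,d\mu(v) - \phi_{\theta_s(0)}(\cdot)^\top\theta_s\big\|_{L^2(\sigma_{\widetilde\theta})}\big]\leq \Oc(Rd_{\zeta_s}^{1/2}/M^{1/2})$, using $\|\zeta_s\|_2\leq 1$ (or the paper's bound $\leq 2$, absorbed into the constant). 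Then I would sum over $s\in\Sc$ and apply the triangle inequality in $L^2(\sigma_{\widetilde\theta})$ to the difference $f(\cdot) - f_0(\cdot;\theta) = \sum_{s\in\Sc}\big[(\text{state-}s\text{ integral term}) - \phi_{\theta_s(0)}(\zeta_s)^\top\theta_s + \phi_{\theta_s(0)}(\zeta_s)^\top\theta_s(0)\big]$, obtaining $\E_{\rm{init}}[\|f(\cdot)-f_0(\cdot;\theta)\|_{L^2(\sigma_{\widetilde\theta})}]\leq |\Sc|\cdot\Oc(Rd_{\zeta_s}^{1/2}/M^{1/2})$. Finally, since $f_0(\cdot;\theta)\in\widetilde{\Fc}_{R,M}$ and $\text{Proj}_{\widetilde\Fc_{R,M}}$ is the $L^2(\sigma_{\widetilde\theta})$-nearest point in $\widetilde\Fc_{R,M}$, we have $\|f - \text{Proj}_{\widetilde\Fc_{R,M}}f\|_{L^2(\sigma_{\widetilde\theta})}\leq \|f - f_0(\cdot;\theta)\|_{L^2(\sigma_{\widetilde\theta})}$, which yields \eqref{eqn:F_infty_policy}.

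The main obstacle — though it is more bookkeeping than genuine difficulty — is handling the measure $\sigma_{\widetilde\theta}$ correctly: the random-feature concentration is over the initialization randomness, while the norm is an $L^2$ norm over $\sigma_{\widetilde\theta}$, so one must apply Fubini to exchange $\E_{\rm{init}}$ with $\E_{\zeta\sim\sigma_{\widetilde\theta}}$ and bound $\E_{\rm{init}}[(\cdot)^2]$ pointwise in $\zeta$ before taking the square root (using Jensen to pass from $\E_{\rm{init}}[\|\cdot\|^2]^{1/2}$ to $\E_{\rm{init}}[\|\cdot\|]$). Here the regularity of $\sigma_{\widetilde\theta}$ (Assumption \ref{ass:reg_nu}, or just the boundedness of $\|\zeta_s\|_2$) ensures the pointwise variance bound is uniform, so the integral over $\sigma_{\widetilde\theta}$ does not inflate the constant. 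The decomposition over $s\in\Sc$ is the only genuinely new ingredient relative to the single-agent case in \citet{cai2019neural}, and it costs exactly the advertised factor of $|\Sc|$.
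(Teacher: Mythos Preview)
Your proposal is correct and follows precisely the approach the paper sketches: the paper's proof consists solely of citing \citet{rahimi2008uniform} and \cite[Proposition~4.3]{cai2019neural} and noting that the factor $|\Sc|$ arises because $\Fc_{R,\infty}$ decomposes into $|\Sc|$ independent RKHS components. Your write-up is in fact a fuller elaboration of this sketch, supplying the explicit Monte-Carlo construction $[\theta_s]_m=[\theta_s(0)]_m+\iota([\theta_s(0)]_m)/\sqrt{M}$ and the triangle-inequality summation over $s$ that the paper leaves implicit.
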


Lemma \ref{lemma:F_infty_policy} follows from \citet{rahimi2008uniform} and \cite[Proposition 4.3]{cai2019neural}. The factor $|\Sc|$ stems from the fact that $\Fc_{R,\infty}$ can be decomposed into $|\Sc|$ independent reproducing kernel Hilbert spaces. 
With Lemma
\ref{lemma:F_infty_policy}, we are ready to establish an upper bound for the right-hand-side of \eqref{eqn:stat_opt_center} in the following proposition.

\begin{Proposition}\label{prop:opt_stat}
Under Assumption \ref{ass:F_infty_policy}, let $\widetilde{\theta}\in\Bc$ be a stationary point of $J(\cdot)$ and let $\theta^*\in\Bc$ be the global maximum point of $J(\cdot)$ in $\Bc$. Then the following inequality holds:
\begin{align}\label{eqn:stat_opt_bound}
    (1-\gamma)\left(J(\theta^*)-J(\widetilde\theta)\right)&\leq\Oc\left(\frac{|\Sc|R^{3/2}d^{3/4}_{\zeta_s}}{M^{1/4}}\right).
\end{align}
\end{Proposition}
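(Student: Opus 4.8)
The plan is to take Lemma~\ref{lemma:stat_opt} as the starting point, since it already reduces the optimality gap to a pure function-approximation problem: because $\widetilde\theta$ is a stationary point,
\[
(1-\gamma)\bigl(J(\theta^*)-J(\widetilde\theta)\bigr)\le\frac{2r_{\text{max}}}{1-\gamma}\,\inf_{\theta\in\Bc}\Bigl\|\,u_{\widetilde\theta}(\mu,h)-\sum_{s\in\Sc}\phi_{\widetilde\theta_s}(\mu(s),h(s))^{\top}\theta_s\,\Bigr\|_{L^2(\sigma_{\widetilde\theta})},
\]
and by Assumption~\ref{ass:F_infty_policy} the target $u_{\widetilde\theta}$ belongs to the infinite-width, RKHS-type class $\Fc_{R,\infty}$. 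So the whole task is to show that every element of $\Fc_{R,\infty}$ can be approximated in $L^2(\sigma_{\widetilde\theta})$, with parameters staying in $\Bc$, by a sum over states of the finite-width feature inner products $\phi_{\widetilde\theta_s}(\zeta_s)^{\top}\theta_s$, with error $\Oc\bigl(|\Sc|R^{3/2}d_{\zeta_s}^{3/4}/M^{1/4}\bigr)$ in expectation over the initialization.

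The point that makes this nontrivial — and the reason the rate degrades from the $M^{-1/2}$ of Lemma~\ref{lemma:F_infty_policy} to $M^{-1/4}$ — is that the approximation guarantee of Lemma~\ref{lemma:F_infty_policy} is stated for the linearized class $\widetilde\Fc_{R,M}$ in \eqref{eqn:local_nn_policy}, whose features $\phi_{\theta_s(0)}$ are frozen at the \emph{initialization}, whereas Lemma~\ref{lemma:stat_opt} demands the features $\phi_{\widetilde\theta_s}$ at the stationary point. I would bridge the gap with a two-term triangle inequality: let $f_0(\cdot;\vartheta)=\sum_s\phi_{\theta_s(0)}(\zeta_s)^{\top}\vartheta_s\in\widetilde\Fc_{R,M}$ be the $L^2(\sigma_{\widetilde\theta})$-projection of $u_{\widetilde\theta}$ onto $\widetilde\Fc_{R,M}$; then $\vartheta$ obeys $\|\vartheta_s-\theta_s(0)\|_\infty\le R/\sqrt M$, so $\vartheta\in\Bc$ is admissible in the infimum, and choosing $\theta=\vartheta$ gives
\[
\inf_{\theta\in\Bc}\Bigl\|u_{\widetilde\theta}-\sum_s\phi_{\widetilde\theta_s}(\zeta_s)^{\top}\theta_s\Bigr\|_{L^2(\sigma_{\widetilde\theta})}\le\bigl\|u_{\widetilde\theta}-f_0(\cdot;\vartheta)\bigr\|_{L^2(\sigma_{\widetilde\theta})}+\Bigl\|\sum_s\bigl(\phi_{\theta_s(0)}(\zeta_s)-\phi_{\widetilde\theta_s}(\zeta_s)\bigr)^{\top}\vartheta_s\Bigr\|_{L^2(\sigma_{\widetilde\theta})}.
\]
The first term is $\Oc(|\Sc|Rd_{\zeta_s}^{1/2}/M^{1/2})$ in expectation over the initialization by Lemma~\ref{lemma:F_infty_policy}, and the whole difficulty is the second, feature-perturbation term.

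For that term I would exploit overparameterization and anti-concentration. For each neuron $m$, the summands of $\phi_{\theta_s(0)}(\zeta_s)^{\top}\vartheta_s$ and $\phi_{\widetilde\theta_s}(\zeta_s)^{\top}\vartheta_s$ differ only when the ReLU activation pattern flips between $\theta_s(0)$ and $\widetilde\theta_s$; since $\|\widetilde\theta_s-\theta_s(0)\|_\infty\le R/\sqrt M$, a flip at $\zeta_s$ forces $\zeta_s^{\top}[\theta_s(0)]_m$ into an $\Oc(R\sqrt{d_{\zeta_s}}/\sqrt M)$-band around $0$, on which the offending summand is itself of size $\Oc(R\sqrt{d_{\zeta_s}}/M)$. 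Two ingredients then finish it: (i) a crude uniform (pointwise in $(\mu,h)$) bound of order $\mathrm{poly}(R)$ on the perturbation, using that linearized networks with parameters in $\Bc$ are uniformly bounded; and (ii) the anti-concentration of $\sigma_{\widetilde\theta}$ in a direction fixed by the initialization — exactly the regularity condition in Assumption~\ref{ass:reg_nu} — which bounds the expected number of flipped neurons at a fixed input, and hence the $L^1(\sigma_{\widetilde\theta})$-norm of the perturbation, by order $R^2 d_{\zeta_s}^{3/2}/\sqrt M$ in expectation over the initialization. Interpolating through $\|\cdot\|_{L^2}\le\|\cdot\|_{L^\infty}^{1/2}\|\cdot\|_{L^1}^{1/2}$, exactly the device used in the feature-shift estimates of \citet{cai2019neural}, turns this $M^{-1/2}$ into the $M^{-1/4}$ rate, and summing over the $|\Sc|$ states produces $\Oc(|\Sc|R^{3/2}d_{\zeta_s}^{3/4}/M^{1/4})$, which dominates the first term. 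Adding the two contributions and absorbing $2r_{\text{max}}/(1-\gamma)$ into $\Oc(\cdot)$ yields \eqref{eqn:stat_opt_bound}. Everything else — the triangle inequality, invoking Lemma~\ref{lemma:F_infty_policy}, and the stationarity identity from Lemma~\ref{lemma:stat_opt} — is routine; the only genuinely delicate part is quantifying, in the data-dependent norm $L^2(\sigma_{\widetilde\theta})$, how the random activation patterns move under a parameter shift of size $R/\sqrt M$.
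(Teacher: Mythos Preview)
Your proposal is correct and matches the paper's own proof essentially step for step: start from Lemma~\ref{lemma:stat_opt}, project $u_{\widetilde\theta}$ onto $\widetilde\Fc_{R,M}$ to obtain a parameter $\widehat\theta\in\Bc$, bound the projection error by Lemma~\ref{lemma:F_infty_policy}, and then bound the feature-shift term $\sum_s\|\phi_{\theta_s(0)}(\zeta_s)^\top\widehat\theta_s-\phi_{\widetilde\theta_s}(\zeta_s)^\top\widehat\theta_s\|_{L^2(\sigma_{\widetilde\theta})}$ by $\Oc(|\Sc|R^{3/2}d_{\zeta_s}^{3/4}/M^{1/4})$. The only cosmetic difference is that the paper packages your activation-flip/anti-concentration/$L^\infty$--$L^1$ interpolation argument as a black-box citation (Lemma~\ref{lemma:feature_approximation}, imported from \citet{wang2019neural}), whereas you spell out the mechanism; the decomposition and the resulting rates are identical.
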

\begin{proof}{Proof of Proposition \ref{prop:opt_stat}}
First by the triangle inequality,
\begin{align}\label{eqn:global_stationary_2}
\inf_{\theta\in\Bc} \left\| u_{\widetilde{\theta}}(\zeta) - \sum_{s\in\Sc}\phi_{\widetilde{\theta}_s}(\zeta_s)^\top\theta_s\right\|_{L^2(\sigma_{\widetilde\theta})}
&\leq\left\| u_{\widetilde{\theta}}(\zeta) - \text{Proj}_{{\widetilde\Fc}_{R,M}} u_{\widetilde{\theta}}(\zeta) \right\|_{L^2(\sigma_{\widetilde\theta})} \\
& + \inf_{\theta\in\Bc} \left\| \text{Proj}_{{\widetilde\Fc}_{R,M}} u_{\widetilde{\theta}}(\zeta)-\sum_{s\in\Sc}\phi_{\widetilde{\theta}_s}(\zeta_s)^\top\theta_s\right\|_{L^2(\sigma_{\widetilde\theta})}, \nonumber
\end{align}
where ${\widetilde\Fc}_{R,M}$ is defined in \eqref{eqn:local_nn_policy}. We denote $\text{Proj}_{{\widetilde\Fc}_{R,M}} u_{\widetilde{\theta}}(\zeta)=\sum_{s\in\Sc}\phi_{\theta_s(0)}(\zeta_s)\cdot\widehat\theta_s\in{\widetilde\Fc}_{R,M}$ for some $\widehat\theta\in\Bc$. Therefore, by Lemma  \ref{lemma:F_infty_policy},
the first term on the right-hand-side of \eqref{eqn:global_stationary_2} is bounded by \eqref{eqn:F_infty_policy}:
\begin{equation*}
    \left\| u_{\widetilde{\theta}}(\zeta) - \sum_{s\in\Sc}\phi_{\theta_s(0)}(\zeta_s)\cdot\widehat\theta_s \right\|_{L^2(\sigma_{\widetilde\theta})}\leq\Oc\left(\frac{|\Sc|Rd^{1/2}_{\zeta_s}}{M^{1/2}}\right).
\end{equation*}
The following Lemma \ref{lemma:feature_approximation} is a direct application of  \cite[Lemma E.2]{wang2019neural}, which is used to bound the second term on the right-hand-side of \eqref{eqn:global_stationary_2}.
\begin{Lemma}\label{lemma:feature_approximation}
It holds for any $\theta_s, \theta_s' \in \Bc_s=\big\{\alpha_s\in\R^{M\times d_{\zeta_s}}:\|\alpha_s-\theta_s(0)\|_\infty\leq R/\sqrt{M}\big\}$ that
\begin{eqnarray}
\E_{{\rm{init}}}\left[\|\phi_{{\theta}_s}(\zeta_s)^\top\theta_s' - \phi_{{\theta}_s(0)}(\zeta_s)^\top\theta_s' \|_{L^2(\sigma_{\theta})}\right] \leq \Oc\left(\frac{R^{3/2}d^{3/4}_{\zeta_s}}{M^{1/4}}\right),
\end{eqnarray}
where the expectation is taken over random initialization.
\end{Lemma}
Taking $\theta=\widetilde\theta$ and $\theta'=\widehat\theta$ in Lemma \ref{lemma:feature_approximation} gives us
\begin{equation*}
    \sum_{s\in\Sc} \left\| \phi_{\theta_s(0)}(\zeta_s)\cdot\widehat\theta_s-\phi_{\widetilde{\theta}_s}(\zeta_s)^\top\widehat\theta_s\right\|_{L^2(\sigma_{\widetilde\theta})}\leq\Oc\left(\frac{|\Sc|R^{3/2}d^{3/4}_{\zeta_s}}{M^{1/4}}\right)
\end{equation*}
Therefore, by Lemma \ref{lemma:global_opt}, 
\begin{equation*}
    (1-\gamma)\left(J(\theta^*)-J(\widetilde\theta)\right)\leq\inf_{\theta\in\Bc} \left\| u_{\widetilde{\theta}}(\zeta) - \sum_{s\in\Sc}\phi_{\widetilde{\theta}_s}(\zeta_s)^\top\theta_s\right\|_{L^2(\sigma_{\widetilde\theta})}\leq\Oc\left(\frac{|\Sc|R^{3/2}d^{3/4}_{\zeta_s}}{M^{1/4}}\right).
\end{equation*}
\end{proof}
\hfill \mbox{\bf Q.E.D.}

Now we are ready to establish Theorem \ref{thm:actor_conv}.
\begin{proof}{Proof of Theorem \ref{thm:actor_conv}}
Following similar calculations as in \cite[ Section H.3]{wang2019neural}, we obtain that at iteration $t\in[T_\text{actor}]$,
\begin{align}\label{eqn:trick}
\nabla_\theta J(\theta(t))^\top (\theta -\theta(t)) \leq 2(R + \frac{\eta \cdot r_{\max}}{1-\gamma}) \cdot \|\rho(t)\|_2, \;\;\; \forall \theta \in \Bc.
\end{align}
The right-hand-side of \reff{eqn:trick} quantifies the deviation of $\theta(t)$ from a stationary point $\widetilde\theta$. Having \reff{eqn:trick} and following similar arguments  for Lemma \ref{lemma:stat_opt} and Proposition \ref{prop:opt_stat}, we can show that
\begin{align}\label{eqn:combine_key}
(1-\gamma) \min_{t\in[T_{\mathrm{actor}}]}\E\left[J(\theta^*)-J(\theta(t))\right] \leq \Oc\left(\frac{|\Sc|R^{3/2}d^{3/4}_{\zeta_s}}{M^{1/4}}\right) + 2\left(R + \frac{\eta \cdot r_{\max}}{1-\gamma}\right) \cdot  \min_{t\in[T_{\mathrm{actor}}]}\E[\|\rho(t)\|_2].
\end{align}
Here the last term $\min_{t\in[T_{\mathrm{actor}}]}\E[\|\rho(t)\|_2]$ is bounded by \eqref{eqn:actor_stat} in Theorem \ref{thm:actor_stationary}, while the term $\epsilon_Q(T_{\mathrm{actor}})$ in \eqref{eqn:actor_stat_Q} can be upper bounded by Theorem \ref{thm:critic_conv}. Finally with the parameters stated in Theorem \ref{thm:actor_conv},
the following statement holds by straightforward calculation:
\begin{equation*}
    \min_{t\in[T_{\mathrm{actor}}]}\E\left[J(\theta^*)-J(\theta(t))\right]\leq\Oc\left(|\Sc|^{1/2}B^{-1/2} + |\Sc||\Ac|^{1/4}\left(\gamma^{k/8}+(T_{\mathrm{actor}})^{-1/4}\right)\right).
\end{equation*}
\end{proof}
\hfill \mbox{\bf Q.E.D.}

\end{appendix}

\end{document}